\def\BState{\State\hskip-\ALG@thistlm}
\newtheorem{theorem}{Theorem}
\newtheorem{Proposition}[theorem]{Proposition}
\newtheorem{lemma}[theorem]{Lemma}
\newtheorem{remark}[theorem]{Remark}
\newtheorem{cor}[theorem]{Corollary}
\newtheorem{proposition}[theorem]{Proposition}
\newtheorem{fact}[theorem]{Fact}
\def\BState{\State\hskip-\ALG@thistlm}
\newcommand*{\addFileDependency}[1]{% argument=file name and extension
  \typeout{(#1)}
  \@addtofilelist{#1}
  \IfFileExists{#1}{}{\typeout{No file #1.}}
}
\newcommand{\bE}{\mathbb{E}}
\newcommand{\norm}[1]{\left\lVert#1\right\rVert}
\newcommand{\tr}[1]{\text{trace}(#1)}
\newenvironment{sproof}{%
  \proof}{\endproof}
\newcommand{\distas}[1]{\mathbin{\overset{#1}{\kern\z@\sim}}}%
\newcommand{\beqs}{\vspace{0mm}\begin{eqnarray}}
\newcommand{\eeqs}{\vspace{0mm}\end{eqnarray}}
\newcommand{\barr}{\begin{array}}
\newcommand{\earr}{\end{array}}
\newcommand{\alphav}{\boldsymbol{\alpha}}
\newcommand{\supp}{Appendix\xspace}
\newcommand{\rd}{\color{red}}
\newcommand{\bk}{\color{black}}
\title{A Unified Framework for Tuning Hyperparameters in Clustering Problems}
\author[1]{\small Xinjie Fan}
\author[1]{\small Yuguang Yue}
\author[1]{\small Purnamrita Sarkar}
\author[2]{\small Y. X. Rachel Wang}
\affil[1]{\footnotesize Department of Statistics and Data Science, University of Texas at Austin}
\affil[2]{\footnotesize School of Mathematics and Statistics, University of Sydney}
\affil[ ]{\textit {xfan@utexas.edu, yuguang@utexas.edu, purna.sarkar@austin.utexas.edu, rachel.wang@sydney.edu.au}}
\begin{document}

\maketitle

\begin{abstract}

Selecting hyperparameters for unsupervised learning problems is challenging in general due to the lack of ground truth for validation. Despite the prevalence of this issue in statistics and machine learning, especially in clustering problems, there are not many methods for tuning these hyperparameters with theoretical guarantees. In this paper, we provide a framework with provable guarantees for selecting hyperparameters in a number of distinct models. We consider both the subgaussian mixture model and network models to serve as examples of i.i.d. and non-i.i.d. data. We demonstrate that the same framework can be used to choose the Lagrange multipliers of penalty terms in semidefinite programming (SDP) relaxations for community detection, and the bandwidth parameter for constructing kernel similarity matrices for spectral clustering. By incorporating a cross-validation procedure, we show the framework can also do consistent model selection for network models. Using a variety of simulated and real data examples, we show that our framework outperforms other widely used tuning procedures in a broad range of parameter settings. 

\end{abstract}

\section{Introduction}
A standard statistical model has parameters, which characterize the underlying data distribution;  an inference algorithm to learn these parameters typically involve hyperparameters  (or tuning parameters). 
Popular examples include the penalty parameter in regularized regression models, the number of clusters in clustering analysis, the bandwidth parameter in kernel based clustering, nonparameteric density estimation or regression methods (\cite{wasserman2006all,tibshirani2015statistical}), to name but a few. It is well-known that selecting these hyperparameters may require repeated training to search through different combinations of plausible hyperparameter values and often has to rely on good heuristics and domain knowledge from the user. %\textbf{As shown in \cite{thornton2013auto}, different hyperparameters in some of the most widely used algorithms can lead to significant changes in model accuracy. }

%\rd  %An increasing amount of effort has been devoted to automating the selection of hyperparameters. 
A classical method to do automated hyperparameter tuning is the nonparametric procedure Cross Validation (CV) (\cite{stone1974cross, zhang1993model}) which has been used extensively in machine learning and statistics (\cite{hastie2005elements}).% for estimating the prediction error of solutions produced with different model complexity or hyperparameters. 
 CV has been studied extensively in supervised learning settings, particularly in low dimensional linear models (\cite{shao1993linear, yang2007consistency}) and penalized regression in high dimension (\cite{wasserman2009high}). Other notable stability based methods for model selection in similar supervised settings include \cite{breiman1996heuristics, bach2008bolasso, meinshausen2010stability, lim2016estimation}. Finally, a large number of empirical methods exist in the machine learning literature for tuning hyperparameters in various training algorithms (\cite{bergstra2012random, bengio2000gradient, snoek2012practical, bergstra2011algorithms}), most of which do not provide theoretical guarantees. 
%have been proposed and analyzed, including (\cite{breiman1996heuristics, bach2008bolasso, meinshausen2010stability, lim2016estimation}). Finally, a large number of empirical methods exist in the machine learning literature for tuning hyperparameters in various training algorithms (\cite{bergstra2012random, bengio2000gradient, snoek2012practical, bergstra2011algorithms}), most of which do not provide theoretical guarantees. 
%\bk

%A large number of empirical methods exist in the machine learning literature, including random search (\cite{bergstra2012random}), gradient-based optimization (\cite{bengio2000gradient}), Bayesian optimization (\cite{snoek2012practical}), and evolutionary algorithms (\cite{bergstra2011algorithms}). Detailed reviews of these methods can be found in e.g. \cite{luo2016review} and \cite{shahriari2015taking}. However, methods with theoretical guarantees are largely limited to cross-validation approaches in supervised learning. In particular, this has been studied extensively in low dimensional linear models (\cite{shao1993linear, yang2007consistency}) and penalized regression in high dimension (\cite{wasserman2009high}). In supervised learning where the data is labeled, prediction accuracy is often used as the evaluation criterion in cross-validation. However, when the ground truth is not available in unsupervised learning, choosing a criterion for evaluation and in general a method for selecting hyperparameters with theoretical guarantees remain mostly unexplored. 

In contrast to the supervised setting with i.i.d. data used in many of the above methods, in this paper, we consider \textit{unsupervised} clustering problems with possible dependence structure in the datapoints. We propose an overarching framework for hyperparameter tuning and model selection for a variety of probabilistic clustering models. Here the challenge is two-fold. Since labels are not available, choosing a criterion for evaluation and in general a method for selecting hyperparameters is not easy. One may consider splitting the data in different folds and selecting the model or hyperparameter with the most stable solution. However, for multiple splits of the data, the inference algorithm may get stuck at the same local optima, and thus stability alone can lead to a suboptimal solution (\cite{von2010clustering}). In~\cite{wang2010consistent, fang2012selection}, the authors overcome this by redefining the number of clusters as one that gives the most stable clustering for a given algorithm. In~\cite{meila2018tell}, a semi-definite program (SDP) maximizing an inner product criterion is performed for each clustering solution, and the value of the objective function is used to evaluate the stability of the clustering. The analysis is done without any model assumptions. The second difficulty arises if there is dependence structure in the datapoints, which necessitates careful splitting procedures in a CV-based procedure.

To illustrate the generality of our framework, we focus on subgaussian mixtures and the statistical network models like the Stochastic Blockmodel (SBM) and the Mixed Membership Stochastic Blockmodel (MMSB)  as two representative models for i.i.d. data and non i.i.d. data, where clustering is a natural problem. We propose a unified framework with provable guarantees to do  hyperparameter tuning and model selection in these models. More specifically, our contributions can be summarized as below:

\noindent 1. Our framework can provably tune the following \textbf{hyperparameters}:
    \begin{enumerate}[(a)]
        \item Lagrange multiplier of the penalty term in a type of semidefinite relaxation for community detection problems in SBM;
        \item Bandwidth parameter used in kernel spectral clustering for subgaussian mixture models.
    \end{enumerate} 
\noindent 2. We have consistent \textbf{model selection}, i.e. determining number of clusters:
    \begin{enumerate}[(a)]
        \item When the model selection problem is embedded in the choice of the Lagrange multiplier in another type of SDP relaxation for community detection in SBM;
        
        \item General model selection for the Mixed Membership Stochastic Blockmodel (MMSB), which includes the SBM as a sub-model.  
    \end{enumerate}

We choose to focus on model selection for network-structured data, because  there already is an extensive repertoire of empirical and provable methods including the gap statistic~\citep{tibs2001gap}, silhouette index~\citep{ROUSSEEUW198753}, the slope criterion~\citep{Birge2001}, eigen-gap~\cite{von2007tutorial}, penalized maximum likelihood~\citep{leroux1992}, information theoretic approaches (AIC~\citep{Bozdogan1987ModelSA}, BIC~\citep{keribin2000,drtonjrssb}, minimum message length~\citep{figueiredo2002mml}), spectral clustering and diffusion based methods~\citep{Maggioni2018LearningBU,little2017spec}  for i.i.d mixture models. We discuss the related work on the other models in the following subsection.
%We propose two provable algorithms for both hyperparameter tuning and model selection in these models. As concrete examples of learning algorithms, we consider the popular semidefinite relaxation (SDP) methods for SBM and spectral clustering for subgaussian mixtures.
\subsection{Related Work}
\label{subsec:lit_review}
\textbf{Hyperparameters and model selection in network models:} In network analysis, %the clustering problem in SBM and its variants is also known as community detection. 
while a number of methods exist for selecting the true number of communities (denoted by $r$) with consistency guarantees including \cite{lei2016goodness,wang2017,le2015estimating,bickel2016hypothesis} for SBM, and \cite{fan2019simple} and \cite{han2019universal} for more general models such as the degree-corrected mixed membership blockmodel,
these methods have not been generalized to other hyperparameter selection problems. For CV-based methods, existing strategies involve node splitting (\cite{chen2018network}), or edge splitting (\cite{li2016network}). In the former, it is established that CV prevents underfitting for model selection in SBM. In the latter, a similar one-sided consistency result for Random Dot Product Models (RDPG) (\cite{young2007random}, which includes SBM as a special case) is shown. This method has also been empirically applied to tune other hyperparameters, though no provable guarantee was provided. 
%In this paper, we focus on the problem of tuning hyperparameters for clustering algorithms. There are many clustering algorithms involving hyperparameters. Spectral clustering \cite{ng2002spectral},  requires a choice of kernel function to construct the pairwise similarity matrix between datapoints. For the widely used Gaussian kernel, a bandwidth parameter has to be defined. %Mean shift is a nonparametric mode seeking method for locating the maxima of a density function~\cite{cheng1995mean,comaniciu2002mean}, which also involves a bandwidth or window parameter, which is crucial.  %DBSCAN~\cite{ester1996density}, a density based spatial clustering algorithm, requires two tuning parameters, one corresponding to the radius of neighborhoods defined around every point and the minimum number of points required to form a dense region. %For community detection in networks, Semidefinite relaxation algorithms have gained much popularity~\cite{Guedon2016,amini2018semidefinite,li2018convex}. These typically require appropriate tuning of  lagrange multipliers of penalty terms,which are again hyperparameters. 

%While there are some tuning methods in the literature~\cite{cai2015robust,yan2017provable,amini2018semidefinite}, they are typically not provable, and specially designed for each separate SDP. Regularized spectral clustering on sparse graphs involve hyperparameters that control regularization\cite{amini2013pseudo, chaudhuri2012spectral}. 

In terms of algorithms for community detection or clustering, SDP methods have gained a lot of attention
%with Stochastic Block Models (SBM) and other related models. There are different kinds of SDP relaxations motivated from maximum likelihood estimation \cite{amini2018semidefinite} and so on 
 (\cite{abbe2015exact,amini2018semidefinite,Guedon2016, cai2015robust, hajek2016achieving}) due to their strong theoretical guarantees. Typically, SDP based methods can be divided into two broad categories. The first one maximizes a penalized trace of the product of the adjacency matrix and an unnormalized clustering matrix (see definition in Section~\ref{subsec:setup}). Here the hyperparameter is the Lagrange multiplier of the penalty term~\cite{amini2018semidefinite,cai2015robust,chen2018network,Guedon2016}. In this formulation, the optimization problem does not need to know the number of clusters. However, it is implicitly required in the final step which obtains the memberships from the clustering matrix. 
 
 The other class of SDP methods uses a trace criterion with a normalized clustering matrix (definition in Section~\ref{subsec:setup})~\citep{Peng:2007,Yan2019CovariateRC,mixon2017sdp}. Here the constraints directly use the number of clusters.~\citep{yan2017provable} use a penalized alternative of this SDP to do provable model selection for SBMs. However, most of these methods require appropriate tuning of the Lagrange multipliers, which are themselves hyperparameters. Usually the theoretical upper and lower bounds on these hyperparameters involve unknown model parameters, which are nontrivial to estimate. The proposed method in~\cite{abbe2015recovering} is agnostic of model parameters, but it involves a highly-tuned and hard to implement spectral clustering step (also noted by \cite{perry2017semidefinite}).
 
 In this paper, we  use a SDP from the first class (SDP-1) to demonstrate our provable tuning procedure, and another SDP from the second class (SDP-2) to establish consistency guarantee for our model selection method.

\textbf{Spectral clustering with mixture model:} 
In statistical machine learning literature, analysis of spectral clustering typically is done in terms of the Laplacian matrix built from an appropriately constructed similarity matrix of the datapoints. There has been much work~\citep{hein2005,hein2006uniform,vonLuxburg2007,belkin2003laplacian,gine2006empirical} on establishing different forms of asymptotic convergence of  the Laplacian. Recently~\cite{lffler2019optimality} have established error bounds for spectral clustering that uses the gram matrix as the similarity matrix. In~\cite{srivastava2019robust} error bounds are obtained for a variant of spectral clustering for the Gaussian kernel in presence of outliers. Most of the existing tuning procedures for the bandwidth parameter of the Gaussian kernel are heuristic and do not have provable guarantees. Notable methods include~\cite{vonLuxburg2007}, who choose an analogous parameter, namely the radius $\epsilon$ in an $\epsilon$-neighborhood graph
 ``as the length of the
longest edge in a minimal spanning tree of the fully connected
graph on the data points.'' Other discussions on selecting the bandwidth can be found in~\citep{hein2005,coifman2008random} and~\citep{schiebinger2015}. \cite{shi2008data} propose a data dependent way to set the bandwidth parameter by suitably normalizing the $95\%$ \text{quantile of } a vector containing $5\%$ quantiles of distances from each point.  %In~\cite{einbeck2011bandwidth}, a set of ``good'' bandwidth parameters are selected for mean shift based methods by examining local density within neighborhoods of different radii around the points. 

%lagrange multiplier in SDP \cite{cai2015robust, amini2018semidefinite} have been given heuristic tuning procedures. More general algorithms have also been proposed. \cite{chen2018network} provided a general way to find number of clusters for network data, and proved the probability of underestimation vanishes as number of nodes increases. An edge-splitting based algorithm capable of tuning more than number of clusters was proposed by \cite{li2016network}.

%\textbf{Our Contribution}
We now present our problem setup in Section~\ref{sec:notation}. Section~\ref{sec:knownk} proposes and analyzes our hyperparameter tuning method MATR for networks and subgaussian mixtures. Next, in Section~\ref{sec:uknownk}, we present MATR-CV and the related consistency guarantees for model selection for SBM and MMSB models. Finally, Section~\ref{sec:exp} contains detailed simulated and real data experiments and we conclude with paper with a discussion in Section~\ref{sec:discussion}.
\bk

 \section{Preliminaries and Notations}
 \vspace{-.5em}
 \label{sec:notation}
 \subsection{Notations}
 \vspace{-.5em}
Let $(C_1, ..., C_{r})$ denote a partition of $n$ data points into $r$ clusters; $m_i = |C_i|$ denote the size of $C_i$. Denote $\pi_\text{min} = \min_i m_i/n$. The cluster membership of each node is represented by a $n\times r$ matrix $Z$, with $Z_{ij} = 1$ if data point $i$ belongs to cluster $j$, and $0$ otherwise. Since $r$ is the true number of clusters, $Z^TZ$ is full rank. Given $Z$, the corresponding unnormalized clustering matrix is $ZZ^T$, and the normalized clustering matrix is $Z (Z^TZ)^{-1}Z^T$. $X$ can be either a normalized or unnormalized clustering matrix, and will be made clear. We use $\tilde{X}$ to denote the matrix returned by SDP algorithms, which may not be a clustering matrix. Denote $\mathcal{X}_{r}$ as the set of all possible normalized clustering matrices with cluster number $r$. Let $Z_0$ and $X_0$ be the membership and normalized clustering matrix from the ground truth. $\lambda$ is a general hyperparameter; although with a slight abuse of notation, we also use $\lambda$ to denote the Lagrange multiplier in SDP methods. 
For any matrix $X\in \mathbb{R}^{n\times n}$, let $X_{C_k,C_{\ell}}$ be  a matrix such that $X_{C_k,C_{\ell}}(i,j)=X(i,j)$ if $i\in C_k,j\in C_\ell$, and $0$ otherwise. $E_n$ is the $n\times n$ all ones matrix. We write $\langle A, B\rangle = \text{trace}(A^TB).$ Standard notations of $o, O,o_P, O_P, \Theta, \Omega$ will be used. By ``with high probability'', we mean with probability tending to one. 
%\begin{equation*}
%X_{C_k,C_{\ell}}(i,j) = \left\{
%\begin{array}{ll}
%X(i,j) & \text{when $i\in C_k,j\in C_\ell$}\\
%0 & \text{otherwise}\\
%\end{array}
%\right.
%\end{equation*}
 
 %$$\hat{X} = \hat{Z} (\hat{Z}^T\hat{Z})^{-1}\hat{Z}^T.$$
 
 %{\bf The unnormalized clustering matrix: } for any estimated membership matrix $Z$(assume full rank), we refer $ZZ^T$ the corresponding unnormalized clustering matrix.

 %{\bf Goal: }suppose we have a community detection algorithm $\mathscr{A}$ which takes $A$ and a hyperparameter $\lambda$ as input, and outputs a clustering membership matrix $\hat{Z} = \mathscr{A}(A, \lambda)$. Our goal is to find a $\lambda$ such that $\hat{Z}$ is close to $Z$, or $\hat{X} = \hat{Z} (\hat{Z}^T\hat{Z})^{-1}\hat{Z}^T$ is close to $X_0=Z(Z^TZ)^{-1}Z^T.$
 
 %Specifically, we use subscript $0$ to indicate the true setting. For example, we use $X_0 = Z_0(Z_0^TZ_0)^{-1}Z_0^T$ to represent the true normalized clustering matrix where $Z_0$ is the true membership matrix. On the other hand, we use notation \textit{hat} to indicate the estimated term. For instance, we use $\hat{X} = \hat{Z} (\hat{Z}^T\hat{Z})^{-1}\hat{Z}^T$ to denote an estimated normalized clustering matrix where $\hat{Z}$ is the estimated membership matrix, and $\hat{r}$ is the estimated number of clusters.  Moreover, we further denote matrix recovery from SDP by $\tilde{X}$ to distinguish it.
 \subsection{Problem setup and motivation}
 \label{subsec:setup}
 We consider a general clustering setting where the data $\mathcal{D}$ gives rise to a $n\times n$ observed similarity matrix $\hat{S}$, where $\hat{S}$ is symmetric. Denote $\mathscr{A}$ as a clustering algorithm which operates on the data $\mathcal{D}$ with a hyperparameter $\lambda$ and outputs a clustering result in the form of $\hat{Z}$ or $\hat{X}$. Here note that $\mathscr{A}$ may or may not perform clustering on $\hat{S}$, and $\mathscr{A}$, $\hat{Z}$ and $\hat{X}$ could all depend on $\lambda$.   In this paper we assume that $\hat{S}$ has the form $\hat{S}=S+R$,
%\begin{align}\label{eq:noise}
%    \hat{S}=S+R,
%\end{align}
 where $R$ is a matrix of arbitrary noise, and $S$ is the ``population similarity matrix''. As we consider different clustering models for network-structured data and iid mixture data, it will be made clear what $\hat{S}$ and $S$ are in each context.
 
 \textbf{Assortativity (weak and strong): }In some cases, we require weak assortativity on the similarity matrix $S$ defined as follows. Suppose for $i,j\in C_k$, $S_{ij}=a_{kk}$.
 %structure as $X_0$, i.e. $S = \sum_{k,\ell} a_{k,\ell}E_{C_k, C_\ell}.$ 
%Depending on the application, $S$ may have all zeros or all ones on the diagonal. 
Define the minimal difference between diagonal term and off-diagonal terms in the same row cluster as 
\begin{align}\label{eq:pgap}
p_{\text{gap}} = \min_k \left(a_{kk} - \max_{\substack{i\in C_k,j\in C_\ell\\\ell \neq k}} S_{ij}\right).
\end{align}
Weak assortativity requires $p_{\text{gap}}> 0$. This condition is similar to weak assortativity defined for blockmodels (e.g.~\cite{amini2018semidefinite}). It is  mild compared to strong assortativity requiring $\min_k a_{kk} - \max_{\substack{i\in C_k,j\in C_\ell\\\ell \neq k}} S_{ij}> 0$.

 \textbf{Stochastic Blockmodel (SBM):} The SBM is a generative model of networks with community structure on $n$ nodes. By first partitioning the nodes into $r$ classes which leads to a membership matrix $Z$, the $n\times n$ binary adjacency matrix $A$ is sampled from probability matrix $P=Z_{i}BZ_j^T 1(i\neq j)$.
 %\begin{equation}
 %P_{ij} =\left\{
 %\begin{array}{ll}
 %Z_{i}BZ_j^T & \text{for $i\neq j$}\\
 %0 & \text{otherwise}\\
 %\end{array}
 %\right.
 %\label{eq:sbm_defn}
 %\end{equation}
%  \begin{equation*}
%  P(A_{ij}=1|Z) =\left\{
%  \begin{array}{ll}
%  Z_{i}^TBZ_j & \text{when $i\neq j$}\\
%  0 & \text{otherwise}\\
%  \end{array}
%  \right.
%  \end{equation*}
 where $Z_i$ and $Z_j$ are the $i^{th}$ and $j^{th}$ row of matrix $Z$, $B$ is the $r\times r$ block probability matrix. The aim is to estimate node memberships given $A$. %In this case, $\hat{S}$ is defined as $A$ and $S$ is defined as $P$, and algorithm $\mathscr{A}$ operates on $A$. 
 We assume the elements of $B$ have order $\Theta(\rho)$ with $\rho\to 0$ at some rate.% \rd add assortativity definition and link $A$ to $\hat{S}$\bk

\textbf{Mixed Membership Stochastic Blockmodel (MMSB):}\label{subsubsec:mmsb} The SBM can be restrictive when it comes to modeling real world networks. As a result, various extensions have been proposed. 
The mixed membership stochastic blockmodel (MMSB,~\citep{airoldi2008mmsb}) relaxes the requirement on the membership vector $Z_i$ being binary and allows the entries to be in $[0,1]^{r}$, such that they sum up to 1 for all $i$. We will denote this soft membership matrix by $\Theta$. %Similar to SBM, MMSB does not allow for degree heterogeneity. This can be achieved by adding degree correction in the same fashion as~\cite{karrer2011dcbm} and introducing degree parameter $\gamma_i$ to every node $i$. For identifiability, $\sum_{i=1}^{n}\gamma_i=n$.

Under the MMSB model, the $n\times n$ adjacency matrix $A$ is sampled from the probability matrix $P$ with $ P_{ij} =\Theta_{i}B\Theta_j^T 1(i\neq j)$.
%\begin{equation*}
 %P_{ij} =\left\{
 %\begin{array}{ll}
 %\Theta_{i}B\Theta_j^T & \text{when $i\neq j$}\\
 %0 & \text{otherwise}\\
 %\end{array}
 %\right.
% \end{equation*}
 We use an analogous definition for normalized clustering matrix: $X=\Theta(\Theta^T\Theta)^{-1}\Theta$. Note that this reduces to the usual normalized clustering matrix when $\Theta$ is a binary cluster membership matrix.
 \iffalse
 \textbf{Mixtures of sub-Gaussian:} Let $Y=[Y_1, \dots, Y_n]^T$ be a $n\times d$ data matrix. We consider a setting in \cite{el2010information}, where $Y_i$ are generated from a mixture model with $r$ clusters,
 \begin{equation}\label{eq:mog}
     Y_i = \mu_a + \frac{W_i}{\sqrt{d}}, \quad \bE(W_i)=0, \quad Cov(W_i)=\sigma_a^2 I, \qquad a=1, \dots, r,
 \end{equation}
  $W_i$'s are independent sub-Gaussian vectors, and this model can be thought of as low dimensional signal embedded in high dimensional noise. Here we take $\hat{S}$ as the negative pairwise distances and \textcolor{blue}{$\mathscr{A}$ can operate a different matrix.} The exact forms of $\hat{S}$ and $S$ will be made clear in Section~\ref{subsec:sc_subgaussian}.
\fi

\textbf{Mixture of sub-gaussian random variables:} Let $Y=[Y_1, \dots, Y_n]^T$ be a $n\times d$ data matrix. We consider a setting where $Y_i$ are generated from a mixture model with $r$ clusters,
 \begin{equation}\label{eq:mog}
     Y_i = \mu_a + W_i, \quad \bE(W_i)=0, \quad Cov(W_i)=\sigma_a^2 I, \qquad a=1, \dots, r,
 \end{equation}
  where $W_i$'s are independent sub-gaussian vectors. % and this model can be thought of as low dimensional signal embedded in high dimensional noise. %Here we take $\hat{S}$ as the negative pairwise distances and $\mathscr{A}$ can operate a different similarity matrix. The exact forms of $\hat{S}$ and $S$ will be made clear in Section~\ref{subsec:sc_subgaussian}.

%\rd Introduce trace criterion. Likelihood, who else have used it. Meila, intuition, put points which are close together. assortative. generalize in section   \bk

%\rd Explain picture and motivate algorithm\bk

%This generic algorithm is based on node sampling. The idea is to split the graph into training nodes and testing nodes, for a given tuning parameter we perform SDP on the training graph and get a clustering on the training graph. Using the connection between training nodes and testing nodes, we can get a clustering on the testing graph. We can evaluate how good the clustering result is by looking at the inner product between the test graph and normalized clustering matrix on the test graph, and thereby we choose a parameter which gives the best clustering on the testing graph. 

\textbf{Trace criterion:}
Our framework is centered around the trace $\langle \hat{S}, X_\lambda\rangle$, where $X_\lambda$ is the normalized clustering matrix associated with hyperparameter $\lambda$. This criterion is often used in relaxations of the k-means objective~\citep{mixon2017sdp,Peng:2007,yan2017provable} in the context of SDP methods. The idea is that the criterion is large when datapoints within the same cluster are more similar. This criterion is also used by~\cite{meila2018tell} for evaluating stability of a clustering solution, where the author uses SDP to maximize this criterion for each clustering solution. Of course, this makes the implicit assumption that $\hat{S}$  (and $S$) is assortative, i.e. datapoints within the same cluster have high similarity based on $\hat{S}$. While this is reasonable for iid mixture models, not all community structures in network models are assortative if we use the adjacency matrix $A$ as  $\hat{S}$. If all the communities in a network are dis-assortative, then one can just use $-A$ as $\hat{S}$. However, for the SBM or MMSB models, one may have a mixture of assortative and dis-assortative structure.
In what follows, we begin our discussion of hyperparameter tuning and model selection for SBM by assuming weak assortativity, both for ease of demonstration and the fact that our algorithms of interest, SDP methods, operate on weakly assortative networks. For MMSB, which includes SBM as a sub-model, we show the same criterion still works without assortativity if we choose $\hat{S}$ to be $A^2$ with the diagonal removed.

\section{Hyperparameter tuning with known $r$}
\label{sec:knownk}
%We now consider a more general setting. 
In this section, we consider tuning hyperparameters when the true number of clusters $r$ is known. First, we provide two simulation studies to motivate this section. The detailed parameter settings for generating the data can be found in the \supp Section~\ref{sec:expdetail}.

As mentioned in Section~\ref{subsec:lit_review}, SDP is an important class of methods for community detection in SBM, but its performance can depend on the choice of the Lagrange multiplier parameter. We first consider a SDP formulation \citep{li2018convex}, which has been widely used with slight variations in the literature \citep{amini2018semidefinite,perry2017semidefinite,Guedon2016,cai2015robust,chen2018network}, \bk
 \begin{equation}
 \tag{SDP-1}
 \begin{split}
 \max \quad &\text{trace}(AX) - \lambda \text{trace}(X E_n)\\
 \text{s.t.} \quad & X\succeq 0 , X\geq 0, X_{ii} = {1} \text{ for } 1\leq i \leq n,\\
 \end{split}
 \label{SDP:YD}
 \end{equation}
where $\lambda$ is a hyperparameter. Typically, one then performs spectral clustering (that is, $k$-means on the top $r$ eigenvectors) on the output of the SDP to get the clustering result. In Figure~\ref{fig:yd_motivation} (a), we generate an adjacency matrix from the probability matrix described in \supp Section~\ref{sec:expdetail} and use \ref{SDP:YD} with tuning parameter $\lambda$ from 0 to 1. The accuracy of the clustering result is measured by the normalized mutual information (NMI) and shown in Figure~\ref{fig:yd_motivation} (a). We can see that different $\lambda$ values lead to widely varying clustering performance.

 As a second example, we consider a four-component Gaussian mixture model generated as described in \supp Section~\ref{sec:expdetail}. We perform spectral clustering ($k$-means on the top $r$ eigenvectors) on the widely used Gaussian kernel matrix (denoted $K$) with bandwidth parameter $\theta$. Figure~\ref{fig:yd_motivation}(b) shows the clustering performance using NMI as $\theta$ varies, and the flat region of suboptimal $\theta$ corresponds to cases when the two adjacent clusters cannot be separated well. 

 \begin{figure}[htp!]
% \begin{subfigure}{.5\textwidth}
% 		\centering
% 		\includegraphics[width=1\linewidth]{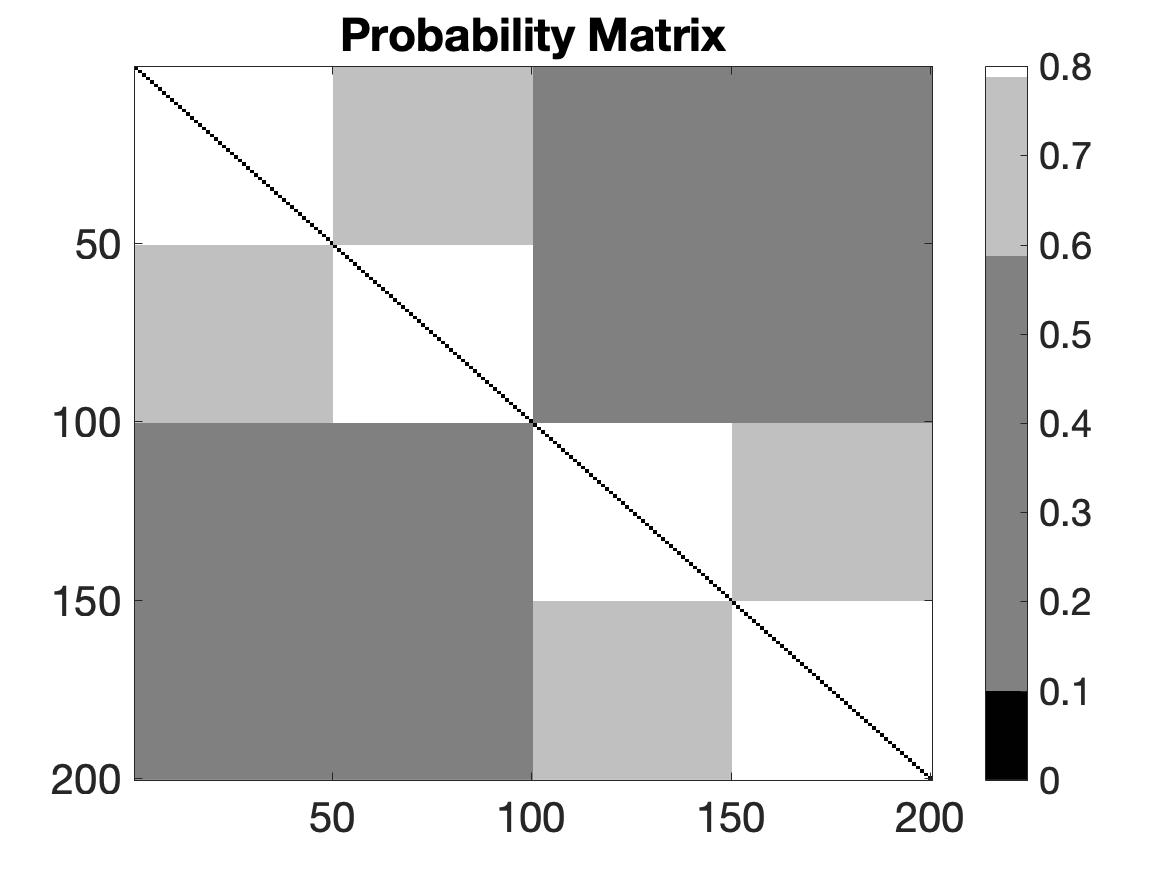} 
% 		\label{sfig_motivation3}\vspace{-5mm}
% 		\caption{SBM}
% 	\end{subfigure}
 	\begin{subfigure}{.5\textwidth}
 		\centering
 		\includegraphics[width=0.7\linewidth]{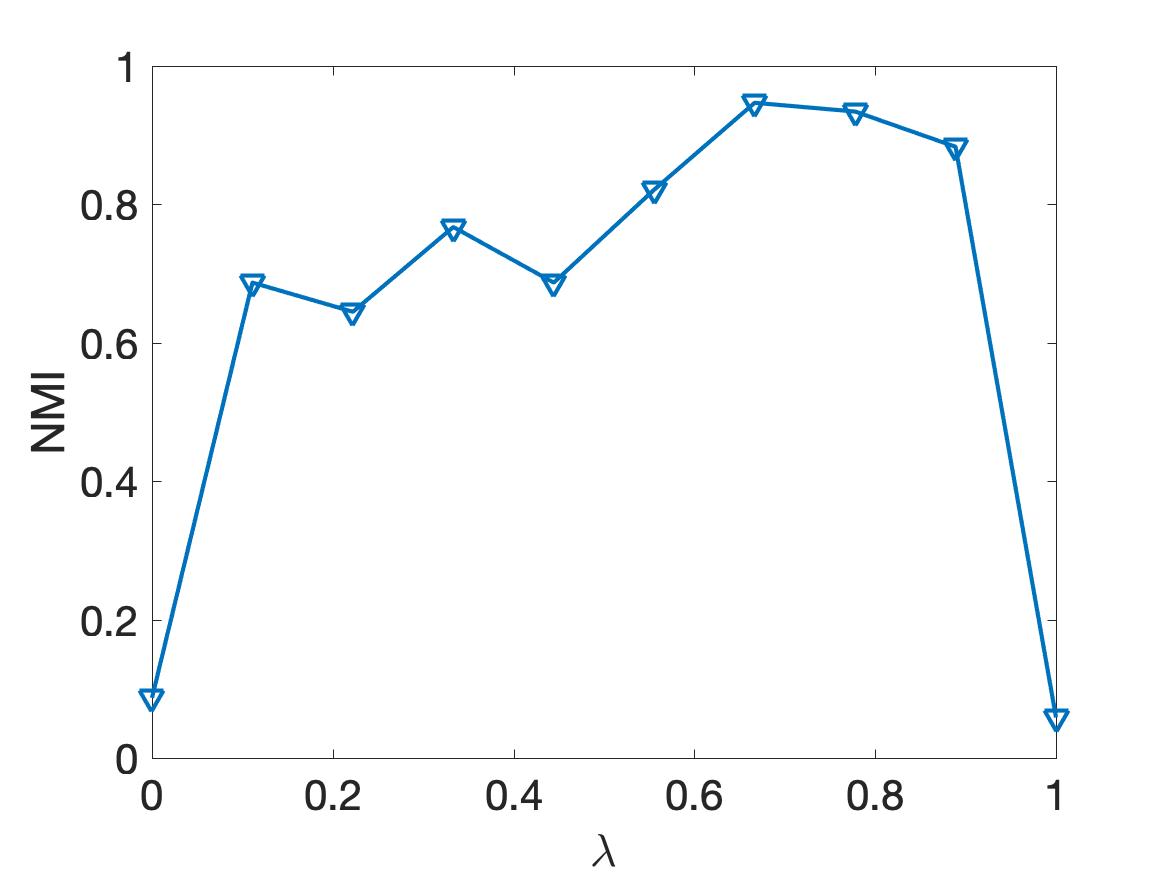}
 		\label{sfig_motivation4}%\vspace{-5mm}
 		\caption{NMI v.s. $\lambda$}
 	\end{subfigure}
% 	\begin{subfigure}{.5\textwidth}
% 		\centering
% 		\includegraphics[width=1\linewidth]{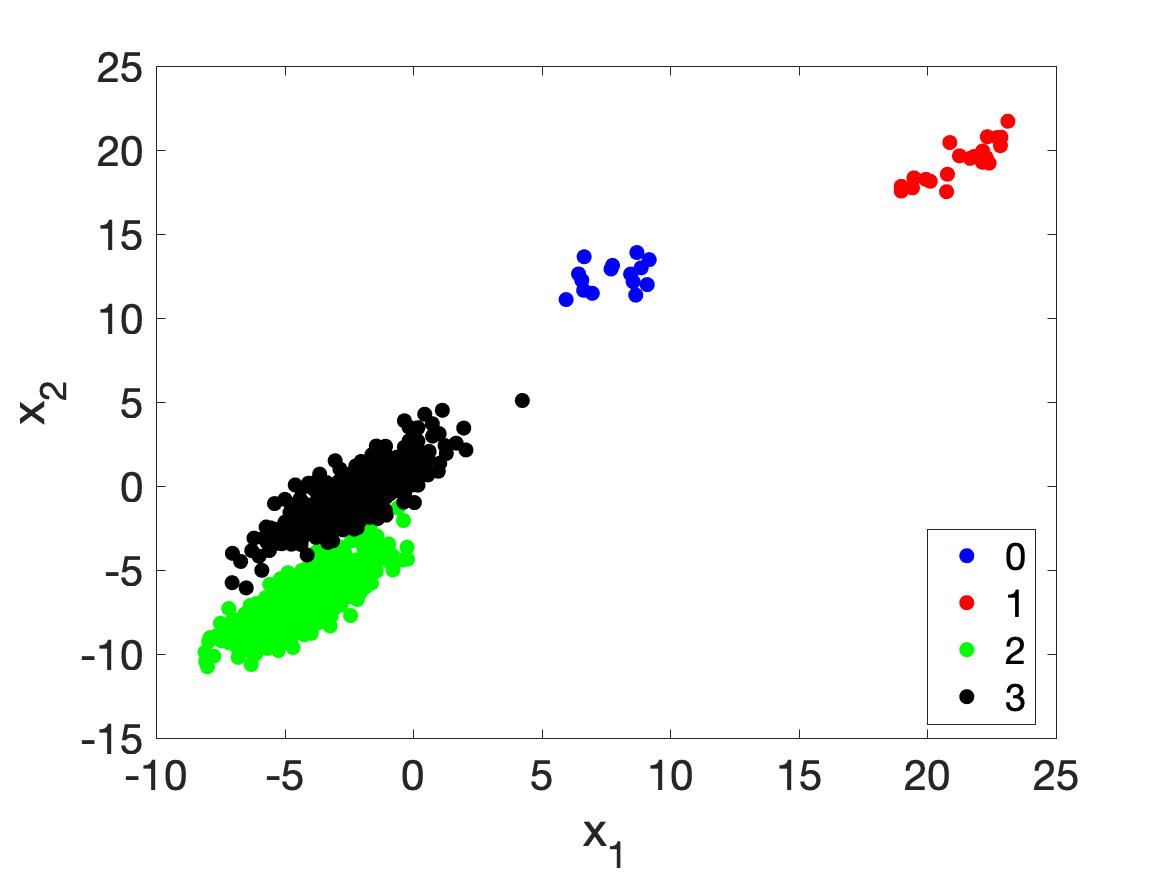}
% 		\label{fig:sfig_motivation1}\vspace{-5mm}
% 		\caption{Gaussian mixture}
% 	\end{subfigure}%
 	\begin{subfigure}{.5\textwidth}
 		\centering
 		\includegraphics[width=0.7\linewidth]{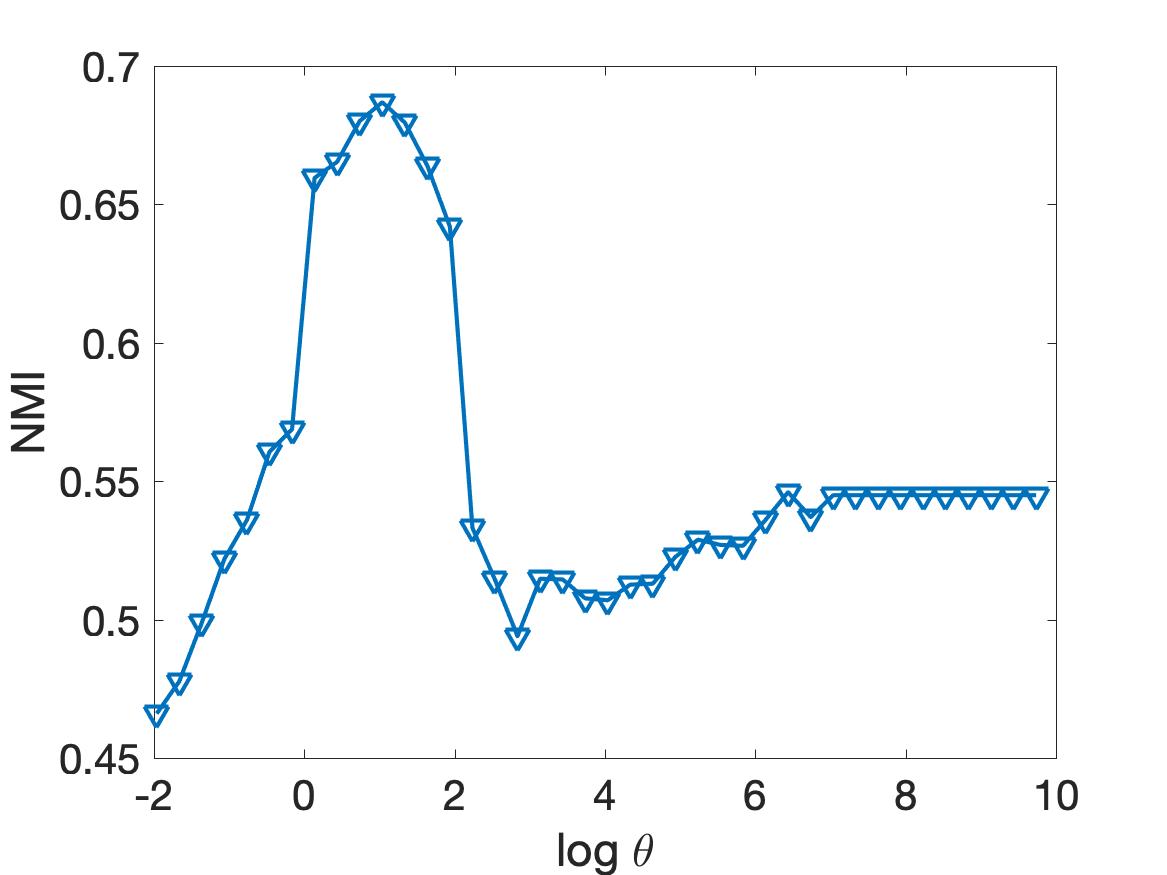} 
 		\label{sfig_motivation2}%\vspace{-5mm}
 		\caption{NMI v.s. $\theta$}
 	\end{subfigure}
 	\caption{Tuning parameters in SDP and Spectral clustering; accuracy measured by normalized mutual information (NMI).}
 	\label{fig:yd_motivation}%\vspace{-5mm}
 \end{figure}

We show that in the case where the true cluster number $r$ is known, an ideal hyperparameter $\lambda$ can be chosen by simply maximizing the trace criterion introduced in Section~\ref{subsec:setup}. The tuning algorithm (MATR) is presented in Algorithm~\ref{alg:kk}. It takes a general clustering algorithm $\mathscr{A}$, data $\mathcal{D}$ and similarity matrix $\hat{S}$ as inputs, and outputs a clustering result $\hat{Z}_{\lambda^*}$ with $\lambda^*$ chosen by maximizing the trace criterion. 

\begin{algorithm}\vspace{-1mm}
\textbf{Input:} clustering algorithm $\mathscr{A}$, data $\mathcal{D}$, similarity matrix $\hat{S}$, a set of candidates $\{\lambda_1, \cdots, \lambda_T\}$, number of clusters $r$\;
\textbf{Procedure:}

\For{$t = 1:T$}{run clustering on $\mathcal{D}$: $\hat{Z}_t = \mathscr{A}(\mathcal{D}, \lambda_t, r)$\;
compute normalized clustering matrix: $\hat{X}_t = \hat{Z}_t(\hat{Z}^T_t\hat{Z}_t)^{-1}\hat{Z}^T_t$\;
  compute inner product: $l_t = \langle \hat{S}, \hat{X}_t \rangle$\;
  }
$t^* = \text{argmax}(l_1, ..., l_T)$\;
\caption{MAx-TRace (MATR) based tuning algorithm for known number of clusters.}
\textbf{Output:} $\hat{Z}_{t^*}$
\label{alg:kk}
\end{algorithm}

We have the following theoretical guarantee for Algorithm~\ref{alg:kk}.
%\rd [Purna: I like this version better, but please see if you do too.]
%\begin{theorem}
%	\label{thm:general_setup}
%	Consider $S,\hat{S},R$ as in Eq~\eqref{eq:noise} and let the true normalized clustering matrix be $X_0$. Define $p_{\text{gap}}$ as in Eq~\eqref{eq:pgap}.
%	Furthermore, let $\pi_{\min}$ be the smallest cluster proportion, and $\tau := n \pi_\text{min} p_{\text{gap}}$. %If the following holds, 
	%$$\epsilon = o_P(\tau),
	%\sup_{X\in \mathcal{X}_r} |\langle X, R\rangle | = o_P(\tau),$$
%	For a normalized clustering  matrix $X$ with $r_0$ clusters, if 
%	$$\langle X, \hat{S}\rangle \geq \langle X_{0}, S\rangle - \epsilon,$$
%	then we have:
%	$$\norm{X - X_0}_F^2 \leq \frac{2}{\tau}(\epsilon+\sup_{X\in \mathcal{X}_r} |\langle X, R\rangle |).$$
	
%\end{theorem}
%\bk
\begin{theorem}
	\label{thm:general_setup}
	Consider a clustering algorithm $\mathscr{A}$ with inputs $\mathcal{D}, \lambda, r$ and output $\hat{Z}_{\lambda}$.
	The similarity matrix $\hat{S}$ used for Algorithm \ref{alg:kk}(MATR) can be written as $\hat{S}=S+R$. We further assume $S$ is weakly assortative with $p_{\text{gap}}$ defined in Eq~\eqref{eq:pgap}, and $X_0$ is the normalized clustering matrix for the true binary membership matrix $Z_0$. Let $\pi_{\min}$ be the smallest cluster proportion, and $\tau := n \pi_\text{min} p_{\text{gap}}$.
	%The similarity matrix is $\hat{S}=S+ R$, where $S$ is a weakly assortative population similarity matrix, and $R$ is an arbitrary noise matrix.  Denote $\tau := n \pi_\text{min} p_{\text{gap}}$ (Eq~\ref{eq:pgap}). If the following holds, 
	%$$\epsilon = o_P(\tau),
    % \sup_{X\in \mathcal{X}_{r}} |\langle X, R\rangle | = o_P(\tau),$$
    As long as there exists $\lambda_0\in \{\lambda_1, \dots, \lambda_T\}$, such that $\langle \hat{X}_{\lambda_0}, \hat{S}\rangle \geq \langle X_{0}, S\rangle - \epsilon$, Algorithm \ref{alg:kk} will output a $\hat{Z}_{\lambda^*}$, such that $$\norm{\hat{X}_{\lambda^*} - X_0}_F^2 \leq \frac{2}{\tau}(\epsilon+\sup_{X\in \mathcal{X}_{r}} |\langle X, R\rangle |),$$ where $\hat{X}_{\lambda^*}$ is the normalized clustering matrix associated with $\hat{Z}_{\lambda^*}.$
	
% 	Consider a similarity matrix $\hat{S}=S + R$, where $R$ is an arbitrary noise matrix. Assume trace$(\hat{X})\leq $ trace$(X_0)$ and $\langle \hat{S}, \hat{X} \rangle \geq \langle S, X_0 \rangle -\epsilon$  for a given $\epsilon$. Define $T_k=\langle X_0, \hat{X}_{C_k, C_k}\rangle$. The corresponding weighting value $\alpha_{k\ell}$ equals $\langle E_{C_k, C_\ell}, \hat{X}\rangle/(m_k(1-T_k))$ when $T_k<1$, and $0$ otherwise. Then the following is true:
% 	\begin{equation}
% 	\norm{\hat{X} - X_0}_F^2\leq  \frac{2(\epsilon+|\langle R, \hat{X} \rangle|)}{\min_k {m_k (a_{kk} - \sum_{\ell\neq k} \alpha_{kl}a_{k\ell})}} 
% 	\end{equation}
	%For any normalized clustering matrix $\hat{X}$ with trace less than or equal to $r$, we can show that $\langle \hat{S}, \hat{X} \rangle \geq \langle S, X_0 \rangle -\epsilon$ implies $\norm{\hat{X} - X_0}_F^2\leq  \frac{2(\epsilon+\langle R, \hat{X} \rangle) }{\max_k {m_k (a_{kk} - \sum_{l\neq k} \alpha_{kl}a_{kl})}}$, where $a_{kl}$ is the constant of $S$ for the $k,l$'th block, and $\alpha_{kl}$'s are weighting values $\alpha_{kl} = \frac{\langle E_{S_k, S_l}, \hat{X}\rangle}{m_k(1-T_k)},$ and $T_k=\langle X_0, \hat{X}_{S_k, S_k}\rangle$.
	
\end{theorem}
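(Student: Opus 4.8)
The plan is to combine the fact that $\lambda^*$ maximizes the trace criterion with a purely deterministic ``sharpness'' estimate: under weak assortativity, any normalized $r$-clustering matrix whose trace against $S$ is close to the optimal value $\langle X_0,S\rangle$ must be close to $X_0$ in Frobenius norm. Set $\delta:=\epsilon+\sup_{X\in\mathcal{X}_{r}}|\langle X,R\rangle|$. First, since Algorithm~\ref{alg:kk} picks $t^*=\argmax_t l_t$ and $\lambda_0$ is among the candidates, $\langle\hat S,\hat X_{\lambda^*}\rangle\ge\langle\hat S,\hat X_{\lambda_0}\rangle\ge\langle X_0,S\rangle-\epsilon$. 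Each $\hat X_{\lambda_t}$ is the normalized clustering matrix of an $r$-cluster membership matrix, so $\hat X_{\lambda^*}\in\mathcal{X}_{r}$ and $|\langle R,\hat X_{\lambda^*}\rangle|\le\sup_{X\in\mathcal{X}_{r}}|\langle X,R\rangle|$; substituting $\hat S=S+R$ then gives $\langle S,X_0\rangle-\langle S,\hat X_{\lambda^*}\rangle\le\delta$.

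The core is the deterministic inequality $\langle S,X_0-X\rangle\ge\tfrac{\tau}{2}\norm{X-X_0}_F^2$ for every $X\in\mathcal{X}_{r}$, with $\tau=n\pi_{\min}p_{\text{gap}}$. To prove it I expand both traces over the common refinement of the two partitions: let $C_k$ be the true clusters (sizes $m_k$) and $C'_j$ the clusters of $X$ (sizes $m'_j$), and set $n_{jk}=|C'_j\cap C_k|$. Since $S_{ii'}=a_{kk}$ whenever $i,i'$ lie in the same true cluster $C_k$, one gets $\langle S,X_0\rangle=\sum_k m_k a_{kk}$; and since the definition of $p_{\text{gap}}$ together with the symmetry of $S$ forces $S_{ii'}\le\tfrac12(a_{kk}+a_{\ell\ell})-p_{\text{gap}}$ for $i\in C_k,\ i'\in C_\ell$ with $\ell\ne k$, a short computation (split $\sum_{i,i'\in C'_j}S_{ii'}$ into same-true-cluster and cross terms, collapse the cross terms by symmetrizing in $k,\ell$, then divide by $m'_j$ and sum over $j$) yields
\[
\langle S,X\rangle\ \le\ \langle S,X_0\rangle-p_{\text{gap}}\Big(n-\sum_j\frac{\sum_k n_{jk}^2}{m'_j}\Big).
\]
On the Frobenius side, $X$ and $X_0$ are rank-$r$ orthogonal projections, so $\norm{X-X_0}_F^2=2\big(r-\langle X,X_0\rangle\big)$ with $\langle X,X_0\rangle=\sum_{j,k} n_{jk}^2/(m_k m'_j)$, and using $\sum_j n_{jk}=m_k\ge n\pi_{\min}$ for every $k$,
\[
r-\langle X,X_0\rangle=\sum_k\frac{1}{m_k}\Big(m_k-\sum_j\frac{n_{jk}^2}{m'_j}\Big)\le\frac{1}{n\pi_{\min}}\Big(n-\sum_j\frac{\sum_k n_{jk}^2}{m'_j}\Big).
\]
Chaining the two displays gives $\langle S,X_0-X\rangle\ge p_{\text{gap}}\cdot n\pi_{\min}\big(r-\langle X,X_0\rangle\big)=\tfrac{\tau}{2}\norm{X-X_0}_F^2$. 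Applying this with $X=\hat X_{\lambda^*}$ and inserting $\langle S,X_0\rangle-\langle S,\hat X_{\lambda^*}\rangle\le\delta$ from the first paragraph yields $\norm{\hat X_{\lambda^*}-X_0}_F^2\le\tfrac{2}{\tau}\delta$, which is the assertion.

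Everything except the sharpness inequality is routine bookkeeping, so the main obstacle is that deterministic bound. Within it, the two delicate points are: (i) bounding the off-diagonal block entries of $S$ by $\tfrac12(a_{kk}+a_{\ell\ell})-p_{\text{gap}}$ and then symmetrizing so that the $a_{kk}$-parts of the cross terms telescope against $\langle S,X_0\rangle$, leaving a gain equal to $p_{\text{gap}}$ times the ``mismatch functional'' $n-\sum_j(\sum_k n_{jk}^2)/m'_j$; and (ii) converting that functional into $r-\langle X,X_0\rangle$, where the cluster-size lower bound $m_k\ge n\pi_{\min}$ is exactly what supplies the factor appearing in $\tau$ (and one checks this step is essentially tight, so the rate cannot be sharpened by this argument). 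It is also worth recording the mild regularity assumption that $\mathscr{A}$ returns $r$ nonempty clusters, so that every $\hat X_{\lambda_t}$ is well defined and lies in $\mathcal{X}_{r}$.
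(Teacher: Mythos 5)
Your proof is correct and reaches exactly the stated bound; the skeleton is the same as the paper's (the argmax step reduces everything to the deterministic sharpness inequality $\langle S, X_0-X\rangle \ge \tfrac{\tau}{2}\norm{X-X_0}_F^2$ for $X\in\mathcal{X}_{r}$), but the bookkeeping inside that inequality is genuinely different. The paper never introduces the confusion matrix: it sets $\omega_k=\langle X_0,\hat{X}_{C_k,C_k}\rangle$, writes the off-block mass of $\hat{X}$ in each row-cluster as convex weights $\alpha_{ij}$ summing to one (using that rows of a normalized clustering matrix sum to $1$), and gets $\langle S,X_0\rangle-\langle S,\hat{X}\rangle \ge \min_k m_k\big(a_{kk}-\max_{i\in C_k, j\notin C_k} S_{ij}\big)\sum_k(1-\omega_k) \ge \tfrac{\tau}{2}\norm{\hat{X}-X_0}_F^2$, since $\sum_k(1-\omega_k)=r-\langle X_0,\hat{X}\rangle$ exactly. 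You instead expand over the common refinement via $n_{jk}$, extract the mismatch functional $n-\sum_j(\sum_k n_{jk}^2)/m'_j$, and convert it to $r-\langle X,X_0\rangle$ with $m_k\ge n\pi_{\min}$; the factor $n\pi_{\min}$ thus enters at a different point (the paper takes it from $\min_k m_k$ in the gain term, you from the conversion step), and both land on the same constant. Two minor observations: (i) the paper's route uses only that $\hat{X}$ is an entrywise-nonnegative projection with unit row sums and trace $r$, whereas your expansion over the clusters $C'_j$ uses the partition structure explicitly — both suffice here since MATR always outputs a genuine clustering matrix, and your explicit remark that $\mathscr{A}$ must return $r$ nonempty clusters is an assumption the paper also uses implicitly via $\text{trace}(\hat{X})=\text{trace}(X_0)$; (ii) your symmetrized bound $S_{ii'}\le\tfrac12(a_{kk}+a_{\ell\ell})-p_{\text{gap}}$ requires $S$ symmetric, which holds in all the paper's applications but is not needed — summing the one-sided bound $S_{ii'}\le a_{kk}-p_{\text{gap}}$ over ordered pairs of true clusters yields the identical display and matches the paper's row-cluster-only definition of weak assortativity.
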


In other words, as long as the range of $\lambda$ we consider covers some optimal $\lambda$ value that leads to a sufficiently large trace criterion (compared with the true underlying $X_0$ and the population similarity matrix $S$), the theorem guarantees Algorithm~\ref{alg:kk} will lead to a normalized clustering matrix with small error. The deviation $\epsilon$ depends both on the noise matrix $R$ and how close the estimated $\hat{X}_{\lambda_0}$ is to the ground truth $X_0$, i.e. the performance of the algorithm. If both $\epsilon$ and $\sup_{X\in \mathcal{X}_r} |\langle X, R\rangle |$ are $o_P(\tau)$, then MATR will yield a clustering matrix which is weakly consistent. The proof is in the \supp Section~\ref{sec:proofformatr}. 

In the following subsections, we apply MATR to more specific settings, namely to select the Lagrange multiplier parameter in \ref{SDP:YD} for SBM and the bandwidth parameter in spectral clustering for sub-gaussian mixtures.

%\subsection{Hyperparameter tuning for SDP relaxations of community detection}
\subsection{Hyperparameter tuning for SBM}

We consider the problem of choosing $\lambda$ in~\ref{SDP:YD} for community detection in SBM. Here, the input to Algorithm \ref{alg:kk} -- the data $\mathcal{D}$ and similarity matrix $\hat{S}$ -- are both the adjacency matrix $A$. A natural choice of a weakly assortative $S$ is the conditional expectation of $A$, i.e. $P$ up to diagonal entries: let $\tilde{P}_{ij}=P_{ij}$ for $i\neq j$ and $\tilde{P}_{ii}=B_{kk}$ for $i\in C_k$. Note that $\tilde{P}$ is blockwise constant, and assortativity condition on $\tilde{P}$ translates naturally to the usual assortativity condition on $B$. As the output matrix $\tilde{X}$ from \ref{SDP:YD} may not necessarily be a clustering matrix, we use spectral clustering on $\tilde{X}$ to get the membership matrix $\hat{Z}$ required in Algorithm~\ref{alg:kk}. \ref{SDP:YD} together with spectral clustering is used as $\mathscr{A}$.

In Proposition~\ref{prop:sdp1recover} of the \supp, we show that \ref{SDP:YD} is strongly consistent, when applied to a general strongly assortative SBM with known $r$, as long as $\lambda$ satisfies: 
\begin{align}
   % \max_{k\neq \ell} B_{k\ell}+ O(\log n/n + \sqrt{\max_{k\neq \ell} B_{k\ell}\log n /n}) \leq \lambda \leq \min_k B_{kk} - O(\sqrt{\min_k B_{kk} \log n /n}).
    \max_{k\neq l}B_{k,l} + \Omega(\sqrt{\rho \log n /n\pi_\text{min}}) \leq \lambda \leq \min_k B_{kk}+ O( \sqrt{\rho \log n/n\pi_{\max}^2})
    \label{eq:lambda_range_sdp1}
\end{align}

An empirical way of choosing $\lambda$ was provided in \cite{cai2015robust}, which we will compare with in Section \ref{sec:exp}.
 %This SDP compared with SDP proposed in \cite{amini2018semidefinite} allows unequal cluster sizes but involves a tuning parameter $\lambda$. 
 %Also to obtain a clustering from the unnormalized clustering matrix $X$, we need the true number of clusters $r$. In the experiments of \cite{cai2015robust}, true number of clusters $r$ was assumed to be known and $\lambda$  was set as the mean connectivity density in a subgraph determined by nodes with ``moderate'' degrees (see detail in experiments). The purpose of choosing nodes with moderate degrees was claimed to diminish the influence of the outliers. The authors showed promising experimental results with this choice of $\lambda$ even with outliers presented. However, no proof is available for the choice of $\lambda$. 
%. They showed that if $q'+ O(\log n/n + \sqrt{q'\log n /n}) \leq \lambda \leq q - O(\sqrt{q \log n /n})$, \ref{SDP:YD} would give the exact recovery with high probability.
%, where $p$ is the within-cluster connecting probability, and $q$ is the across-cluster connecting probability.
We first show a result complementary to Eq~\ref{eq:lambda_range_sdp1} under a SBM model with weakly assortative $B$, that for a specific region of $\lambda$, the normalized clustering matrix from \ref{SDP:YD} will merge two clusters with high probability. This highlights the importance of selecting an appropriate $\lambda$ since different values can lead to drastically different clustering result. The detailed statement and proof can be found in Proposition \ref{prop:sdp1merge} of the \supp Section~\ref{sec:merge1}.  
%In the following, we present a result on conditions for $\lambda$ with which \ref{SDP:YD} would return $\tilde{X}$ merging several clusters into one. 
%\rr{connect sentences? double check }

When we use Algorithm \ref{alg:kk} to tune $\lambda$ for $\mathscr{A}$\phantomsection \label{A_sdp1}, we have the following theoretical guarantee.

\begin{cor}\label{thm:fixk}
    Consider $A\sim SBM(B,Z_0)$ with weakly assortative $B$  and $r$ number of communities. Denote $\tau := n \pi_\text{min} \min_k (B_{kk}-\max_{\ell \neq k} B_{k\ell})$. If we have $\epsilon = o_P(\tau),\,
    r\sqrt{n\rho} = o(\tau), \, n\rho \geq c\log n$, %the following conditions hold, 
	%$$\epsilon = o_P(\tau),\,r\sqrt{n\rho} = o(\tau), \, n\rho \geq c\log n$$
    for some constant $c>0$, then as long as there exists $\lambda_0\in \{\lambda_1, \dots, \lambda_T\}$, such that $\langle \hat{X}_{\lambda_0}, A\rangle \geq \langle X_{0}, P\rangle - \epsilon$ , with $\mathscr{A}$ Algorithm \ref{alg:kk}(MATR) will output a $\hat{Z}_{\lambda^*}$, such that $\| \hat{X}_{\lambda^*} - X_0\|_F^2 = o_P(1),$ where $\hat{X}_{\lambda^*}$, $X_0$ are the normalized clustering matrices for $\hat{Z}_{\lambda^*}$, $Z_0$ respectively. 
%	 Suppose there exists a $\lambda_0$ in the candidates $\lambda_1, ..., \lambda_T$, giving the true recovery, then Algorithm \ref{alg:kk} would output a $\hat{Z}$ for which the normalized clustering matrix $\hat{X}$ satisfies $||\hat{X} - {X_0}||_F^2\leq O(\text{trace}(\hat{X})/\pi_{\text{min}}\sqrt{n\rho})$ with high probability.
\end{cor}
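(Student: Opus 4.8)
The plan is to obtain Corollary~\ref{thm:fixk} as a direct instantiation of Theorem~\ref{thm:general_setup} applied to the SBM, so that the only genuine work is controlling the two quantities $\epsilon$ and $\sup_{X\in\mathcal{X}_r}|\langle X,R\rangle|$ that appear in the theorem's error bound.

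First I would set up the identifications needed to invoke Theorem~\ref{thm:general_setup}. As indicated in the text, both the data $\mathcal{D}$ and the similarity matrix $\hat S$ fed to MATR are the adjacency matrix $A$, and the natural population similarity matrix is $S=\tilde P$, where $\tilde P_{ij}=P_{ij}$ for $i\neq j$ and $\tilde P_{ii}=B_{kk}$ for $i\in C_k$; correspondingly $R=A-\tilde P$. Since $\tilde P$ is blockwise constant with diagonal value $a_{kk}=B_{kk}$ and within-row-cluster off-diagonal value $B_{k\ell}$, weak assortativity of $B$ gives $p_{\text{gap}}=\min_k(B_{kk}-\max_{\ell\neq k}B_{k\ell})>0$, so $\tilde P$ is weakly assortative and the quantity $\tau=n\pi_{\min}p_{\text{gap}}$ of the theorem equals the $\tau$ of the corollary.

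Next I would reconcile the hypotheses. The corollary assumes some candidate $\lambda_0$ satisfies $\langle\hat X_{\lambda_0},A\rangle\geq\langle X_0,P\rangle-\epsilon$, whereas Theorem~\ref{thm:general_setup} wants a lower bound in terms of $\langle X_0,S\rangle=\langle X_0,\tilde P\rangle$. Because $X_0=Z_0(Z_0^TZ_0)^{-1}Z_0^T$ has diagonal entries $(X_0)_{ii}=1/m_{k(i)}$, one checks $\langle X_0,\tilde P\rangle=\langle X_0,P\rangle+\sum_k B_{kk}$ with $\sum_k B_{kk}=O(r\rho)$; thus the corollary's assumption is exactly the theorem's assumption with deviation $\epsilon':=\epsilon+\sum_k B_{kk}$. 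Theorem~\ref{thm:general_setup} then gives
\[
\|\hat X_{\lambda^*}-X_0\|_F^2\;\leq\;\frac{2}{\tau}\Big(\epsilon+\textstyle\sum_k B_{kk}+\sup_{X\in\mathcal{X}_r}|\langle X,R\rangle|\Big).
\]

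Finally I would bound the noise term and conclude. Every $X\in\mathcal{X}_r$ is an orthogonal projection of rank $r$, hence $\|X\|_*=r$, and trace-norm/operator-norm duality yields $|\langle X,R\rangle|\leq\|X\|_*\|R\|_{\mathrm{op}}=r\,\|A-\tilde P\|_{\mathrm{op}}$. Splitting $A-\tilde P=(A-P)+(P-\tilde P)$, the diagonal perturbation contributes $\|P-\tilde P\|_{\mathrm{op}}=O(\rho)$, and the assumption $n\rho\geq c\log n$ lets me invoke the standard spectral concentration bound $\|A-P\|_{\mathrm{op}}=O_P(\sqrt{n\rho})$ for SBM adjacency matrices, so that $\sup_{X\in\mathcal{X}_r}|\langle X,R\rangle|=O_P(r\sqrt{n\rho})$. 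Substituting, and noting $\sum_k B_{kk}=O(r\rho)=o(r\sqrt{n\rho})$, the bound becomes $\|\hat X_{\lambda^*}-X_0\|_F^2\leq\frac{2}{\tau}(\epsilon+O_P(r\sqrt{n\rho}))$, which is $o_P(1)$ since $\epsilon=o_P(\tau)$ and $r\sqrt{n\rho}=o(\tau)$ by hypothesis. The only ingredient beyond bookkeeping and the duality inequality is the concentration $\|A-P\|_{\mathrm{op}}=O_P(\sqrt{n\rho})$ — precisely the reason for the sparsity floor $n\rho\geq c\log n$ — and I anticipate no further obstacle.
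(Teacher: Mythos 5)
Your proposal is correct and follows essentially the same route as the paper's proof: both instantiate Theorem~\ref{thm:general_setup} with $S=\tilde P$ and $R=(A-P)+(P-\tilde P)$, absorb the $O(r\rho)$ diagonal discrepancy between $\langle X_0,P\rangle$ and $\langle X_0,\tilde P\rangle$ into the deviation term, and bound $\sup_{X\in\mathcal{X}_r}|\langle X,R\rangle|$ by $O_P(r\sqrt{n\rho})$ via nuclear-norm/operator-norm duality together with the spectral concentration $\|A-P\|_{\mathrm{op}}=O_P(\sqrt{n\rho})$ under $n\rho\geq c\log n$. Your write-up just makes the bookkeeping (the exact value $\sum_k B_{kk}$ and the rank-$r$ projection fact) more explicit than the paper does.
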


\begin{remark}
\begin{enumerate}
    \item Since $\lambda\in[0,1]$, to ensure the range of $\lambda$ considered overlaps with the optimal range in~\eqref{eq:lambda_range_sdp1}, it suffices to consider $\lambda$ choices from $[0,1]$. Then for  $\lambda$ satisfying Eq~\ref{eq:lambda_range_sdp1}, \ref{SDP:YD} produces $\tilde{X}=X_0$ w.h.p. if $B$ is strongly assortative. Since $\langle X_0, R \rangle = O_P(r\sqrt{n\rho})$, we can take $\epsilon = O(r\sqrt{n\rho})$, and the conditions in this corollary imply $\frac{r}{\sqrt{n\rho} \pi_{\text{min}}} \to 0$. Suppose all the communities are of comparable sizes, i.e. $\pi_{\text{min}}=\Theta(1/r)$, then the conditions only require $r=O(\sqrt{n})$ since $n\rho\to\infty$. 

    \item
    Since the proofs of Theorem~\ref{thm:general_setup} and Corollary~\ref{thm:fixk} are general, the conclusion is not limited to \ref{SDP:YD} and applies to more general community detection algorithms for SBM when $r$ is known. It is easy to see that a sufficient condition for the consistency of $\hat{X}_{\lambda^*}$ to hold is that there exists $\lambda_0$ in the range considered, such that $|\langle\hat{X}_{\lambda_0}-X_0, P\rangle|=o_P(\tau)$.
    
    \item
    We note that the specific application of Corollary~\ref{thm:fixk} to \ref{SDP:YD} leads to weak consistency of $\hat{X}_{\lambda^*}$ instead of strong consistency as originally proved for \ref{SDP:YD}. This is partly due to the generality of theorem (including the relaxation of strong assortativity on $B$ to weak assortativity) as discussed above, and the fact that we are estimating $\lambda$.  

\end{enumerate}
\end{remark}

%\subsection{Hyperparameter tuning for Spectral Clustering}
\subsection{Hyperparameter tuning for mixtures of subgaussians}
\label{subsec:sc_subgaussian}
 %Consider Spectral Clustering on the kernel matrix as described in Section \ref{subsec:setup}. Here the bandwidth parameter $\theta$ is a hyperparameter. $K_{ij}=f(\|Y_i-Y_j\|^2_2)$ with $f$ being the Gaussian kernel, $f(\|x-y\|_2^2) = \exp\left(-\frac{\|x-y\|_2^2}{2\theta^2}\right)$, by running k-means on the leading $r$ eigenvectors of $K$. Here the bandwidth parameter $\theta$ is a hyperparameter.
 In this case, the data $\mathcal{D}$ is $Y$ defined in Eq~\eqref{eq:mog}, the clustering algorithm $\mathscr{A}$ is spectral clustering (see motivating example in Section \ref{sec:knownk})
 on the Gaussian kernel  $K(i,j)=\exp\left(-\frac{\|Y_i-Y_j\|_2^2}{2\theta^2}\right)$. Note that one could use the similarity matrix as the kernel itself. However, this makes the trace criterion a function of the hyperparameter we are trying to tune, which compounds the difficulty of the problem. For simplicity, we use the negative squared distance matrix as $\hat{S}$, i.e. $\hat{S}_{ij}=-\|Y_i-Y_j\|_2^2$. The natural choice for $S$ would be the conditional expectation of $\hat{S}$ given the cluster memberships, which is blockwise constant, as in the case for SBM's. However, in this case, the convergence behavior is different from that of  blockmodels. In addition, this choice leads to a suboptimal error rate. Therefore we use a slightly corrected variant of the matrix as $S$ (also see~\citep{mixon2017sdp}), called the reference matrix:
 \begin{align}\label{eq:referenceSubg}
 S_{ij}=-\frac{d_{ab}^2}{2}-\max\left\{0,\frac{d_{ab}^2}{2}+2(W_i-W_j)^T(\mu_a-\mu_b)\right\} 1(i\in C_a,j\in C_b),
 \end{align}
 where $d_{ab}:=\|\mu_a-\mu_b\|$, $W_i$ is defined in Eq~\ref{eq:mog}. Note that for $i,j$ in the same cluster $S_{ij}=0$. Interestingly this reference matrix is random itself, which is a deviation from the $S$ used for network models.   %Its population version has $a_{k\ell}=-(d^2_{k\ell}+\sigma^2_k+\sigma^2_{\ell})$,
%\begin{align*}
%S_{ij} = \begin{cases}
%-(d^2_{k\ell}+\sigma^2_k+\sigma^2_{\ell})	\quad & \text{if } i \neq j, \, Z_{ik}=1, Z_{j\ell}=1\\
%-2\sigma_k^2	&\text{if } i=j,\, Z_{ik} = 1
%\end{cases}
%\end{align*}
%for $d_{k\ell} =\|\mu_k-\mu_{\ell}\|_2$.
For MATR applied to select $\theta$, we have the following theoretical guarantee.

% Instead, Similar to~\cite{mixon2017sdp}, 
%\rd
%mention better rate
%\bk
%define the reference matrix as:
%\rd
%note difference from fixed population matrix, 
%\bk
%Our reference matrix is defined below: So we can use Theorem~\ref{thm:general_setup} to obtain the following:
\begin{cor}\label{cor:mixture}
Let $\hat{S}$ be the negative squared distance matrix, and let $S$ be defined as in Eq~\ref{eq:referenceSubg}. Let $\delta_{\text{sep}}$ denote the minimum distance between cluster centers, i.e. $\min_{k\neq \ell} \|\mu_k-\mu_\ell\|$. Denote $\tau := n \pi_\text{min} \delta_{\text{sep}}^2/2$ and $\alpha=\pi_{\text{max}}/\pi_{\text{min}}$.
  As long as there exists $\theta_0\in \{\theta_1, \dots, \theta_T\}$, such that $\langle \hat{X}_{\theta_0}, \hat{S}\rangle \geq \langle X_{0}, S\rangle - n\pi_{\text{min}}\epsilon$ , Algorithm \ref{alg:kk}(MATR) will output a $\hat{Z}_{\theta^*}$, such that w.h.p.
  \begin{align*}
    \|\hat{X}_{\theta^*}-X_0\|_F^2&\leq C\frac{\epsilon+r\alpha\sigma_{\max}^2(\alpha+\min\{r,d\})}{\delta_{\text{sep}}^2}
\end{align*} where  $\sigma_{\max}$ is the largest operator norm of the covariance matrices of the mixture components,  $\hat{X}_{\theta^*}$ is the normalized clustering matrix for $\hat{Z}_{\theta^*}$ and $C$ is an universal constant.
\end{cor}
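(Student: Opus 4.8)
The plan is to follow the argument behind Theorem~\ref{thm:general_setup} for the present choice $\hat S_{ij}=-\norm{Y_i-Y_j}_2^2$ and $S$ as in \eqref{eq:referenceSubg}, but to estimate the residual $R:=\hat S-S$ sharply, using its algebraic structure, instead of through the crude quantity $\sup_{X}|\langle X,R\rangle|$.

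First I would record the two deterministic facts that drive the skeleton of Theorem~\ref{thm:general_setup}. Since $S_{ij}=0$ whenever $i,j$ lie in the same cluster and $S_{ij}\le -d_{ab}^2/2\le-\delta_{\text{sep}}^2/2$ whenever $i\in C_a,\ j\in C_b$ with $a\ne b$, the matrix $S$ is weakly assortative, deterministically, with gap $p_{\text{gap}}\ge\delta_{\text{sep}}^2/2$, so the same inequality used inside Theorem~\ref{thm:general_setup} gives $\langle X_0-X,S\rangle\ge\tfrac{1}{2}\,n\pi_{\min}p_{\text{gap}}\norm{X-X_0}_F^2\ge\tfrac{\tau}{2}\norm{X-X_0}_F^2$ for every $X\in\mathcal{X}_r$, with $\tau=n\pi_{\min}\delta_{\text{sep}}^2/2$. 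Moreover, expanding $\norm{Y_i-Y_j}_2^2$ and comparing with \eqref{eq:referenceSubg} shows that on within-cluster pairs $R_{ij}=-\norm{W_i-W_j}_2^2\le0$, hence $\langle X_0,\hat S\rangle-\langle X_0,S\rangle=\langle X_0,R\rangle\le0$; so the hypothesis $\langle\hat X_{\theta_0},\hat S\rangle\ge\langle X_0,S\rangle-n\pi_{\min}\epsilon$ implies the more convenient $\langle\hat X_{\theta_0},\hat S\rangle\ge\langle X_0,\hat S\rangle-n\pi_{\min}\epsilon$. Combining this with the optimality $\langle\hat X_{\theta^*},\hat S\rangle\ge\langle\hat X_{\theta_0},\hat S\rangle$ and writing $\hat S=S+R$ yields
\[
\frac{\tau}{2}\,\norm{\hat X_{\theta^*}-X_0}_F^2\ \le\ \langle X_0-\hat X_{\theta^*},\,S\rangle\ \le\ n\pi_{\min}\epsilon+\langle\hat X_{\theta^*}-X_0,\,R\rangle,
\]
so everything reduces to an upper bound on $\langle\hat X_{\theta^*}-X_0,R\rangle$, where both $\hat X_{\theta^*}$ and $X_0$ are rank-$r$ orthogonal projections in $\mathcal{X}_r$ with unit row sums.

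Next I would split $R=R^{(1)}+R^{(2)}$, where $R^{(1)}_{ij}=-\norm{W_i-W_j}_2^2$ for all $i,j$ and $R^{(2)}_{ij}=\bigl(-\tfrac{d_{ab}^2}{2}-2(W_i-W_j)^T(\mu_a-\mu_b)\bigr)_+$ on cross-cluster pairs and $0$ otherwise (so $R^{(2)}\ge0$ entrywise); expanding $\norm{Y_i-Y_j}_2^2$ in \eqref{eq:referenceSubg} verifies this is exactly $\hat S-S$. For $R^{(1)}$, unit row sums of any $X\in\mathcal{X}_r$ give $\langle X,R^{(1)}\rangle=-2\norm{W}_F^2+2\norm{XW}_F^2$ with $W=[W_1,\dots,W_n]^T$, so the dimension-heavy constant $-2\norm{W}_F^2$ cancels in the difference and $\langle\hat X_{\theta^*}-X_0,R^{(1)}\rangle=2\norm{\hat X_{\theta^*}W}_F^2-2\norm{X_0W}_F^2\le 2\norm{\hat X_{\theta^*}W}_F^2\le 2\min\{r,d\}\,\norm{W}_{\mathrm{op}}^2$, since $\hat X_{\theta^*}$ has rank $r$ and $WW^T$ has rank at most $d$. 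The standard operator-norm bound for a matrix with independent mean-zero sub-gaussian rows gives $\norm{W}_{\mathrm{op}}^2=O_P(n\sigma_{\max}^2)$ in the regime $d=O(n)$, so this piece is $O_P(\min\{r,d\}\,n\sigma_{\max}^2)$. For $R^{(2)}$, entrywise nonnegativity together with $\langle X_0,R^{(2)}\rangle=0$ (it is supported on cross-cluster pairs, $X_0$ on within-cluster pairs) gives $\langle\hat X_{\theta^*}-X_0,R^{(2)}\rangle=\langle\hat X_{\theta^*},R^{(2)}\rangle\le\sup_{X\in\mathcal{X}_r}\langle X,R^{(2)}\rangle$.

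The main obstacle is controlling this last supremum. Here the positive part in \eqref{eq:referenceSubg} is what makes the argument work: $R^{(2)}_{ij}$ is nonzero only on the rare event $\{2(W_i-W_j)^T(\mu_a-\mu_b)<-d_{ab}^2/2\}$, and the scalar $2(W_i-W_j)^T(\mu_a-\mu_b)$ is mean-zero sub-gaussian with variance proxy of order $\sigma_{\max}^2 d_{ab}^2$; a Gaussian-tail estimate then gives $\bE R^{(2)}_{ij}=O\bigl(\sigma_{\max}^2 e^{-c\,d_{ab}^2/\sigma_{\max}^2}\bigr)=O(\sigma_{\max}^2)$ together with a sub-exponential bound on the (sparse) nonzero entries. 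I would then either (i) when the clusters are well separated, use that $R^{(2)}_{ij}$ is deterministically small on a high-probability event and absorb $\langle\hat X_{\theta^*},R^{(2)}\rangle$ back into $\langle X_0-\hat X_{\theta^*},S\rangle$ on the left, since $R^{(2)}$ lives precisely on the cross-cluster pairs that weak assortativity penalizes; or (ii) in general, bound $\sup_{X\in\mathcal{X}_r}\langle X,R^{(2)}\rangle\le r\norm{R^{(2)}}_{\mathrm{op}}$ and control $\norm{R^{(2)}}_{\mathrm{op}}$ through its (concentrated) row sums via a Perron--Frobenius argument and a union bound over the $\Theta(n^2)$ pairs, accounting for cluster-size imbalance through $\alpha$. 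Either route yields a contribution of order $\alpha^2 n\pi_{\min}\sigma_{\max}^2$, and the delicate part is making the row-sum/concentration estimate uniform over $\mathcal{X}_r$ rather than entrywise. Plugging the two bounds into the displayed inequality, dividing by $\tau=n\pi_{\min}\delta_{\text{sep}}^2/2$, and using $1/\pi_{\min}\le r\alpha$ collapses the residual into $r\alpha\sigma_{\max}^2(\alpha+\min\{r,d\})/\delta_{\text{sep}}^2$, giving the claimed bound.
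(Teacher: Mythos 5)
Your proposal is correct in outline and takes essentially the same route as the paper: the paper's proof is literally two lines, observing that $p_{\text{gap}}=\delta_{\text{sep}}^2/2$ for the reference matrix, invoking Theorem~\ref{thm:general_setup}, and then citing ``the argument from Theorem 2 in \cite{mixon2017sdp}'' to bound $\sup_{X\in\mathcal{X}_r}|\langle X,\hat S-S\rangle|$ --- which is precisely the $R^{(1)}+R^{(2)}$ decomposition and rank/operator-norm cancellation you reconstruct explicitly. The one soft spot is your treatment of $R^{(2)}$, where you leave two candidate routes open; the route actually used in the cited argument is neither of these but the H\"older-type bound $\langle X,R^{(2)}\rangle\le(\max_{ij}X_{ij})\sum_{ij}R^{(2)}_{ij}\le(n\pi_{\min})^{-1}\sum_{ij}R^{(2)}_{ij}$ followed by concentration of the sum of the sub-exponential, mostly-zero entries, which is what produces the $\alpha^2$ factor cleanly; the $r\|R^{(2)}\|_{\mathrm{op}}$ route is not obviously sharp enough as stated.
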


\begin{remark}
Note that, similar to SBMs, in this setting, $\epsilon$ has to be much smaller than $\delta^2_{\text{sep}}$ in order to guarantee small error. This will happen if the spectral clustering algorithm is supplied with an appropriate bandwidth parameter that leads to small error in estimating $X_0$ (see for example~\citep{srivastava2019robust}). This is satisfied by the condition $\theta_0\in\{\theta_1,\dots,\theta_T\}$ in Corollary~\ref{cor:mixture}.
%\rd
%Is it clearer to say ``if the spectral clustering algorithm is supplied with a appropriate bandwidth parameter that leads to small error in estimating $X_0$ '', then say quote some consistency guarantee?
\bk
\end{remark}
%\xj{do we want to include, tuning number of eigenvectors for spectral clustering?}
\section{Hyperparameter tuning with unknown $r$}
\label{sec:uknownk}

In this section, we adapt MATR to situations where the number of clusters is unknown to perform model selection. Similar to Section~\ref{sec:knownk}, we first explain the general tuning algorithm and state a general theorem to guarantee its performance. Then applications to specific models will be discussed in the following subsections. Since the applications we focus on are network models, we will present our algorithm with the data $\mathcal{D}$ being $A$ for clarity.
 
We show that MATR can be extended to model selection if we incorporate a cross-validation (CV) procedure. In Algorithm~\ref{alg:matr-cv}, we present the general MATR-CV algorithm which takes clustering algorithm $\mathscr{A}$, adjacency matrix $A$, and similarity matrix $\hat{S}$ as inputs. Compared with MATR, MATR-CV has two additional parts.

The first part (Algorithm~\ref{alg:nodesplit}) is to split nodes into two subsets for training and testing. This in turn partitions the adjacency matrix $A$ into four submatrices $A^{{11}}$, $A^{{22}}$, $A^{{21}}$ and its transpose, and similarly for $\hat{S}$. MATR-CV makes use of all the submatrices: $A^{{11}}$ for training, $A^{{22}}$ for testing, $A^{{11}}$ and $A^{{21}}$ for estimating the clustering result for nodes in $A^{{22}}$ as shown in Algorithm~\ref{alg:clustertest}, which is the second additional part. Algorithm~\ref{alg:clustertest} clusters testing nodes based on the training nodes cluster membership estimated from $A^{11}$, and the connections between training nodes and testing nodes $A^{21}$. 

\begin{minipage}{0.58\textwidth}
	\begin{algorithm}[H]
		\textbf{Input:} clustering algorithm $\mathscr{A}$, adjacency matrix $A$, similarity matrix $\hat{S}$, candidates $\{r_1, \cdots, r_T\}$, number of repetitions $J$, training ratio $\gamma_\text{train}$, trace gap $\Delta$\;
		%$\rho \leftarrow \text{mean(sum}(A))$\;\xj{mix of gaussian?}
		\For{$j = 1:J$}
		{
			\For{$t = 1:T$}  
			{
				%$S_\text{train}$, $S_\text{test}$ $\leftarrow$ nodesplitting($n$, $\gamma_\text{train}$)\;
				$\hat{S}^{11},\hat{S}^{21},\hat{S}^{22}$ $\leftarrow$ NodeSplitting($\hat{S}$, $n$, $\gamma_\text{train}$)\;
				$A^{11},A^{21},A^{22}$ $\leftarrow$ NodeSplitting($A$, $n$, $\gamma_\text{train}$)\;
				%$A^{11}$ $\leftarrow$ $A_{S_\text{train},S_\text{train}}$ \;
				%$A^{21}$ $\leftarrow$ $A_{S_\text{test},S_\text{train}}$ \;
				%$A^{22}$ $\leftarrow$ $A_{S_\text{test},S_\text{test}}$ \;
				$\hat{Z}^{11} = \mathscr{A}(A^{11}, r_t)$\;
				$\hat{Z}^{22} = \text{ClusterTest}(A^{21},\hat{Z}^{11})$; %\text{one\_hot}(\text{argmax}(A^{21}\hat{Z}^{11}(\hat{Z}^{11^T}\hat{Z}^{11})^{-1}))$\;
				
				$\hat{X}^{22} = \hat{Z}^{22}(\hat{Z}^{22^T}\hat{Z}^{22})^{-1}\hat{Z}^{22^T}$\;
				%$l_{r_t} = l_{r_t} + \langle \hat{S}^{22}, \hat{X}^{22} \rangle/J$\;
				$l_{r_t,j} = \langle \hat{S}^{22}, \hat{X}^{22} \rangle$\;
			}
			%$r^{(t)}_{\max} = \text{argmax}_{r_t} l_{r_t,j}$\;
			$r^*_j = \text{min}\{r_t: l_{r_t, j} \geq \max_t l_{r_t, j} - \Delta\}$\;
		}
		$\hat{r} = \text{median}\{r^*_j\}$
		
		% $\hat{r} = \text{min}\{r: l_r \geq l_{r_{\max}} - \sqrt{r_{\max} \log (n(1-\gamma_{\text{train}}))}\}$\;
		%$ \hat{\lambda} = \text{rfunc}(\lambda_{\hat{t}})$\;
		%$\hat{Z} = \mathscr{A_\text{SDP-2-r}}(A, \hat{r})$\;
		\caption{MATR-CV.}
		\textbf{Output:} $\hat{r}$
		\label{alg:matr-cv}
	\end{algorithm}
\end{minipage}\hspace{5mm}
\begin{minipage}{0.35\textwidth}
	\begin{algorithm}[H]
		\textbf{Input:} $A$, $n$, $\gamma_{\text{train}}$\;
		%\vspace{-5mm}\hspace{-5mm}
		Randomly split $[n]$ into $Q_1$, $Q_2$ of size $n\gamma_{\text{train}}$ and $n(1-\gamma_{\text{train}})$
		
		$A^{11}\leftarrow A_{Q_1, Q_1}$, $A^{21}\leftarrow A_{Q_2, Q_1}$, $A^{22} \leftarrow A_{Q_2, Q_2}$
		
		\textbf{Output:} $A^{11},A^{21},A^{22}$
		\caption{\text{NodeSplitting}}
		\label{alg:nodesplit} 
	\end{algorithm}
	
	\begin{algorithm}[H]
		\textbf{Input:} $A^{21}\in \{0,1\}^{n\times m}$, $\hat{Z}^{11}\in \{0,1\}^{m\times k}$\;
		$M\leftarrow A^{21}\hat{Z}^{11}({\hat{Z}^{11T}}\hat{Z}^{11})^{-1}$\;
		\For{$i = 1:n$}
		{
			$\hat{Z}^{22}(i,\arg\max M(i,:))=1$
		}
		%$\tilde{X} \leftarrow$ \ref{SDP:BW}$(A, \lambda)$\;
		%$\hat{r} \leftarrow \text{round(trace}(\tilde{X}))$\;
		%$\hat{Z} \leftarrow \text{spectral clustering}(\tilde{X}, \hat{r})$
		\caption{$\text{ClusterTest}$}
		\textbf{Output:} $\hat{Z}^{22}$
		\label{alg:clustertest}
	\end{algorithm}

\end{minipage}

For each node in the testing set, using the estimated membership $\hat{Z}^{11}$, the corresponding row in $M$ counts the number of connections it has with nodes in the training set belonging to each cluster and normalizes the counts by the cluster sizes. Finally, the estimated membership $\hat{Z}^{22}$ is determined by a majority vote. For now we still assume $B$ is weakly assortative, so majority vote is reasonable. As we later extend to more general network structures in Section~\ref{subsec:mmsb}, we will also show how Algorithm~\ref{alg:clustertest} can be generalized. 

Like other CV procedures, we note that MATR-CV requires specifying a training ratio $\gamma_{\text{train}}$ and the number of repetitions $J$. Choosing any $\gamma_{\text{train}}=\Theta(1)$ does not affect our asymptotic results. Repetitions of splits are used empirically to enhance stability; theoretically we show asymptotic consistency for any random split. The general theoretical guarantee and the role of the trace gap $\Delta$ are given in the next theorem.

\begin{theorem}\label{thm:matrcv_general}
Given a candidate set of cluster numbers $\{r_1, \dots, r_T\}$ containing the true number of cluster $r$, let $\hat{X}^{22}_{r_t}$ be the normalized clustering matrix obtained from $r_t$ clusters, as described in MATR-CV. Assume the following is true:

\noindent(i) with probability at least $1-\delta_{under}$, 
$\max_{r_t<r}\langle \hat{S}^{22}, \hat{X}^{22}_{r_t}\rangle \leq \langle \hat{S}^{22}, X_0^{22}\rangle - \epsilon_\text{under};$

\noindent(ii) with probability at least $1-\delta_{over}$, 
$\max_{r<r_t\leq r_T}\langle \hat{S}^{22}, \hat{X}^{22}_{r_t}\rangle \leq \langle \hat{S}^{22}, X_0^{22}\rangle + \epsilon_\text{over};$

\noindent(iii) for the true $r$, with probability at least $1-\delta_{est}$, $\langle \hat{S}^{22}, \hat{X}^{22}_{r} \rangle \geq \langle \hat{S}^{22}, X_0^{22} \rangle -\epsilon_\text{est};$

\noindent(iv) there exists $\Delta>0$ such that $\epsilon_\text{est}+ \epsilon_\text{over} \leq \Delta < \epsilon_\text{under} - \epsilon_\text{est}.$

\noindent Here $\epsilon_\text{under}, \epsilon_\text{est}, \epsilon_\text{over} >0$. Then with probability at least $1-\delta_{under}-\delta_{over}-\delta_{est}$, MATR-CV will recover the true $r$ with trace gap $\Delta$.
\end{theorem}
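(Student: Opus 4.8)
The plan is to pass to the intersection of the three favorable events postulated in (i)--(iii), on which everything reduces to a deterministic comparison of inner products, and then verify that the selection rule of MATR-CV returns $r$. Write $L_0 := \langle \hat{S}^{22}, X_0^{22}\rangle$ for the reference score, and for a fixed split abbreviate $l_{r_t} := \langle \hat{S}^{22}, \hat{X}^{22}_{r_t}\rangle$. Let $\mathcal{E}$ be the event on which the inequalities of (i), (ii) and (iii) all hold; a union bound gives $\mathbb{P}(\mathcal{E}) \ge 1 - \delta_{under} - \delta_{over} - \delta_{est}$, exactly the claimed probability, so it remains to show that on $\mathcal{E}$ we have $r^*_j = r$ for the split used in each repetition $j$, whence $\hat{r} = \mathrm{median}\{r^*_j\} = r$. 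Note also that $\Delta > 0$ is automatic, since $\epsilon_\text{est},\epsilon_\text{over} > 0$ and $\epsilon_\text{est} + \epsilon_\text{over} \le \Delta$ by (iv).

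First I would show that $r$ belongs to the selected set $\{r_t : l_{r_t} \ge \max_{t'} l_{r_{t'}} - \Delta\}$, which forces $r^*_j \le r$. It suffices to check $l_{r_t} - l_r \le \Delta$ for every candidate $r_t$. For $r_t < r$, conditions (i) and (iii) give $l_{r_t} - l_r \le (L_0 - \epsilon_\text{under}) - (L_0 - \epsilon_\text{est}) = \epsilon_\text{est} - \epsilon_\text{under} < 0 \le \Delta$; for $r_t = r$ the difference is $0$; and for $r_t > r$, conditions (ii) and (iii) give $l_{r_t} - l_r \le (L_0 + \epsilon_\text{over}) - (L_0 - \epsilon_\text{est}) = \epsilon_\text{over} + \epsilon_\text{est} \le \Delta$, using the left-hand inequality of (iv). Hence $l_r \ge \max_{t'} l_{r_{t'}} - \Delta$, so $r$ survives the selection.

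Next I would rule out every $r_t < r$ from the selected set, which gives $r^*_j \ge r$ and therefore $r^*_j = r$. For such an $r_t$, condition (i) gives $l_{r_t} \le L_0 - \epsilon_\text{under}$, while $\max_{t'} l_{r_{t'}} \ge l_r \ge L_0 - \epsilon_\text{est}$ by (iii); combining these,
\[
  l_{r_t} \;\le\; L_0 - \epsilon_\text{under} \;<\; L_0 - \epsilon_\text{est} - \Delta \;\le\; \max_{t'} l_{r_{t'}} - \Delta ,
\]
where the strict middle inequality is precisely the right-hand inequality $\Delta < \epsilon_\text{under} - \epsilon_\text{est}$ of (iv). Thus no underfitting candidate survives, the smallest surviving candidate is $r$, and $r^*_j = r$. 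Since the argument uses only the event $\mathcal{E}$ and applies to the split in each of the $J$ repetitions, every $r^*_j = r$ on $\mathcal{E}$, so $\hat{r}$ is the median of a list of $r$'s, namely $r$.

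The substance of the theorem is entirely this bookkeeping: the left half of (iv) guarantees the true $r$ is never knocked out by the $\Delta$-slack (controlling overfitting), while the right half guarantees underfitting candidates fall strictly below the threshold. The only points needing a little care are keeping the reference value $L_0$ fixed while the scores $l_{r_t}$ move around it, and observing that $\max_{t'} l_{r_{t'}} \ge l_r$ already gives a usable lower bound on the threshold without any upper bound on $l_r$ itself. The genuinely hard work lies outside this theorem: establishing the probabilistic inputs (i)--(iii) with explicit $\epsilon$'s and $\delta$'s for SBM and MMSB --- especially the ``underfitting strictly decreases the trace'' statement (i), an anti-concentration-type bound --- which is carried out in the model-specific results of the following subsections.
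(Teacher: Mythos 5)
Your proof is correct and follows essentially the same route as the paper's: a union bound over the three events, then the two deterministic comparisons showing that $r$ survives the $\Delta$-threshold (via $\epsilon_\text{est}+\epsilon_\text{over}\le\Delta$) while every underfitting candidate falls strictly below it (via $\Delta<\epsilon_\text{under}-\epsilon_\text{est}$), so the minimum surviving candidate is $r$. The only difference is cosmetic bookkeeping (you phrase the bounds through the fixed reference $L_0$ and explicitly note the median step), so there is nothing to add.
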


The proof is deferred to the \supp Section~\ref{sec:proofformatrcv}. 

\begin{remark}
\begin{enumerate}

    \item 
    MATR-CV is also compatible with tuning multiple hyperparameters. For example, for \ref{SDP:YD}, if the number of clusters is unknown, then for each $\hat{r}$, we can run MATR to find the best $\lambda$ for the given $\hat{r}$, followed by running a second level MATR-CV to find the best $\hat{r}$. As long as the conditions in Theorems~\ref{thm:general_setup} and~\ref{thm:matrcv_general} are met, $\hat{r}$ and the clustering matrix returned will be consistent. 
    
    \item
    As will be seen in the applications below, the derivations of $\epsilon_{\text{under}}$ and $\epsilon_{\text{over}}$ are general and only depend on the properties of $\hat{S}$. On the other hand, $\epsilon_{\text{est}}$ measures the estimation error associated with the algorithm of interest and depends on its performance.
    
\end{enumerate}
 
\end{remark}

In what follows, we demonstrate MATR-CV can be applied to do model selection inherent to an SDP method for SBM and more general model selection for MMSB. While we still assume an assortative structure for the former model as required by the SDP method, the constraint is removed for MMSB. Furthermore, we use these two models to illustrate how MATR-CV works both when $\epsilon_{\text{est}}$ is zero (SBM) and nonzero (MMSB).

\subsection{Model selection for SBM}\label{sec:matrcv_sbm}

We consider the SDP algorithm introduced in \cite{Peng:2007, yan2017provable} as shown in \ref{SDP:BW-lambda} for community detection in SBM. Here $X$ is a normalized clustering matrix, and in the case of exact recovery $\tr{X}$ is equal to the number of clusters. In this way, $r$ is  implicitly chosen through $\lambda$, hence most of the existing model selection methods with consistency guarantees do not apply directly. \cite{yan2017provable} proposed to recover the clustering and $r$ simultaneously. However, $\lambda$ still needs to be empirically selected first. We provide a systematic way to do this. % In Proposition \ref{prop:sdp2merge} in the \supp Section~\ref{sec:merge2}, we show suboptimal choices of the hyperparameter $\lambda$ can lead to merged clusters, which motivates us to choose $\lambda$ in a systematic way. %In this way, one avoids trying different $\hat{r}$ for SDPs which might involve another internal hyperparameter like in \ref{SDP:YD}. However, \ref{SDP:BW} also has a hyperparameter $\lambda$, which was tuned empirically. 
\iffalse
\begin{equation}
\tag{SDP-2-$\lambda$}
\begin{split}
\max \quad &\text{trace}(AX) - \lambda \text{trace}(X)\\
\text{s.t.} \quad & X\succeq 0 , X\geq 0, X\bf{1} = \bf{1}\\
\end{split}
\label{SDP:BW-lambda}
\end{equation}
\fi
\begin{equation}
\tag{SDP-2-$\lambda$}
%\begin{split}
\max_X \quad \text{trace}(AX) - \lambda \text{trace}(X)\qquad \text{s.t.} \quad X\succeq 0 , X\geq 0, X\bf{1} = \bf{1}\\
%\end{split}
\label{SDP:BW-lambda}
\end{equation}
We consider applying MATR-CV to an alternative form of \ref{SDP:BW-lambda} as shown in \ref{SDP:BW}, where the cluster number $r'$ appears explicitly in the constraint and is part of the input. \ref{SDP:BW} returns an estimated normalized clustering matrix, to which we apply spectral clustering to compute the cluster memberships. We name this algorithm $\mathscr{A_\text{SDP-2}}$. In this case, we use $A$ as $\hat{S}$, so $P$ is the population similarity matrix.  
\begin{equation}
\tag{SDP-2}
%\begin{split}
\max_X \quad \text{trace}(AX) \qquad \text{s.t.} \quad X\succeq 0 , X\geq 0,  \text{trace}(X) = r', X \bf{1} = \bf{1}\\
%\end{split}
\label{SDP:BW}
\end{equation}

We have the following result ensuring MATR-CV returns a consistent cluster number.

\begin{theorem}
\label{cor:nok}
Suppose $A$ is generated from a SBM model with $r$ clusters and a weakly assortative $B$. We assume $r$ is fixed, and $\pi_{\min}\geq \delta >0$ for some constant $\delta$, and $n\rho/\log n \to \infty$. Given a candidate set of $\{r_1,\dots, r_T\}$ containing true cluster number $r$ and $r_T=\Theta(r)$, with high probability for $n$ large, MATR-CV returns the true number of clusters with
%\rd
$\Delta=(1+B_{\max})\sqrt{r_{\max} \log n} + B_{\max}r_{\max}$,  where $r_{\max}:=\arg\max_{r_t}\langle A, \hat{X}_{r_t} \rangle$.
%\bk
\end{theorem}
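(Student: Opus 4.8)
The plan is to instantiate Theorem~\ref{thm:matrcv_general} with $\hat{S}=A$, population similarity $S=\tilde{P}:=Z_0 B Z_0^T$ (so that $\hat{S}^{22}=A^{22}$ and $X_0^{22}$ is the normalized clustering matrix of the true test-node memberships), and $\mathscr{A}=\mathscr{A}_{\text{SDP-2}}$; then it remains to verify hypotheses (i)--(iv) and read off $\Delta$. A routine concentration bound first gives that, since the split is uniform with $\gamma_\text{train}=\Theta(1)$, with high probability each of $A^{11},A^{22}$ is an SBM on $\Theta(n)$ nodes with the same $B$ and minimal proportion at least $\delta/2$, every true cluster meets $Q_2$, and $\tau^{22}:=n^{22}\pi^{22}_{\min}p_\text{gap}=\Theta(n\rho)$, where $p_\text{gap}=\min_k(B_{kk}-\max_{\ell\ne k}B_{k\ell})=\Theta(\rho)$. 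For hypothesis (iii) I would show $\epsilon_\text{est}=0$ by exact recovery: by the exact-recovery guarantee for $\mathscr{A}_{\text{SDP-2}}$ on a weakly assortative, balanced SBM with $n\rho/\log n\to\infty$ (see \cite{yan2017provable} and the \supp), $\mathscr{A}(A^{11},r)=Z_0^{11}$ up to relabeling; then for a test node $i\in C_a$ the ClusterTest row $M(i,\cdot)$ concentrates about $(B_{a1},\dots,B_{ar})$ with fluctuation $O_P(\sqrt{\rho\log n/n})=o(p_\text{gap})$ (Bernstein over the $\Theta(n)$ independent entries of $A^{21}(i,\cdot)$), and weak assortativity makes $B_{aa}$ the strict row maximum, so the majority vote is correct at every test node with high probability; hence $\hat{X}^{22}_r=X_0^{22}$ and $\epsilon_\text{est}=0$.

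For hypothesis (i) (underfitting), fix $r_t<r$ and write $\langle A^{22},X\rangle=\langle\tilde{P}^{22},X\rangle+\langle A^{22}-\tilde{P}^{22},X\rangle$. The deterministic core is the weak-assortativity inequality behind Theorem~\ref{thm:general_setup}, namely $\langle\tilde{P}^{22},X_0^{22}\rangle-\langle\tilde{P}^{22},X\rangle\ge\frac{\tau^{22}}{2}\norm{X-X_0^{22}}_F^2$ for every normalized clustering matrix $X$ on $Q_2$. Since $X_0^{22}$ and $\hat{X}^{22}_{r_t}$ are projections of ranks $r$ and at most $r_t<r$, one has $\norm{\hat{X}^{22}_{r_t}-X_0^{22}}_F^2\ge r-\text{rank}(\hat{X}^{22}_{r_t})\ge r-r_t\ge 1$, so the population gap is at least $\tau^{22}/2=\Theta(n\rho)$; the stochastic remainder $2\sup_X|\langle A^{22}-\tilde{P}^{22},X\rangle|\le 2r_T\|A^{22}-P^{22}\|_\text{op}+2B_{\max}r_T$ is of order $r_T\sqrt{n\rho}=o(n\rho)$. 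Hence hypothesis (i) holds with $\epsilon_\text{under}=\Theta(n\rho)$.

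For hypothesis (ii) (overfitting), the key lemma is $\langle\tilde{P}^{22},X\rangle\le\langle\tilde{P}^{22},X_0^{22}\rangle$ for every normalized clustering matrix $X$ on $Q_2$ under weak assortativity: writing $\langle\tilde{P}^{22},X\rangle=\sum_b n_b\,(q^{(b)})^T B\,q^{(b)}$ with $q^{(b)}$ the empirical true-label profile of the $b$-th estimated cluster and $n_b$ its size, and using that weak assortativity forces $\frac12(B_{aa}+B_{a'a'})>B_{aa'}$ and hence $q^T B q\le\sum_a q_a B_{aa}$ for every probability vector $q$, the right side telescopes to $\sum_a m_a B_{aa}=\langle\tilde{P}^{22},X_0^{22}\rangle$. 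Consequently, for $r<r_t\le r_T$, $\langle A^{22},\hat{X}^{22}_{r_t}\rangle-\langle A^{22},X_0^{22}\rangle\le\langle A^{22}-P^{22},\hat{X}^{22}_{r_t}\rangle-\langle A^{22}-P^{22},X_0^{22}\rangle+B_{\max}r_t$, so overfitting can only gain through noise. The crucial point is that $\hat{X}^{22}_{r_t}$ is a function of $(A^{11},A^{21})$ only, hence independent of $A^{22}$; conditioning on $(A^{11},A^{21})$ and applying Bernstein to $\langle A^{22}-P^{22},\hat{X}^{22}_{r_t}\rangle$, whose conditional variance is controlled by $B_{\max}\norm{\hat{X}^{22}_{r_t}}_F^2\le B_{\max}r_t$, yields $\epsilon_\text{over}=O\big((1+B_{\max})\sqrt{r_{\max}\log n}+B_{\max}r_{\max}\big)$. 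Taking $\Delta=(1+B_{\max})\sqrt{r_{\max}\log n}+B_{\max}r_{\max}$, hypothesis (iv) holds because $\epsilon_\text{est}+\epsilon_\text{over}\le\Delta$ by construction while $\Delta<\epsilon_\text{under}-\epsilon_\text{est}=\Theta(n\rho)$, since $r_{\max}=\Theta(r)=O(1)$ and $n\rho/\log n\to\infty$ give $\Delta=O(\sqrt{\log n})=o(n\rho)$. Theorem~\ref{thm:matrcv_general} then returns the true $r$ with this $\Delta$.

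The step I expect to be the main obstacle is hypothesis (ii). The structural inequality forces any genuine improvement of the trace to come through the noise term $\langle A^{22}-P^{22},\hat{X}^{22}_{r_t}\rangle$, and bringing this down to the stated $\sqrt{r_{\max}\log n}$ scale --- rather than the crude operator-norm bound $r_T\|A^{22}-P^{22}\|_\text{op}$, which is of order $r_T\sqrt{n\rho}$ and which underfitting can tolerate but overfitting cannot --- relies on exploiting the independence of $\hat{X}^{22}_{r_t}$ from $A^{22}$ together with a careful Bernstein estimate; in particular one must ensure that the possibly small clusters created when fitting $r_t>r$ classes do not inflate the heavy-tail term, and that the exact-recovery input to hypothesis (iii) is valid under merely weak (rather than strong) assortativity. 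The sub-SBM facts induced by the random split are routine but still need to be recorded.
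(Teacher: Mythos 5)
Your proposal follows the same skeleton as the paper's proof: instantiate Theorem~\ref{thm:matrcv_general}, show $\epsilon_{\text{est}}=0$ via exact recovery on $A^{11}$ (citing \cite{yan2017provable}) followed by a Chernoff/union-bound argument for ClusterTest, bound $\epsilon_{\text{over}}$ by the population inequality $\langle P,\hat X\rangle\le\langle P,X_0\rangle+B_{\max}r$ plus concentration of $\langle A^{22}-P^{22},\hat X^{22}_{r_t}\rangle$ exploiting independence from $A^{22}$, and set $\Delta=\epsilon_{\text{over}}$. Where you genuinely diverge is the underfitting lemma: the paper (Lemma~\ref{prop:underestimate}) uses a pigeonhole argument — some estimated cluster must absorb $\ge\pi_{\min}n/r_t$ nodes from each of two distinct true clusters, forcing a population deficit $\Omega(np_{\text{gap}}\pi_{\min}^2/r^2)$ — whereas you reuse the inequality from the proof of Theorem~\ref{thm:general_setup} together with the projection fact $\|\hat X^{22}_{r_t}-X_0^{22}\|_F^2\ge r-r_t\ge1$ to get a deficit of $\Omega(\tau^{22})=\Omega(n\rho\pi_{\min})$. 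Your route is cleaner, yields a slightly sharper constant in $\pi_{\min}$ and $r$ (irrelevant here since both are fixed), and correctly notes that the Theorem~1 machinery applies to any normalized clustering matrix with row sums one, not just those with $r$ blocks.

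Two small points you should tighten. First, for $\epsilon_{\text{over}}$ you should use Hoeffding conditionally on $(A^{11},A^{21})$ with variance proxy $\|\hat X^{22}_{r_t}\|_F^2\le r_t$, as the paper does in Lemma~\ref{lem:generalsamplebound}, rather than Bernstein: a Bernstein bound carries an additive term of order $\max_{ij}(\hat X^{22}_{r_t})_{ij}\cdot\log n$, which is $\Theta(\log n)$ when overfitting creates near-singleton clusters and would exceed the stated $\Delta=O(\sqrt{\log n})$ (it would still yield consistent selection since $\epsilon_{\text{under}}=\Theta(n\rho)$, but it would not reproduce the theorem's $\Delta$). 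You flag this worry yourself; Hoeffding resolves it. Second, the replacement of $r_T$ and $r$ by $r_{\max}$ in $\Delta$ needs the observation that $r_{\max}\ge r$ with high probability (which follows from your own bounds (i) and (iii)), so that restricting the candidate set to $\{r_1,\dots,r_{\max}\}$ loses nothing and condition (iv) holds with the $r_{\max}$-dependent $\Delta$; the paper makes this step explicit and your ``by construction'' elides it.
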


\begin{sproof}
We provide a sketch of the proof here, the details can be found in the \supp Section~\ref{sec:cor:nok}. We derive the three errors in Theorem~\ref{thm:matrcv_general}. In this case, we show that w.h.p., 
$
    \epsilon_{\text{under}} = \Omega(np_{\text{gap}}\pi_\text{min}^2/r^2) $, 
  $ \epsilon_{\text{over}} = (1+B_{\max})\sqrt{r_T\log n} + B_{\max}r,
$
and MATR-CV achieves exact recovery when given the true $r$, that is, $\epsilon_{\text{est}}=0$. Since $\epsilon_{\text{under}}\gg \epsilon_{\text{over}}$ under the conditions of the theorem, by Theorem~\ref{thm:matrcv_general}, taking $\Delta=\epsilon_{\text{over}}$ MATR-CV returns the correct $r$ w.h.p. Furthermore, we can remove the dependence of $\Delta$ on unknown $r$ by noting that $
    r_{\max}:=\arg\max_{r_t}\langle A, \hat{X}_{r_t} \rangle \geq r
    $ w.h.p., then it suffices to consider the candidate range $\{r_1, \dots, r_{\max}\}$. Thus $r_T$ and $r$ in $\Delta$ can be replaced with $r_{\max}$.
\end{sproof}

\begin{remark}
\begin{enumerate}
    \item
    %\rd
    Although we have assumed fixed $r$, it is easy to see from the order of $\epsilon_{\text{under}}$ and $\epsilon_{\text{over}}$ that the theorem holds for
$r^5/n \to0$, $r^{4.5}\sqrt{\log n} / (n\rho) \to 0$ if we let $\pi_{\min}=\Omega(1/r)$ for clarity. Many other existing works on SBM model selection assume fixed $r$. \cite{lei2016goodness} considered the regime  $r=o(n^{1/6})$. \cite{hu2017using} allowed $r$ to grow lineary up to a logarithmic factor, but at the cost of making $\rho$ fixed.  
%\bk

    \item
    Asymptotically, $\Delta$ is equivalent to 
    $\Delta_{\text{SDP-2}}:=\sqrt{r_{\max}\log n}$.
    We will use $\Delta_{\text{SDP-2}}$ in practice when $r$ is fixed.
    %In practice, since we know the $r_\text{max}$ giving the highest $\langle A, \hat{X}_r \rangle$ is greater than or equal to $r_0$, so we can focus on those $r$ smaller than or equal to $r_\text{max}$, and apply Theorem~\ref{cor:nok} to that range. The $\Delta$ would then become $(1+\rho)\sqrt{r_\text{max} \log n}$.

    %\item 
    %Typically for exact recovery one requires $p_{\text{gap}}\gg \sqrt{\frac{\rho}{n}}$. We require a slightly stronger condition since we allow candidate $r$ values as large as $\sqrt{n}$ in MATR-CV.
    \bk
\end{enumerate}
\end{remark}

%\subsection{Model selection for Mixtures of sub-Gaussian}

\subsection{Model selection for MMSB}
\label{subsec:mmsb}
In this section, we consider model selection for the MMSB model as introduced in Section~\ref{subsec:setup} with a soft membership matrix $\Theta$, which is more general than the SBM model. As an example of estimation algorithm, we consider the SPACL algorithm proposed by \cite{Mao2017EstimatingMM}, which gives consistent parameter estimation when given the correct $r$. As mentioned in Section~\ref{subsec:setup}, a normalized clustering matrix in this case is defined analogously as $X=\Theta(\Theta^T\Theta)^{-1}\Theta^T$ for any $\Theta$. $X$ is still a projection matrix, and $X \mathbf{1}_n =\Theta (\Theta ^T \Theta)^{-1} \Theta ^T\mathbf{1}_n=\Theta (\Theta ^T \Theta)^{-1} \Theta ^T \Theta \mathbf{1}_r = \mathbf{1}_n, $ since $\Theta \mathbf{1}_r = \mathbf{1}_n$. Following \cite{Mao2017EstimatingMM}, we consider a Bayesian setting for $\Theta$: each row of $\Theta$, $\Theta_i \sim \text{Dirichlet} (\alphav), \alphav \in \mathbb{R}_{+}^{r}$. We assume $r$, $\alphav$ are all fixed constants. Note that the Bayesian setting here is only for convenience, and can be replaced with equivalent assumptions bounding the eigenvalues of $\Theta^T\Theta$. We also assume there is at least one pure node for
each of the $r$ communities for consistent estimation at the correct $r$.

%For any $\Theta$, the corresponding normalized clustering matrix is defined as $X=\Theta(\Theta^T\Theta)^{-1}\Theta^T$. $X$ is a positive semi-definite projection matrix with eigenvalue $0$ and $1$.  And ${\bf 1}_{n}$ is an eigenvector of $X$ as $X \mathbf{1}_n =\Theta (\Theta ^T \Theta)^{-1} \Theta ^T\mathbf{1}_n=\Theta (\Theta ^T \Theta)^{-1} \Theta ^T \Theta \mathbf{1}_k = \mathbf{1}_n.$ Therefore, for any $X$, $\langle \mathbf{1}_n \mathbf{1}_n ^T, X\rangle =n.$

MATR-CV can be applied to the MMSB model with a few modifications. (i) Replace all $\hat{Z}^{11}$ by $\hat{\Theta}^{11}$, the estimated soft memberships from the training graph. (ii) We take $\hat{S}=A^2-\text{diag}(A^2)$, $S=P^2-\text{diag}(P^2)$. This allows us to remove the assortativity requirement on $P$ and replace it with a full rank condition on $B$, which is commonly assumed in the MMSB literature. The fact that $P^2$ is always positive semi-definite will be used in the proof. The removal of $\text{diag}(A^2)$ and $\text{diag}(P^2)$ leads to better concentration, since $\text{diag}(A^2)$ is centered around a different mean. (iii) We change Algorithm \ref{alg:clustertest} to estimate $\hat{\Theta}^{22}$. Note that $P^{12}=\Theta^{11}B(\Theta^{22})^T$, thus we can view the estimation of $\Theta^{22}$ as a regression problem with plug-in estimators of $\Theta^{11}$ and $B$. In Algorithm~\ref{alg:clustertest}, we use an estimate of the form $\hat{\Theta}^{22} =A^{21} \hat{\Theta}^{11} ((\hat{\Theta}^{11})^T\hat{\Theta}^{11})^{-1} \hat{B}^{-1}$, where $\hat{B}$, $\hat{\Theta}^{11}$ are estimated from $A^{11}$. 

We have the following consistency guarantee for $\hat{r}$ returned by MATR-CV. 
\begin{theorem}
\label{thm:nok_mmsb}
Let $A$ be generated from a MMSB model (see Section~\ref{subsec:setup}) satisfying $\lambda^*(B)=\Omega(\rho)$, where $\lambda^*(B)$ is the smallest singular value of $B$. We assume $\sqrt{n\rho}/(\log n)^{1+\xi}\to\infty$ for some arbitrarily small $\xi>0$. Given a candidate set of $\{r_1, \dots, r_T\}$ containing $r$ and $r_T=\Theta(1)$, with high probability for large $n$, MATR-CV returns the true cluster number $r$ if $\Delta=O((n\rho)^{3/2}(\log n)^{1.01})$. 
\end{theorem}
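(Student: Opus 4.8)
The plan is to verify the hypotheses of the master result, Theorem~\ref{thm:matrcv_general}, for the MMSB instantiation of MATR-CV described above, and to check condition (iv) with the claimed $\Delta$. Write $A = P + E$ with $E := A - P$ the mean-zero (off-diagonal) noise matrix, and let $R := \hat S - S = (A^2 - \mathrm{diag}(A^2)) - (P^2 - \mathrm{diag}(P^2)) = PE + EP + (E^2 - \mathrm{diag}(E^2)) - \mathrm{diag}(PE + EP)$. The algebra simplifies because $P = \Theta B\Theta^T$ gives $P^2 = \Theta\tilde B\Theta^T$ with $\tilde B := B\Theta^T\Theta B$, an $r\times r$ PSD matrix; the Dirichlet assumption yields $\lambda_{\min}(\Theta^T\Theta) = \Omega(n)$ w.h.p.\ and, combined with $\lambda^*(B) = \Omega(\rho)$, $\lambda_{\min}(\tilde B) = \Omega(n\rho^2)$. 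Hence the test block $(P^2)^{22} = \Theta^{22}\tilde B(\Theta^{22})^T$ is PSD of rank $r$ (since $\Theta^{22}$ has full column rank w.h.p., the test set also having $\Theta(n)$ rows), with smallest nonzero eigenvalue $\Omega(n^2\rho^2)$; and because $X_0^{22}$ is the projection onto $\mathrm{col}(\Theta^{22}) = \mathrm{col}((P^2)^{22})$, we get $\langle (P^2)^{22}, X_0^{22}\rangle = \mathrm{tr}((P^2)^{22})$. The diagonal corrections are lower order: $\|\mathrm{diag}((P^2)^{22})\|_{\mathrm{op}} \le \max_i (P^2)_{ii} = O(n\rho^2)$.

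The first real step is a uniform noise bound over normalized clustering matrices. For any projection $X$ of rank at most $r_T = \Theta(1)$: $|\langle X, PE + EP\rangle| \le r_T\,\|PE + EP\|_{\mathrm{op}} \le 2r_T\|P\|_{\mathrm{op}}\|E\|_{\mathrm{op}} = O((n\rho)^{3/2})$, using $\|P\|_{\mathrm{op}} = O(n\rho)$ and $\|E\|_{\mathrm{op}} = O(\sqrt{n\rho})$ w.h.p.\ (valid since $n\rho \gg \log n$); $|\langle X, E^2 - \mathrm{diag}(E^2)\rangle| \le r_T(\|E\|_{\mathrm{op}}^2 + \max_i(E^2)_{ii}) = O(n\rho)$; and the $\mathrm{diag}(PE+EP)$ term contributes $O(\rho\sqrt{n\log n})$ since $\sum_i X_{ii} = \mathrm{tr}(X) = r_T$ and $\max_i |(PE+EP)_{ii}| = O(\rho\sqrt{n\log n})$ w.h.p. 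Altogether $\sup_{X}\,|\langle X, R^{22}\rangle| = O((n\rho)^{3/2}(\log n)^{1.01})$ w.h.p., the polylog absorbing tail constants; the essential point is that $\mathbb{E}[E^2]$ is exactly diagonal, so centering $E^2$ and removing $\mathrm{diag}(E^2)$ essentially coincide and the quadratic contribution is negligible relative to the cross term $PE+EP$.

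Given this, I read off the three thresholds. \textit{Over-fitting:} $S^{22} = (P^2)^{22} - \mathrm{diag}((P^2)^{22}) \preceq (P^2)^{22}$, so Weyl monotonicity gives $\lambda_i(S^{22}) \le \lambda_i((P^2)^{22})$, hence $\langle S^{22}, X\rangle \le \mathrm{tr}((P^2)^{22})$ for every projection $X$, while $\langle \hat S^{22}, X_0^{22}\rangle \ge \mathrm{tr}((P^2)^{22}) - O(n\rho^2) - \sup_X|\langle R^{22},X\rangle|$; subtracting, $\epsilon_{\mathrm{over}} := 2\sup_X|\langle R^{22}, X\rangle| + O(n\rho^2) = O((n\rho)^{3/2}(\log n)^{1.01})$. \textit{Under-fitting:} for a projection of rank $\le r-1$, $\langle S^{22},X\rangle \le \sum_{i=1}^{r-1}\lambda_i((P^2)^{22}) \le \mathrm{tr}((P^2)^{22}) - \Omega(n^2\rho^2)$; accounting again for the $R^{22}$ fluctuations on both sides, $\epsilon_{\mathrm{under}} = \Omega(n^2\rho^2)$. \textit{Estimation at $r_t = r$:} SPACL on $A^{11}$ at the true $r$ gives $\|\hat\Theta^{11} - \Theta^{11}\|_F = \tilde{O}(\sqrt{\log n/(n\rho)})$ (up to a column permutation) and $\|\hat B - B\|$ small; propagating through $\hat\Theta^{22} = A^{21}\hat\Theta^{11}((\hat\Theta^{11})^T\hat\Theta^{11})^{-1}\hat B^{-1}$, where $\|\hat\Theta^{11}((\hat\Theta^{11})^T\hat\Theta^{11})^{-1}\|_{\mathrm{op}} = O(1/\sqrt n)$ and $\|\hat B^{-1}\|_{\mathrm{op}} = O(1/\rho)$ (by $\lambda^*(B) = \Omega(\rho)$), and using $\|E^{21}\|_{\mathrm{op}} = O(\sqrt{n\rho})$ with $P^{21}\hat\Theta^{11}(\cdots)^{-1}\hat B^{-1} \approx \Theta^{22}$, yields $\|\hat\Theta^{22} - \Theta^{22}\|_F = \tilde{O}(1/\sqrt\rho)$, hence $\|\hat X_r^{22} - X_0^{22}\|_F = \tilde{O}(1/\sqrt{n\rho})$ and $\epsilon_{\mathrm{est}} := |\langle \hat S^{22}, \hat X_r^{22} - X_0^{22}\rangle| \le \|\hat S^{22}\|_{\mathrm{op}}\,\|\hat X_r^{22} - X_0^{22}\|_{*} = \tilde{O}(\sqrt{n\rho})$. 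Condition (iv) then holds with $\Delta := C(n\rho)^{3/2}(\log n)^{1.01}$: indeed $\epsilon_{\mathrm{est}} + \epsilon_{\mathrm{over}} \le \Delta$, and for $n$ large $\Delta < \epsilon_{\mathrm{under}} - \epsilon_{\mathrm{est}}$ because $(n^2\rho^2)/((n\rho)^{3/2}(\log n)^{1.01}) = \sqrt{n\rho}/(\log n)^{1.01} \to \infty$ under the hypothesis $\sqrt{n\rho}/(\log n)^{1+\xi}\to\infty$. All failure probabilities are $o(1)$, so Theorem~\ref{thm:matrcv_general} returns $\hat r = r$ w.h.p.

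The main obstacle is the uniform noise bound of the second paragraph: obtaining the sharp rate $(n\rho)^{3/2}$ for $\sup_X |\langle X, A^2 - \mathrm{diag}(A^2) - P^2 + \mathrm{diag}(P^2)\rangle|$ requires splitting $R$ into a rank-$r$ ``signal $\times$ noise'' interaction $PE + EP$ (exploiting $\mathrm{rank}(P) = r$ and the operator-norm bound on $E$) and a full-rank but small-operator-norm ``pure noise'' part $E^2 - \mathrm{diag}(E^2)$ (exploiting that $\mathbb{E}[E^2]$ is diagonal), while keeping every diagonal-removal term lower order; the secondary difficulty is propagating SPACL's row-wise estimation guarantees through the plug-in regression estimator $\hat\Theta^{22}$, whose error is amplified by $\|\hat B^{-1}\|_{\mathrm{op}} = O(1/\rho)$ and must be tamed via the singular-value condition $\lambda^*(B) = \Omega(\rho)$.
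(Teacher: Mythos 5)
Your proposal is correct in substance and follows the same skeleton as the paper's proof — verify the three error thresholds of Theorem~\ref{thm:matrcv_general} and check condition (iv) — but it replaces the paper's key concentration step with a genuinely different argument. The paper proves (Lemma~\ref{lem:A2_conc}) that $\langle \hat S - S, X\rangle = O(n\rho\sqrt{\log n})$ for each fixed $X$ independent of $A$, via a self-bounding-function inequality applied to $f=\sum_{i,j,k}A_{ij}A_{jk}X_{ik}/2$ (this is where the constraint $X\mathbf{1}=\mathbf{1}$ is used), and then takes a union bound over the $\Theta(1)$ candidates. You instead decompose $R = PE+EP+(E^2-\mathrm{diag}(E^2))-\mathrm{diag}(PE+EP)$ and bound $|\langle X,R\rangle|\le \mathrm{tr}(X)\,\|R\|_{\mathrm{op}}$ using $\|P\|_{\mathrm{op}}=O(n\rho)$ and $\|E\|_{\mathrm{op}}=O(\sqrt{n\rho})$. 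Your bound is coarser — $O((n\rho)^{3/2})$ versus the paper's $O(n\rho\sqrt{\log n})$ — but it is uniform over all low-rank projections and needs no independence between $X$ and $A$; and since the bottleneck for $\Delta$ is the estimation error $\epsilon_{\text{est}}=\tilde O((n\rho)^{3/2})$ in either treatment, the weaker over-fitting bound costs nothing in the final statement. Your under-fitting argument (Von Neumann plus $\lambda_{\min,\neq 0}((P^2)^{22})=\Omega(n^2\rho^2)$ from $\lambda_{\min}(\Theta^T\Theta)=\Omega(n)$ and $\lambda^*(B)=\Omega(\rho)$) and your estimation step (SPACL error propagated through the regression estimator, Davis--Kahan, $\|\hat X_r^{22}-X_0^{22}\|_F=\tilde O(1/\sqrt{n\rho})$) mirror Propositions~\ref{prop:underestimatemmsb} and~\ref{prop:estmmsb}.

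Two arithmetic slips should be fixed, though neither breaks the proof. First, you state $\|\hat\Theta^{11}-\Theta^{11}\|_F=\tilde O(\sqrt{\log n/(n\rho)})$; that is the per-row rate, and the Frobenius norm over $\Theta(n)$ rows is $\tilde O(1/\sqrt{\rho})$ (the paper's Lemma~\ref{lem:mmsb_para_error}) — your downstream figure $\|\hat\Theta^{22}-\Theta^{22}\|_F=\tilde O(1/\sqrt{\rho})$ is nonetheless the right one. Second, in the displayed bound $\epsilon_{\text{est}}\le\|\hat S^{22}\|_{\mathrm{op}}\|\hat X_r^{22}-X_0^{22}\|_*$ you have $\|\hat S^{22}\|_{\mathrm{op}}=O(n^2\rho^2)$, so the product is $\tilde O((n\rho)^{3/2})$, not $\tilde O(\sqrt{n\rho})$; the corrected value coincides with the paper's $\epsilon_{\text{est}}=O((n\rho)^{3/2}(\log n)^{1+\xi})$ and still satisfies condition (iv) with $\Delta=C(n\rho)^{3/2}(\log n)^{1.01}$, since $\epsilon_{\text{under}}-\epsilon_{\text{est}}=\Omega(n^2\rho^2)\gg\Delta$ exactly when $\sqrt{n\rho}/(\log n)^{1.01}\to\infty$.
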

\begin{sproof}
 We first show w.h.p., the underfitting and overfitting errors in Theorem~\ref{thm:matrcv_general} are
$
   \epsilon_{\text{under}} = \Omega(n^2\rho^2)$, $  
   %\epsilon_{\text{est}} & = O((n\rho)^{3/2}(\log n)^{1+\xi}),    \\
   \epsilon_{\text{over}} = O(n\rho\sqrt{\log n}).
$
To obtain $\epsilon_{\text{est}}$, we show that given the true cluster number, the convergence rate of the parameter estimates for the testing nodes obtained from the regression algorithm is the same as the convergence rate for the training nodes. This leads to $\epsilon_{\text{est}} = O((n\rho)^{3/2}(\log n)^{1+\xi})$. For convenience we pick $\xi=0.01$.  
 For details, see Section~\ref{subsec:proofmmsb} of the supplement. %\rd give thm num\bk
\end{sproof}

%\rd
\begin{remark}
\begin{enumerate}
    \item  
    Compared with \cite{fan2019simple} and \cite{han2019universal}, which consider the more general degree-corrected MMSB model, our consistency result holds for $\rho\to0$ at a faster rate.   
    
    \item A practical note: due to the constant in the estimation error being tedious to determine, in this case we only know the asymptotic order of the gap $\Delta$. As has been observed in many other methods based on asymptotic properties (e.g. \cite{bickel2016hypothesis, lei2016goodness,  wang2017, hu2017using}), performing an adjustment for finite samples often improves the empirical performance. In practice we find that if the constant factor in $\Delta$ is too large, then we tend to underfit. To guard against this, we note that at the correct $r$, the trace difference   $\delta_{r, r-1}:=\langle\hat{S}, \hat{X}_{r}\rangle-\langle\hat{S}, \hat{X}_{r-1}\rangle$ should be much larger than $\Delta$. We start with $\Delta=(n\rho)^{3/2}(\log n)^{1.01}$ and find $\hat{r}$ by Algorithm~\ref{alg:matr-cv}; if $\delta_{\hat{r}, \hat{r}-1}$ is smaller than $\Delta$, we reduce $\Delta$ by half and repeat the step of finding $r^*_j$ in Algorithm~\ref{alg:matr-cv} until $\delta_{\hat{r}, \hat{r}-1} > \Delta$. This adjustment is much more computationally efficient than bootstrap corrections and works well empirically.

\end{enumerate}
\end{remark}

%\bk
\begin{comment}
\begin{algorithm}[htbp!]
\textbf{Input:} $\{r_1, ..., r_T\}, \{l_{r_1}, ..., l_{r_T}\}, \Delta_0, L$\;
Let $\delta= \infty, \Delta=\Delta_0$\;
\While{$\delta > \Delta$}
{
$r' = \min \{r: l_r > l_\text{max} - \Delta\}$, denote $t': r_{t'}=r'$\;
\If{$r'=1$}{let $\Delta = \Delta * (1 - 1 / L)$\;
continue\;}
$\delta = |l_{r_{t'}}- l_{r_{t'-1}}|$\;
\If{$\delta \leq \Delta$}{
let $\Delta = \Delta * (1 - 1 / L)$
}
}
$\hat{r} = \min \{r: l_r > l_\text{max} - \Delta\}$
  \caption{$\text{GapRefine}$}
\textbf{Output:} $\hat{r}$
\label{alg:gaprefine}
\end{algorithm}
\end{comment}

%\subsection{Model selection for LDA}

\section{Numerical experiments}
\label{sec:exp}
In this section, we present extensive numerical results on simulated and real data by applying MATR and MATR-CV to different settings considered in Sections~\ref{sec:knownk} and \ref{sec:uknownk}. %In each setting, we compare the performance of our methods with a number of popular empirical or provable methods in the literature. %The detailed parameter settings for each model is deferred to the Supplement. Here we only show the high-level structure for ease of exposition.

%Specifically, we present MATR's performance on a synthetic SBM model and also real networks in Section~\ref{exp1},  a Gaussian Mixture Model in Section~\ref{exp2}. Finally Section~\ref{exp3} contains the performance of MATR-CV  on a SBM with unknown number of clusters. We use the Normalized Mutual Information (NMI) for measuring clustering performance. %For each experiment, the average and error bar over 5 replicates is reported.
%present our numerical experiments on the algorithm for two SDP's and spectral clustering on synthetic data set. 

%Study of effect of $k$, and $J$, training portion.
\subsection{MATR on SBM with known number of clusters}\label{exp1}
We apply MATR to tune $\lambda$ in \ref{SDP:YD} for known $r$. Since $\lambda\in [0,1]$ for \ref{SDP:YD}, we choose $\lambda\in \{0,\cdots, 20\}/20$ in all the examples. %, where $T$ was taken to be 20.  
For comparison we choose two existing data driven methods. The first method (CL,~\cite{cai2015robust}) sets $\lambda$ as the mean connectivity density in a subgraph determined by nodes with ``moderate'' degrees. %In detail, they first screen out the nodes with degrees above the $80$th percentile or below the $20$th percentile, and $\lambda$ is chosen as the mean density of the subgraph determined by the remaining nodes. We denote this method by \textit{CL}.
%The second way (AL)~\cite{amini2018semidefinite} uses the $\lambda$ that maximizes $\langle A-\lambda E_n, Z_\lambda Z_\lambda^T \rangle$, which is related to likelihood of $Z_\lambda$. 
The second is ECV (\cite{li2016network}) %(\rd remark 2, ~\cite{li2016network} \bk)
which uses CV with edge sampling to select the $\lambda$ giving the smallest loss on the test edges from a model estimated on training edges. We use a training ratio of 0.9 and the $L_2$ loss throughout. 

\noindent\textbf{Simulated data.}
Consider a strongly assortative SBM  as required by \ref{SDP:YD} for both equal sized and unequal sized clusters. The details of the experimental setting can be found in the \supp Section~\ref{sec:expdetail}. Standard deviations are calculated based on random runs of the each parameter setting. We present NMI comparisons for equal sized SBM ($n=400$, $r=4$) in Figure \ref{fig:tuning}(A), and unequal sized SBM (two with 100 nodes, and two with 50) in Figure \ref{fig:tuning}(B). In both, MATR outperforms others by a large margin as degree grows. %average degree larger than the graphs are generated with 100 nodes in each of the four clusters, and MATR has the best clustering accuracy as measured by NMI among all the three methods. %In Figure \ref{fig:tuning}(B), we consider graphs with unequal cluster sizes so that cluster 1 and 3 have 100 nodes each, while cluster 2 and 4 have 50 nodes each. In this case, while for small $\rho$ CL is slightly better, MATR outperforms the others by a larger margin for the other $\rho$ values. 
%\bk

\begin{figure}[ht]
	\begin{tabular}{@{\hspace{-2em}}c@{\hspace{-.6em}}c@{\hspace{-.6em}}c@{\hspace{-.6em}}c}
	\includegraphics[width=0.29\linewidth]{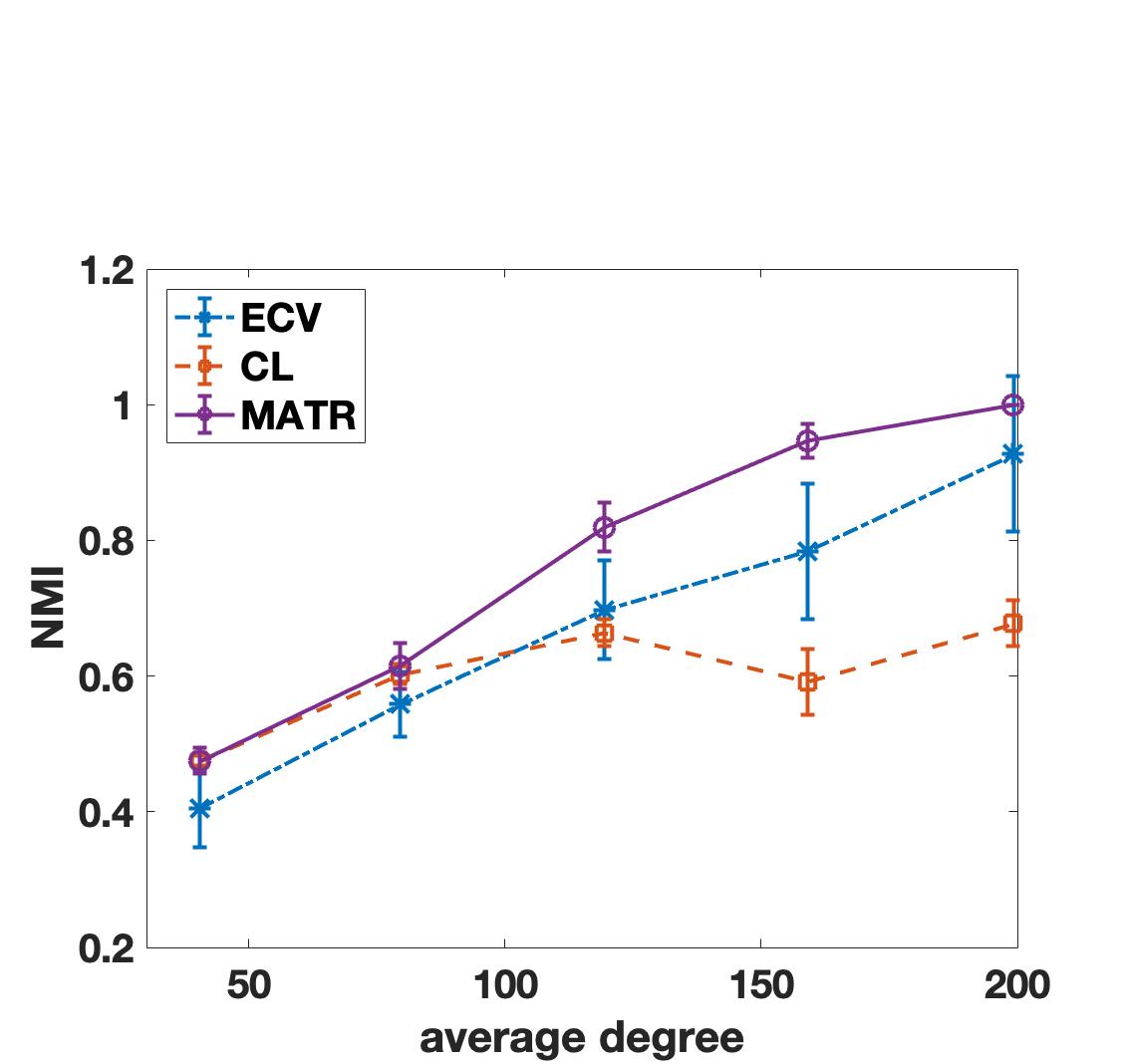} & \includegraphics[width=0.3\linewidth]{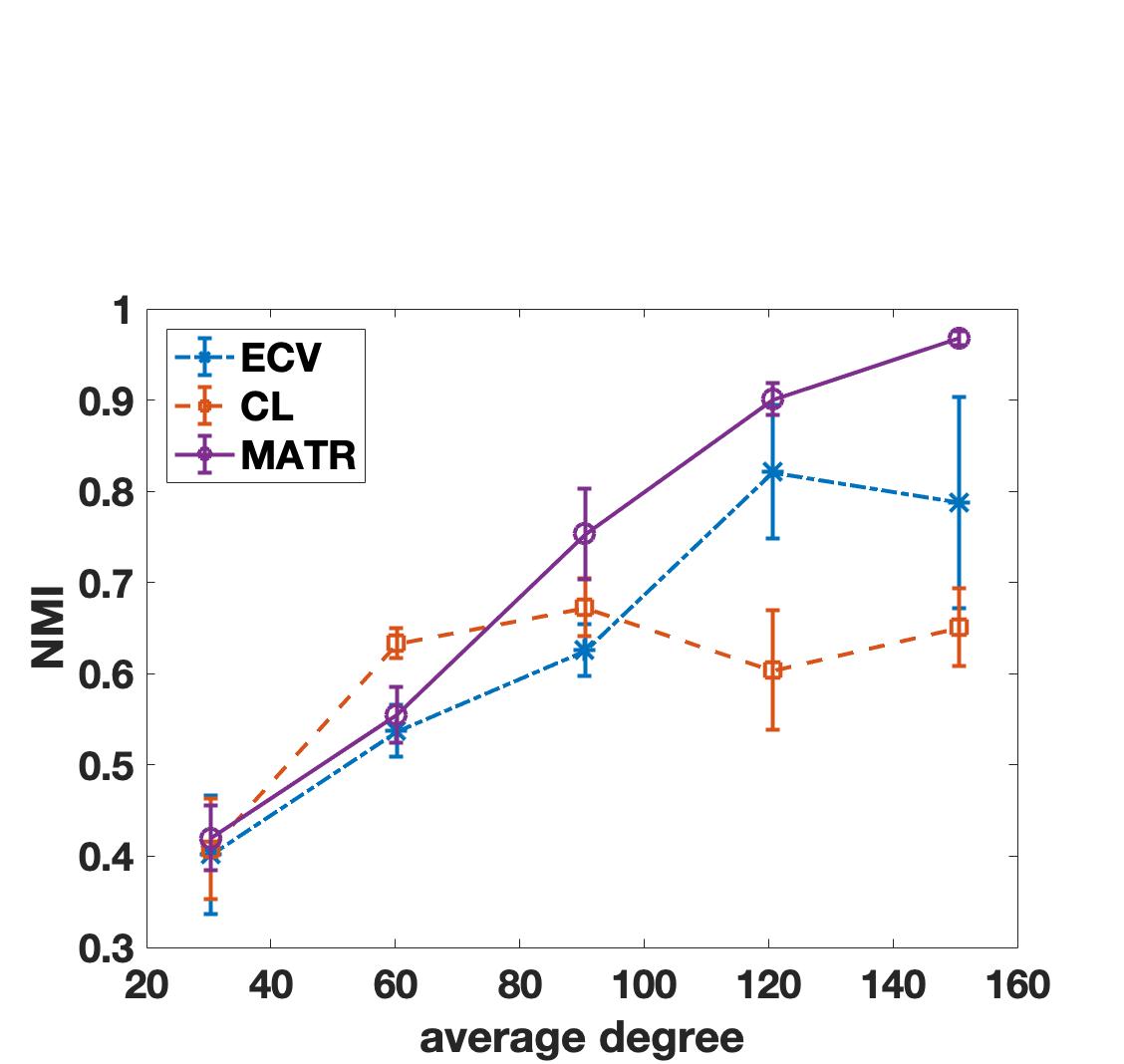}&
	\includegraphics[width=0.29\linewidth]{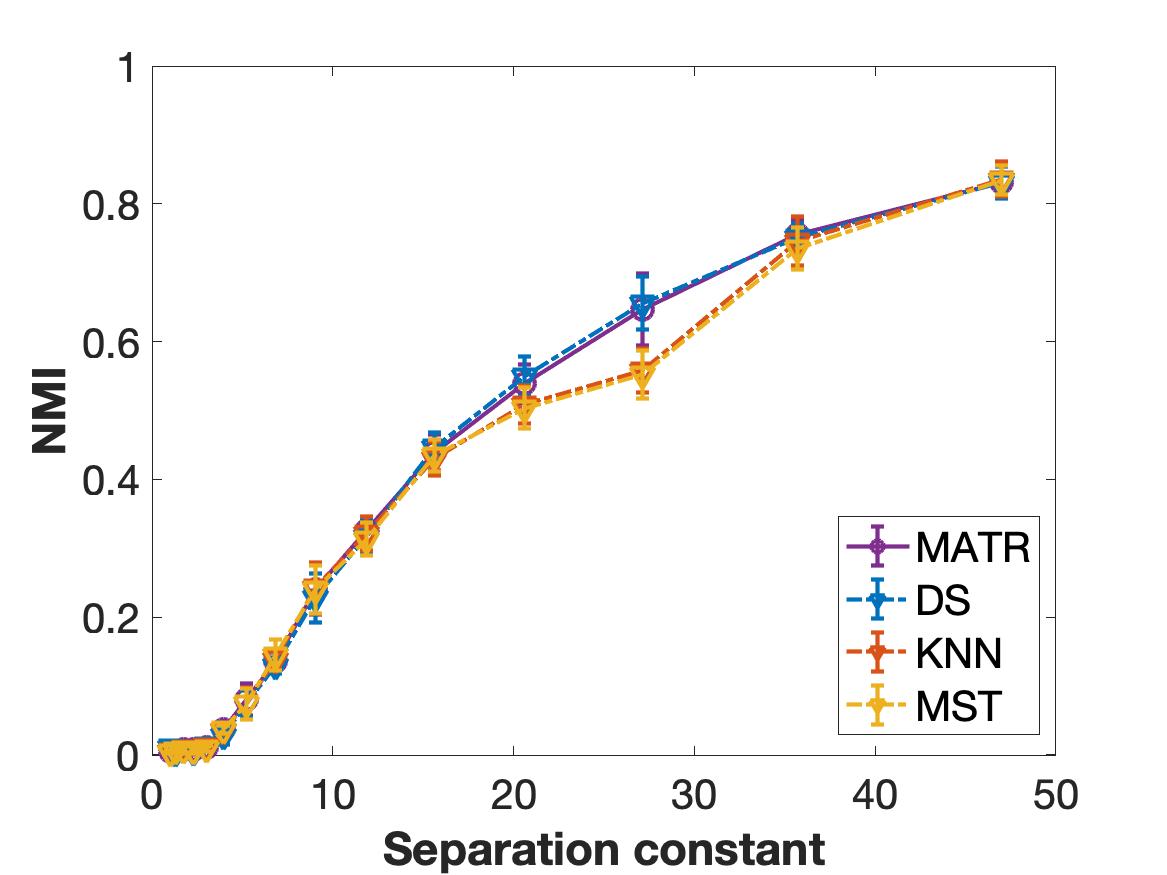} &
	\includegraphics[width=0.29\linewidth]{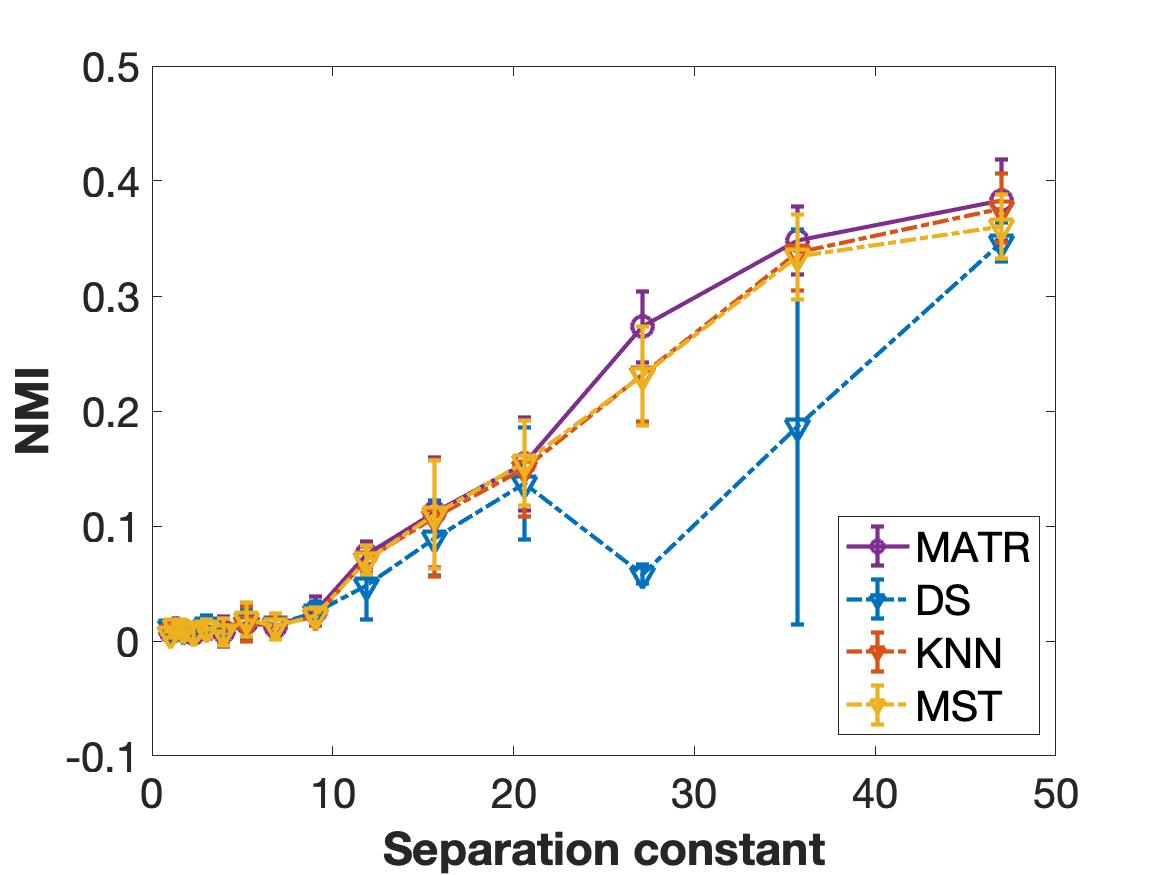}\\
	(A)&(B)&(C)&(D)
\end{tabular}
\caption{\label{fig:tuning}Comparison of NMI for tuning $\lambda$ for \ref{SDP:YD} for equal (A) and unequal sized (B) SBMs. Comparison of NMI for tuning bandwidth in spectral clustering for mixture models with (C) equal and (D) unequal mixing coefficients.}
\end{figure}

\iffalse
\begin{figure}[ht]
%\begin{subfigure}[t]{.5\textwidth}
%  \centering
%  \includegraphics[width=1\linewidth]{}
%  \vspace{-5mm}
%  \caption{Equal sized clusters}
%\end{subfigure}\hspace{-3mm}
\begin{subfigure}[t]{.5\textwidth}
  \centering
%  \includegraphics[width=1\linewidth]{} 
    \includegraphics[width=0.8\linewidth]{figures/new_figures/nmi_19_balance_bar_noal.jpg} 
 % \vspace{-5mm}
  \caption{NMI for equal sized clusters}
\end{subfigure}\hspace{-3mm}
%\begin{subfigure}[t]{.5\textwidth}
  %\centering
  %\includegraphics[width=1\linewidth]{} 
  %\vspace{-5mm}
  %\caption{Unequal sized clusters}
%\end{subfigure}\hspace{-3mm}
\begin{subfigure}[t]{.5\textwidth}
  \centering
  \includegraphics[width=0.8\linewidth]{figures/new_figures/nmi_19_unbalance_bar_noal.jpg}
% \vspace{-5mm}
   \caption{NMI for unequal sized clusters}
\end{subfigure}%
\caption{Comparison of MATR, CL ,ECV on tuning $\lambda$ for \ref{SDP:YD}. }
\label{fig:yd1}\vspace{-3mm}
\end{figure}
\fi
\noindent\textbf{Real data.}
We also compare MATR with ECV and CL on three real datasets: the football dataset \citep{girvan2002community}, the political books dataset and the political blogs dataset \citep{adamic2005political}. All of them are binary networks with $115,105$ and $1490$ nodes respectively. In the football dataset, the nodes represent teams and an edge is drawn between two teams if any regular-season games are played between them; there are $12$ clusters where each cluster represents the conference among teams, and games are more frequently between teams in the same conference. In the political blogs dataset, the nodes are weblogs and edges are hyperlinks between the blogs; it has $2$ clusters based on political inclination: "liberal" and "conservative". In the political books dataset, the nodes represent books and edges indicate co-purchasing on Amazon; the clusters represent $3$ categories based on manual labeling of the content: "liberal", "neutral" and "conservative".  The clustering performance of each method is evaluated by NMI and shown in Table \ref{T1}. MATR has performs the best out of the three methods  on the football dataset, and is tied with ECV on the political books dataset. MATR is not as good as CL on the poligical blogs dataset, but still outperforms ECV.  %It worth noting that the candidates $\lambda$ for cross-validation make differences on the performance of MATR and ECV. 

 \begin{table}[h]
\begin{subtable}[t]{0.50\textwidth}
\centering
\begin{tabular}{|l|l|l|l|l|}
\hline
         & MATR   & ECV    & CL      \\
\hline
Football & 0.924 & 0.895 & 0.883 \\
\hline
Political blogs & 0.258 & 0.142 & 0.423\\
\hline
Political books & 0.549 & 0.549 & 0.525 \\
\hline
\end{tabular}
\caption{NMI with tuning $\lambda$ on SBM}
\label{T1}
\end{subtable}%
\begin{subtable}[t]{0.50\textwidth}
\centering
\begin{tabular}{|l|l|l|l|l|}
\hline
                               & Truth & MATR-CV & ECV& BH \\
\hline                               
Football                       & 12    & 12      & 10    & 10 \\
\hline
Polblogs                       & 2     & 6       & 1     & 8  \\ \cline{1-1}
\hline
\multicolumn{1}{|l|}{Polbooks} & 3     & 6       & 2     & 4  \\ \cline{1-1}
\hline
\end{tabular}
\caption{Model selection with SBM}
\label{T2}
\end{subtable}%
\caption{Results obtained on real networks}
\end{table}
%{\bf Sensitivity analysis: } to investigate how sensitive the performance of node sampling is to the hyperparameter $J$ and $T$, we did sensitive analysis.
%\subsection{Tuning bandwidth in spectral clustering for mixture of Gaussions}\label{exp2}
\vspace{-1em}
\subsection{MATR on mixture model with known number of clusters}\label{sec:exp_matr_mixture}

We use MATR-CV to select the bandwidth parameter $\theta$ in spectral clustering applied to mixture data when given the correct number of clusters. In all the examples, our candidate set of $\theta$ is $\{t\alpha/20\}$ for $t=1,\cdots, 20$ and $\alpha=\max_{i,j}\|Y_i-Y_j\|_2$. We compare MATR with three other well-known heuristic methods. The first one was proposed by \citep{shi2008data} (DS), where, for each data point $Y_i$, the $5\%$ quantile of $\{\norm{Y_i-Y_j}_2, j=1,...,n\}$ is denoted $q_i$ and then $\theta$ is set to be $\frac{95\% \text{ quantile of } \{q_1,...,q_n\}}{\sqrt{95\% \text{ quantile of } \chi_d^2}}$. We also compare with two other methods in  \cite{von2007tutorial}: a method based on $k$-nearest neighbor (KNN) and a method based on minimal spanned tree (MST). For KNN, $\theta$ is chosen in the order of the mean distance of a point to its $k$-th nearest neighbor, where $k \sim \log(n)+1$. For MST, $\theta$ is set as the length of the longest edge in a minimal spanning tree of the fully connected graph on the data points. 

\noindent\textbf{Simulated data.}
We first conduct experiments on simulated data generated from a 3-component Gaussian mixture with $d=20$. %In Figure \ref{fig:sp}, MATR is compared to other methods on mixture of three equal covariance spherical gaussians. The sizes of clusters are equal in Figure~\ref{fig:sp}a and unequal in Figure~\ref{fig:sp}b. Data is generated using Eq~\ref{eq:mog} with $d = 20$, $\mu_a, a\in[3]$ have two non-zero coordinates. Also, $\mu_a = c\mu_{a,0}$ and large separation constant $c$ leads to larger separation between the population means. %\rd For space constraints we present the means and covariance matrices in the Supplement.\bk The $2$-d projection of $Y$ is shown in Figure~\ref{fig:sp}a,c and NMI with increasing cluster separation (increasing $c$ on $X$ axis) is shown in Figure~\ref{fig:sp}b,d.  The results show that for equal sized clusters, all methods perform equally. However, in the unbalanced setting, MATR leads to a better $\theta$ than all other methods. \bk
  The means are multiplied by a separation constant which controls clustering difficulty (larger, the better).  Detailed descriptions of the parameter settings can be found in Section~\ref{sec:expdetail} of the \supp. $n=500$ datapoints are generated for each mixture model and random runs are used to calculate standard deviations for each parameter setting. In Figure~\ref{fig:tuning} (A) and (B) we plot NMI on the $Y$ axis against the separation along the $X$ axis for mixture models with equal and unequal mixing coefficients respectively. For all these settings, MATR performs as well or better than the best among DS, KNN and MST. %For equal sized clusters, MATR and DS performs slightly better than the other two methods. In the setting with unequal sized clusters, DS does not perform as well as the other three methods, with MATR being the slightly better one out of the three.  

\noindent\textbf{Real data.}
We also test MATR for tuning $\theta$ on a real dataset: Optical Recognition of Handwritten Digits Data Set\footnote{https://archive.ics.uci.edu/ml/datasets/Optical+Recognition+of+Handwritten+Digits}. We use a copy of the test set provided by scikit-learn \citep{scikit-learn}, which consists of 1797 instances of 10 classes. We standardize the dataset before clustering. With $10$ clusters, MATR, DS, KNN and MST yield cluster results with NMI values $0.64$, $0.45$, $0.64$ and $0.62$ respectively. In other words, MATR performs similarly to KNN but outperforms DS and MST. We also visualize and compare the clustering results by different methods in 2-D using tSNE \citep{maaten2008visualizing}, which can be found in Section~\ref{sec:expdetail} of the \supp.

\subsection{Model selection with MATR-CV on SBM}\label{exp3}
%\subsubsection*{Synthetic data}
%Under this setup, MATR-CV is used for model selection with SBM. 
We make comparisons among MATR-CV, Bethe-Hessian estimator (BH) \citep{le2015estimating} and ECV \citep{li2016network}. For ECV and MATR-CV, we consider $r\in\{1,\cdots \sqrt{n}\}$, where $n$ is the number of nodes.%; then we select the optimal number of clusters based on their criterion respectively. 

\noindent\textbf{Simulated data.}
We simulate networks from a $4$-cluster strongly assortative SBM with equal and unequal sized blocks (detailed in Section~\ref{sec:expdetail}  of the \supp). 
\iffalse
We first apply all three methods on synthetic data. The data are generated from a strong assortative matrix $B$ as shown below
$$B = \rho\times \begin{bmatrix} 
0.8 & 0.5 & 0.3 & 0.3 \\
0.5 & 0.8 & 0.3 & 0.3 \\
0.3 & 0.3 & 0.8 & 0.5 \\
0.3 & 0.3 & 0.5 & 0.8 
\end{bmatrix}.$$
We select $5$ sparsity constants $\rho$ from $0.2$ to $0.6$ with even spacing. For each $\rho$, we randomly generate $5$ networks to obtain standard deviation for each method. 
\fi
In Figure \ref{fig:matr-cv}, we show NMI on $Y$ axis vs. average degree on $Y$ axis.   In Figure \ref{fig:matr-cv}(a) and (b) we respectively consider equal sized ($4$ clusters of size $100$) and unequal sized networks (two with $120$ nodes and two with $80$ nodes). In all cases, MATR-CV has the highest NMI. %, where the graphs are generated with $100$ nodes in each of the four clusters.
%In Figure \ref{fig:matr-cv}(b), we consider graphs with unequal cluster sizes so that cluster 1 and 3 have $120$ nodes each, and cluster 2 and 4 have $80$ nodes each. MATR-CV has the highest NMI in both cases. 
%It is worth noting that exact recovery percentage may not have the same trend as NMI (for example on the last case of unbalanced setting, BH has a lower exact recovery percentage while a higher NMI compared with ECV) because BH picks $3$ clusters all the time and ECV picks more $2$ clusters than $4$ clusters, and picking $3$ clusters has a higher NMI than picking $2$ clusters.
A table with median number of clusters selected by each method can be found in Section~\ref{sec:expdetail} of the \supp.
%The summary table of exact recovery times with average degree can be found on Section~\ref{sec:expdetail} of the supplement.

\noindent\textbf{Real data.}
The same set of methods are also compared on three real datasets: the football dataset, the political blogs dataset and the political books dataset. The results are shown in Table \ref{T2}, where MATR-CV finds the ground truth for the football dataset.
\begin{figure}[htbp!]
%\vspace{-2em}
\begin{subfigure}[t]{.5\textwidth}
  \centering
  \includegraphics[width=.6\linewidth]{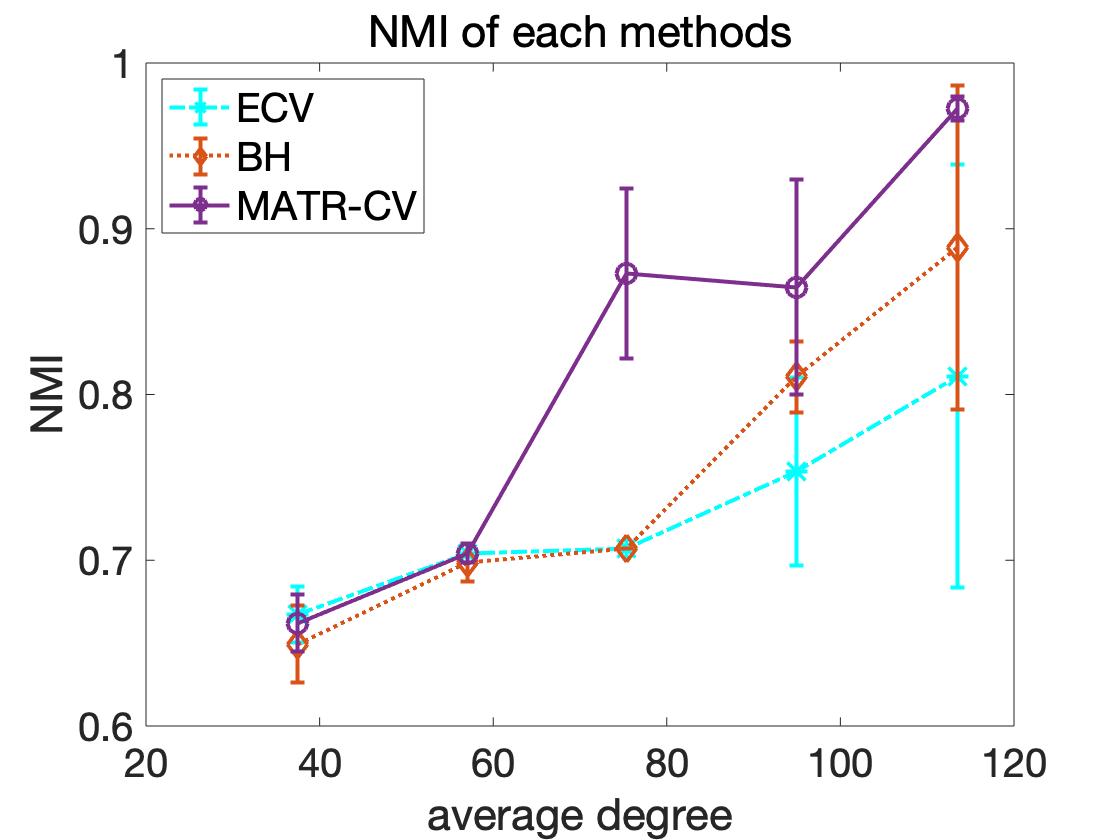} 
 %\vspace{-5mm}
  \caption{NMI for equal sized case}
\end{subfigure}%\hspace{-4mm}
\begin{subfigure}[t]{.5\textwidth}
  \centering
  \includegraphics[width=.6\linewidth]{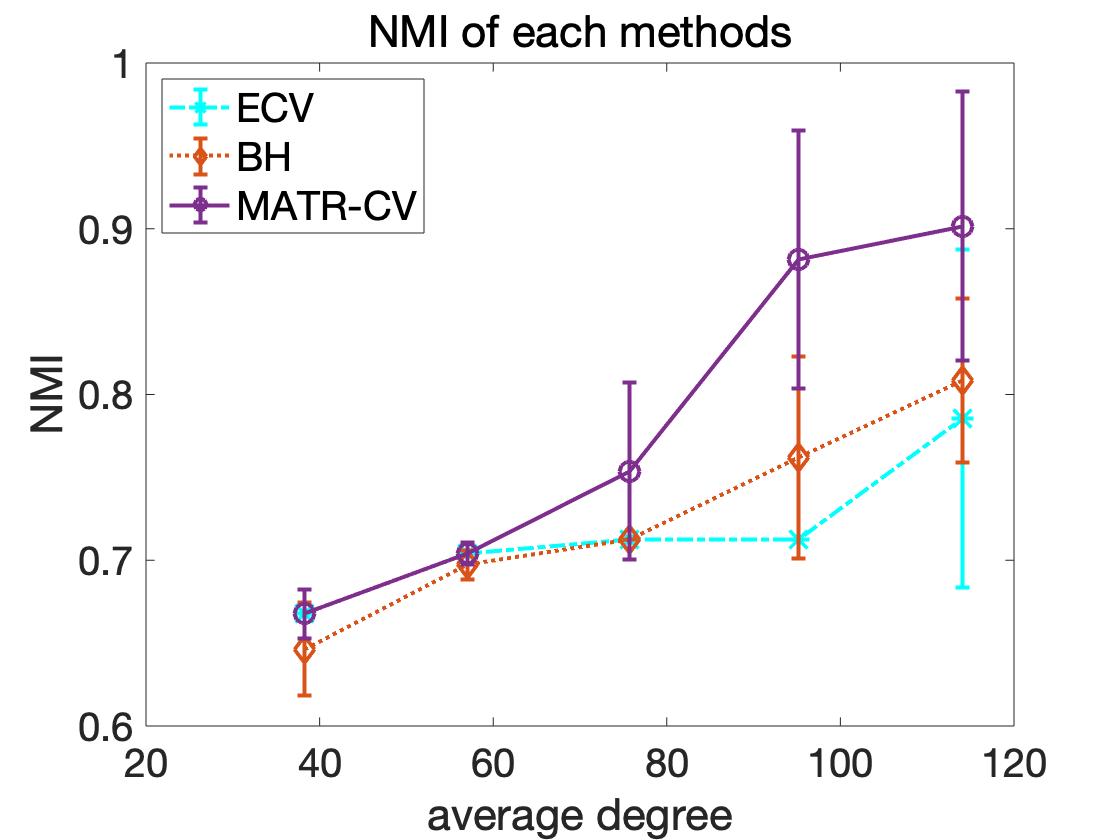} 
 %\vspace{-5mm}
  \caption{NMI for unequal sized case}
\end{subfigure}\\%\hspace{-4mm}
\caption{Comparison of NMI with model selection for equal and unequal sized cases.}
\label{fig:matr-cv}
\end{figure}
\subsection{Model selection with MATR-CV on MMSB}\label{sec:exp_matrcv_mmsb}
We compare MATR-CV with Universal Singular Value Thresholding (USVT) \citep{chatterjee2015matrix}, ECV \citep{li2016network}  and SIMPLE \citep{fan2019simple} in terms of doing model selection with MMSB. For ECV and MATR-CV, we consider the candidate set $r\in\{1,2,\cdots, \lfloor \hat{\rho} n \rfloor \}$, where $\hat{\rho}=\sum_{i<j} A_{ij}/{n\choose 2}$.

\noindent\textbf{Simulated data.}
We first apply all the methods to simulated data. We consider $B=\rho\times\{(p-q) I_{r}+ q E_{r}\}$. Following \citep{mao2018overlapping}, we sample $\Theta_i \sim \text{Dirichlet}(\alphav)$ and $\alphav = \mathbf{1}_{r} / r$. We generate networks with $n=2000$ nodes with $r=4$ and $r=8$ respectively. We set $p=1, q=0.1$ when $r=4$; $p=1, q=0.01$ when $r=8$ for a range of $\rho$.%and vary $\rho$ from $0.015$ to $0.3$ to balance the difficulty of clustering as it is generally more difficult to have exact recovery for more clusters. In Figure~\ref{fig:matrcv_mmsb},
In Table~\ref{tab:MATR_CV_cluster_4} and~\ref{tab:MATR_CV_cluster_8}, we report the fractions of exactly recovering the true cluster number $r$ over 40 runs for each method across different average degrees. We observe that in both $r=4$ and $r=8$ cases, MATR-CV outperforms the other three methods with a large margin on sparse graphs. The method SIMPLE consistently underfits in our sparsity regime, which is  understandable, since their theoretical guarantees hold for a dense degree regime.
\iffalse
\begin{figure}[htbp!]
%\vspace{-2em}
\begin{subfigure}[t]{.5\textwidth}
  \centering
%  \includegraphics[width=1\linewidth]{}
  \includegraphics[width=1\linewidth]{NIPS2019_UL/}
%  \vspace{-5mm}
  \caption{MMSB with 4 clusters}
\end{subfigure}%\hspace{-2mm}
\begin{subfigure}[t]{.5\textwidth}
  \centering
  \includegraphics[width=1\linewidth]{NIPS2019_UL/} 
 %\vspace{-5mm}
  \caption{MMSB with 8 clusters}
\end{subfigure}\\%\hspace{-4mm}
\caption{Comparison of three different model selection methods for MMSB.}
\label{fig:matrcv_mmsb}
\end{figure}
\fi

 \begin{table}[htb]
\begin{subtable}[t]{0.50\textwidth}
\centering
\begin{adjustbox}{width=\columnwidth,center}
\begin{tabular}[t]{|l|l|l|l|l|l|l|}
\hline
$\rho$   & 0.01 & 0.03  & 0.06 & 0.08 & 0.11 & 0.13 \\
\hline       
MATR-CV & 0.35    &  0.83 &  0.93 & 1 & 1 & 1 \\
\hline
USVT & 0  & 0&  1     & 1  & 1 & 1 \\
\hline
ECV  & 0    & 0 & 0   & 0.95 & 1 & 1\\
%\hline
%SIMPLE  & 0   & 0  & 0 & 0 & 0     & 0  \\
\hline
\end{tabular}
\end{adjustbox}
\caption{Exact recovery fractions for $4$ clusters}
\label{tab:MATR_CV_cluster_4}
\end{subtable}%
\begin{subtable}[t]{0.50\textwidth}
\centering
\begin{adjustbox}{width=\columnwidth,center}
\begin{tabular}[t]{|l|l|l|l|l|l|l|}
\hline
$\rho$   & 0.02 & 0.05   & 0.09 & 0.12 & 0.16 & 0.21\\
\hline       
MATR-CV & 0.10   & 0.43 & 0.95     & 0.93 & 0.95 & 1   \\
\hline
USVT & 0  & 0& 0.58     & 1  & 1 & 1\\
\hline
ECV  & 0   & 0 & 0    & 0.93 & 1 & 1\\
%\hline
%SIMPLE  & 0   & 0  & 0 & 0 & 0     & 0  \\
\hline
\end{tabular}
\end{adjustbox}
\caption{Exact recovery fractions for $8$ clusters}
\label{tab:MATR_CV_cluster_8}
\end{subtable}%
\caption{Results of MMSB on synthetic data}
\end{table}
%   Columns 1 through 9

%     0.0150    0.0328    0.0506    0.0684    0.0862    0.1041    0.1219    0.1397    0.1575

%   Columns 10 through 16

%     0.1753    0.1931    0.2109    0.2287    0.2466    0.2644    0.2822

\noindent\textbf{Real data.}
We also test MATR-CV with MMSB on a real network, the political books network, which contains 3 clusters. Here fitting a MMSB model is reasonable since each book can have mixed political inclinations, e.g. a ``conserved'' book may be in fact mixed between ``neutral'' and ``conservative''. With MATR-CV, we found $3$ clusters. With USVT,  ECV and SIMPLE we found fewer than $3$ clusters. 

%\vspace{-2em}
%\vspace{-1em}
\section{Discussion}
\label{sec:discussion}
%\vspace{-1em}
%\rd
%Add discussion on other models: high-d mixture, DCSBM, topic model; discuss node sampling vs edge sampling; future work?
%\bk

Clustering data, both in i.i.d and network structured settings have received a lot of attention both from applied and theoretical communities. However, methods for tuning hyperparameters involved in clustering problems are mostly heuristic. In this paper, we present MATR, a provable MAx-TRace based hyperparameter tuning framework for general clustering problems. We prove the effectiveness of this framework for tuning SDP relaxations for community detection under the block model and for learning the bandwidth parameter of the gaussian kernel in spectral clustering over a mixture of subgaussians. Our framework can also be used to do model selection using a cross validation based extension (MATR-CV) which can be used to consistently estimate the number of clusters in blockmodels and mixed membership blockmodels. Using a variety of simulation and real experiments we show the advantage of our method over other existing heuristics. 

The framework presented in this paper is general and can be applied to doing model selection or tuning for broader model classes like degree corrected blockmodels ~\citep{karrer2011dcbm}, since there are many exact recovery based algorithms for estimation in these settings~\citep{chen2018}.  We believe that our framework can be extended to the broader class of degree corrected mixed membership blockmodels~\citep{jin2017estimating} which includes the topic model~\citep{mao2018overlapping}. However, the derivation of the estimation error $\epsilon_{\text{est}}$ involves tedious derivations of parameter estimation error, which has not been done by existing works. 
Furthermore, even though our work uses node sampling, we believe we can extend the MATR-CV framework to get consistent model selection  for other sampling procedures like edge sampling~\citep{li2016network}. %We did not demonstrate MATR-CV for model selection in subgaussian mixtures, owing to the fact that there are a plethora of techniques for this problem. But we believe MATR-CV can be applied for model selection in this case, for standard and high dimensional mixture of subgaussians~\citep{el2010information}.

%both known and unkn number of clusters case and unknown number of clusters case. We show the application of this framework to tune lagrange multiplier hyperparameter in two SDP algorithms for community detection on graph data, and also to tune bandwidth hyperparameter in spectral clustering on data points. We provide theoretical guarantees for all above situations and show our algorithm consistently outperforms other commonly used  hyperparameter tuning procedures on various simulated data.

\appendix
\section*{}
\begin{center}
{\large\bf Appendix}
\end{center}
This appendix contains detailed proofs of theoretical results in the main paper  ``A Unified Framework for Tuning Hyperparameters in Clustering Problems'', additional theoretical results, and detailed description of the experimental parameter settings. 
We present proofs for MATR and MATR-CV in Sections~\ref{sec:proofformatr} and Sections~\ref{sec:proofformatrcv} respectively. Sections~\ref{sec:merge1} %and~\ref{sec:merge2} 
also contains additional theoretical results on the role of the hyperparameter in merging clusters in~\ref{SDP:YD} and~\ref{SDP:BW} respectively.
Finally, Section~\ref{sec:expdetail} contains detailed parameter settings for the experimental results in the main paper.

\section{Additional theoretical results and proofs of results in Section~\ref{sec:knownk}}\label{sec:proofformatr}
\subsection{Proof of Theorem \ref{thm:general_setup}}
\label{sec:thm1}

\begin{proof}
If for tuning parameter $\lambda$, we have $\langle \hat{S}, \hat{X}_{\lambda} \rangle \geq \langle S, X_0 \rangle -\epsilon$, then 
	\begin{align}\label{eq:assump}
	    \langle S, \hat{X}_{\lambda} \rangle \geq \langle S, X_0 \rangle - |\langle \hat{S} - S, \hat{X}_{\lambda} \rangle|- \epsilon.
	\end{align} 	
	First we will prove that this immediately gives an upper bound on $\|\hat{X}_\lambda-X_0\|_F$.
	We will remove the subscript $\lambda$ for ease of exposition.
	Denote $\omega_k=\langle X_0, \hat{X}_{C_k, C_k}\rangle$, $\alpha_{ij} = \frac{\langle E_{i, j}, \hat{X}\rangle}{m_k(1-\omega_k)},$ when $\omega_k<1$ and $0$ otherwise, and off-diagonal set for $k$th cluster $C_{k}^c$ as $\{(i,j)|i\in C_k, j\notin C_k\}$. Then we have 
	\begin{equation}
	\begin{split}
	\langle S, \hat{X} \rangle &=\sum_{k=1}^{r_0} a_{kk} \langle E_{C_k, C_k}, \hat{X} \rangle + \sum_{k=1}^{r_0}\sum_{(i,j)\in C_{k}^c} a_{ij} \langle E_{i,j}, 
	\hat{X} \rangle\\
	&=\sum_{k=1}^{r_0} a_{kk} m_k \omega_k + \sum_{k=1}^{r_0} m_k (1-\omega_k) \sum_{(i,j)\in C_{k}^c} a_{ij} \alpha_{ij}\\
	&=\sum_{k=1}^{r_0} m_k \omega_k(a_{kk}-\sum_{(i,j)\in C_{k}^c} a_{ij}\alpha_{ij}) + \sum_{k=1}^{r_0} m_k \sum_{(i,j)\in C_{k}^c} a_{ij} \alpha_{ij}
	\end{split}
	\end{equation}
	
	Since $\langle S, X_0\rangle=\sum_k m_k a_{kk}$, by~\eqref{eq:assump}, $\langle S, \hat{X} \rangle \geq \sum_k m_k a_{kk} -|\langle R, \hat{X} \rangle|- \epsilon$, we have
	$$\sum_k m_k \omega_k(a_{kk}-\sum_{(i,j)\in C_{k}^c} a_{ij}\alpha_{ij}) + \sum_k m_k \sum_{(i,j)\in C_{k}^c} a_{ij} \alpha_{ij} \geq \sum_k m_k a_{kk} -|\langle R, \hat{X} \rangle|- \epsilon.$$
	
	Note that, since $S$ is weakly assortative, $a_{kk}-\sum_{(i,j)\in C_{k}^c} a_{ij}\alpha_{ij}$ is always positive because $\sum_{(i,j)\in C_{k}^c} \alpha_{ij} \leq 1$.
	
	Denote $\epsilon' =|\langle R, \hat{X} \rangle|+\epsilon $, $\beta_k = \frac{m_k (a_{kk}- \sum_{C_{k}^c} \alpha_{ij}a_{ij})}{\sum_k m_k (a_{kk} - \sum_{C_{k}^c} \alpha_{ij}a_{ij})}$,
	\begin{align*}
	   \sum_k m_k \omega_k(a_{kk}-\sum_{(i,j)\in C_{k}^c} a_{ij}\alpha_{ij})&\geq \sum_k m_k (a_{kk} - \sum_{(i,j)\in C_{k}^c} \alpha_{ij}a_{ij}) -\epsilon'\\
	   \sum_k \beta_k \omega_k&\geq 1 -\frac{\epsilon'}{ \sum_k m_k (a_{kk} - \sum_{C_{k}^c} \alpha_{ij}a_{ij}) }\\
	   \sum_k \beta_k (1-\omega_k) &\leq \frac{\epsilon'}{ \sum_k m_k (a_{kk} - \sum_{C_{k}^c} \alpha_{ij}a_{ij}) }.\\
	   \sum_k (1-\omega_k) &\leq \sum_k \frac{\beta_k}{\beta_{\min}} (1-\omega_k)\leq \frac{\epsilon'}{ \beta_{\min}\sum_k m_k (a_{kk} - \sum_{C_{k}^c} \alpha_{ij}a_{ij}) },
	\end{align*}
	where $\beta_{\min}=\min_k\beta_k$.
%	$$\$$
%	$$\sum_k \beta_k T_k\geq 1 -\frac{\epsilon ^*}{ \sum_k m_k (a_{kk} - \sum_{l\neq k} \alpha_{kl}a_{kl}) },$$
%	$$\sum_k \beta_k (1-T_k) \leq \frac{\epsilon ^*}{ \sum_k m_k (a_{kk} - \sum_{l\neq k} \alpha_{kl}a_{kl}) }.$$
	Since $\text{trace}(\hat{X}) = \text{trace}(X_0)$,
	\begin{equation}
	\begin{aligned}
	\norm{\hat{X} - X_0}_F^2 &= \text{trace}((\hat{X}-X_0)^T(\hat{X}-X_0))\nonumber\\
	&= \text{trace}(\hat{X}+X_0 - 2\hat{X}X_0)\nonumber\\
	&= 2\text{trace}(X_0) - 2\sum_{k}\langle X_0,\hat{X}_{C_k,C_k}\rangle\nonumber\\
	&= 2 \sum_k(1-\omega_k)\leq \frac{2\epsilon'}{\min_k {m_k (a_{kk} - \sum_{C_{k}^c}   \alpha_{ij}a_{ij})}}\nonumber  \\
	&\leq  \frac{2\epsilon'}{n\pi_{\min}\min_k (a_{kk} - \max_{C_{k}^c}   a_{ij})} = \frac{2\epsilon'}{\tau}. \label{eq:normub}
	\end{aligned}
	\end{equation}
% 	If Algorithm $\mathscr{A}$ outputs a $\hat{Z}_{\lambda_*}$, and for the corresponding normalized clustering matrix  $\hat{X}_{\lambda_*}$  Eq~\ref{eq:assump} holds, then 
% 	if $\lambda_0$ is in the range of tuning parameters considered by Algorithm~\ref{alg:kk}, we know that for the $\lambda_*$ returned by MATR, 
	Now consider the $\lambda_*$ returned by MATR, 
%	\begin{align*}
% 	\begin{align*}
% 	\langle \hat{S}, \hat{X}_{\lambda_*} \rangle \geq \langle \hat{S}, \hat{X}_{\lambda_0} \rangle &\geq \langle S, X_0 \rangle - |\langle \hat{S} - S, \hat{X}_{\lambda_0} \rangle|- \epsilon\\
% 	&\geq \langle S, X_0 \rangle-\underbrace{\sup_{X\in \mathcal{X}_r}|\langle X, R\rangle|-\epsilon}_{\epsilon_*}
% 	\end{align*}
	
	$$	\langle \hat{S}, \hat{X}_{\lambda_*} \rangle \geq \langle \hat{S}, \hat{X}_{\lambda} \rangle \geq \langle S, X_0 \rangle - \epsilon.$$
	Then, following the above argument and from the condition from the theorem, 
	$$\|X_{\lambda_*}-X_0\|_F^2\leq \frac{2\epsilon'}{n\pi_{\min}\min_k (a_{kk} - \max_{C_{k}^c}   a_{ij})} \leq \frac{2}{\tau}(\epsilon+\sup_{X\in \mathcal{X}_{r_0}} |\langle X, R\rangle |).$$
	
\end{proof}

%\section{Proofs for \ref{SDP:YD}}
\subsection{Range of $\lambda$ for merging clusters in \ref{SDP:YD}}
\label{sec:merge1}
\begin{Proposition}\label{prop:sdp1merge}
	Let $\tilde{X}$ be the optimal solution of \ref{SDP:YD} for $A\sim SBM(B,Z_0)$ with $\lambda$ satisfying
	\begin{equation}
	\begin{split}
	\max_{k\neq \ell}B^*_{k,\ell} + \Omega(\sqrt{\frac{\rho \log n}{n\pi_\text{min}}})\leq\lambda  \leq \min_k B_{kk}^* -\max_{k, \ell=r-1,r} \frac{m_{\ell}}{n_k}(B_{\ell,\ell}-B_{r,r-1})+ O(\sqrt{\frac{\rho \log n}{n\pi_{\max}^2}}),
	\end{split}\nonumber
	\end{equation}
	then $\tilde{X}=X^*$ with probability at least $1-\frac{1}{n}$, where $X^*$ is the unnormalized clustering matrix which merges the last two clusters, $B^*$ is the  corresponding $(r-1)\times(r-1)$ block probability matrix. 
	% $X^*=Z^*\text{diag}(m^*)^{-1}Z^{*T}$, $Z^*=Z(I-E_{K,K}+E_{K,K-1})$($E_{i,j}$ is the matrix only with $E_{ij}=1$ being nonzero), and $m^*=\text{diag}(Z^{*T}Z^*),$ i.e., 
\end{Proposition}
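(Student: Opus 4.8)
The plan is to prove this by a dual-certificate (KKT) argument: I would show that $X^*$ --- the unnormalized clustering matrix of the $(r-1)$-block partition obtained from $Z_0$ by merging $C_{r-1}$ and $C_r$ --- is the unique maximizer of \ref{SDP:YD} whenever $\lambda$ lies in the stated window, on an event of probability at least $1-1/n$. Feasibility of $X^*$ is immediate: $X^*=Z^*(Z^*)^T$ for the binary membership matrix $Z^*$ of the merged partition, so $X^*\succeq 0$, $X^*\geq 0$ entrywise, and $X^*_{ii}=1$. Writing the objective of \ref{SDP:YD} as $\langle A-\lambda E_n,X\rangle$, optimality of $X^*$ is certified by exhibiting a PSD matrix $\Lambda^*$, an entrywise-nonnegative matrix $Q^*$, and a diagonal matrix $D^*$ with
$$A-\lambda E_n+\Lambda^*+Q^*=D^*,\qquad \Lambda^*X^*=0,\qquad Q^*\odot X^*=0;$$
moreover, if $\ker\Lambda^*$ equals exactly $\mathrm{range}(X^*)$ (i.e. the nonzero eigenvalues of $\Lambda^*$ are bounded away from $0$) and $Q^*_{ij}>0$ whenever $i,j$ belong to distinct merged clusters, then $X^*$ is the \emph{unique} optimum.

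\textbf{Constructing the certificate.} On the within-merged-cluster blocks, $X^*\equiv 1$, so complementary slackness forces $Q^*$ to vanish there; on the off-cluster blocks $X^*\equiv 0$, so $Q^*$ is free. I would therefore set $D^*$ to be the diagonal of $A-\lambda E_n+\Lambda^*+Q^*$, and on each off-cluster block choose $Q^*$ to cancel the population value of $A$ up to a rank-one (all-ones) correction within that block; these corrections, together with the diagonal $D^*$, are tuned so that the row sums of $\Lambda^*$ over every merged cluster vanish, which is exactly $\Lambda^*X^*=0$. What remains is
$$\Lambda^*=M-(A-P)+(\text{block rank-one corrections}),$$
where $M$ is a deterministic block-structured PSD matrix with $MX^*=0$ whose smallest eigenvalue on $\mathrm{range}(X^*)^\perp$ is, up to constants, $n\pi_{\min}$ times the operative density gap. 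This is precisely where the two-sided bound on $\lambda$ enters: the lower bound $\lambda\geq\max_{k\neq\ell}B^*_{k\ell}+\Omega(\sqrt{\rho\log n/(n\pi_{\min})})$ produces a strictly positive deterministic margin along every direction separating two distinct merged clusters (preventing any \emph{further} merging, and --- via the strict inequality --- making $Q^*>0$ on off-blocks), while the upper bound $\lambda\leq\min_k B^*_{kk}-\max_{k,\ell\in\{r-1,r\}}\frac{m_\ell}{n_k}(B_{\ell\ell}-B_{r,r-1})+O(\sqrt{\rho\log n/(n\pi_{\max}^2)})$ produces a positive margin along the single direction that would split $C_{r-1}\cup C_r$ back apart (forcing the merge); the discount $\frac{m_\ell}{n_k}(B_{\ell\ell}-B_{r,r-1})$ appears because the merged block is inhomogeneous, $B_{r-1,r-1}$, $B_{r,r}$, $B_{r,r-1}$ need not coincide.

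\textbf{Main obstacle and conclusion.} The heart of the proof, and the step I expect to be hardest, is verifying $\Lambda^*\succeq 0$ with exactly the kernel $\mathrm{range}(X^*)$, together with $Q^*\geq 0$. The plan is to split $\mathbb{R}^n=\mathrm{range}(X^*)\oplus\mathrm{range}(X^*)^\perp$; on the first summand $\Lambda^*$ vanishes by construction, so I only need to lower bound the quadratic form on the complement, where $M$ contributes at least $\Omega(n\pi_{\min}\cdot\text{gap})$ and must dominate the perturbation. For that I would invoke $\|A-P\|_{\mathrm{op}}=O(\sqrt{n\rho})$ with high probability (matrix Bernstein, or the Bandeira--van Handel bound --- this is where the implicit condition $n\rho\gg\log n$ is used), together with Bernstein bounds on the per-block row sums of $A-P$, which are of order $\sqrt{n\rho\log n}$ per block and hence of relative order $\sqrt{\rho\log n/(n\pi)}$ --- exactly the slack terms added to $\lambda$ in the hypothesis. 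Nonnegativity of $Q^*$ off-blocks follows similarly from $\lambda\geq\max_{k\neq\ell}B^*_{k\ell}+\Omega(\sqrt{\rho\log n/(n\pi_{\min})})$ and entrywise concentration of $A$. The genuinely delicate bookkeeping is arranging the corrections and $D^*$ so that \emph{every} direction in $\mathrm{range}(X^*)^\perp$ --- in particular the tightest, ``un-merging'' direction, which is what pins down the upper bound on $\lambda$ --- receives a strictly positive deterministic margin exceeding the $O(\sqrt{n\rho})$ spectral noise. A union bound over the $O(1)$ spectral-norm events and $O(n^2)$ entrywise events yields the overall probability $1-1/n$, and strictness of the margins then upgrades optimality to uniqueness of $X^*$.
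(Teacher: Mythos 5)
Your proposal is correct and follows essentially the same route as the paper: a primal--dual (KKT) certificate in which the off-block dual variable is the double-centered version of $-A$ plus the $\lambda E_n$ term, positive semidefiniteness of $\Lambda^*$ on $\mathrm{range}(X^*)^\perp$ is obtained from $\|A-P\|_{\mathrm{op}}=O(\sqrt{n\rho})$ together with Chernoff bounds on the within-block degrees (which is where the upper bound on $\lambda$ and the inhomogeneity discount $\max\{m_{r-1}(B_{r-1,r-1}-B_{r-1,r}),\,m_r(B_{r,r}-B_{r,r-1})\}$ enter), and entrywise nonnegativity of the off-block dual variable follows from Chernoff concentration of block averages around $\lambda-B^*_{k\ell}$ (which is where the lower bound on $\lambda$ enters). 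The only addition beyond the paper's argument is your explicit remark on uniqueness via strict kernel/positivity conditions, which the paper leaves implicit.
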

{\bf Remark: }The proposition implies if the first $r-2$ clusters are more connected within each cluster than the last two clusters and the connection between first $r-2$ clusters and last two clusters are weak, we can find a range for $\lambda$ that leads to merging the last two clusters with high probability. The results can be generalized to merging several clusters at one time. The result above highlights the importance of selecting $\lambda$ as it affects the performance of \ref{SDP:YD} significantly. 

\begin{proof}
	We develop sufficient conditions with a contruction of the dual certificate which guarantees $X^*$ to be the optimal solution. 
	The KKT conditions can be written as below:
	
	First order stationary:
	$$-A -\Lambda +\lambda E_n-\text{diag}(\beta) - \Gamma=0$$
	Primal feasibility:
	$$X\succeq 0,  X\geq 0, X_{ii} = { 1}\quad \forall i=1\cdots,n$$
	Dual feasibility:
	$$\Gamma\geq 0, \Lambda \succeq 0$$
	Complementary slackness
	$$\langle \Lambda, X \rangle =0, \Gamma \circ X=0.$$

	Consider the following construction:
	denote $T_k=C_k, n_k=m_k,$ for $k< r-1$, $T_{r-1}=C_{r-1}\bigcup C_{r}, n_{r-1}=m_{r-1}+m_{r}$.
	$$X_{T_k}=E_{n_k}$$
	$$X_{T_k T_l}=0,\text{for } k\neq l\leq r-1$$

	$$\Lambda _{T_k}=-A_{T_k}+\lambda E_{n_k} - \lambda n_k I_{n_k} + \text{diag}(A_{T_k}{\bf 1}_{n_k})$$
	
	$$\Lambda_{T_k T_l}=-A_{T_k, T_l} + \frac{1}{n_l}A_{T_k, T_l}E_{n_l} + \frac{1}{n_k} E_{n_k}A_{T_k T_l} - \frac{1}{n_ln_k}E_{n_k}A_{T_k, T_l}E_{n_l}$$
	$$\Gamma_{T_k}=0$$
	$$\Gamma_{T_k, T_l}=\lambda E_{n_k, n_l}- \frac{1}{n_l}A_{T_k, T_l}E_{n_l} - \frac{1}{n_k}E_{n_k}A_{T_k T_l}+\frac{1}{n_ln_k}E_{n_k}A_{T_k, T_l}E_{n_l}$$
	$$\beta = \text{diag}(-A -\Lambda +\lambda E_n - \Gamma)$$
	
	All the KKT conditions are satisfied by construction except for positive semidefiniteness of $\Lambda$ and positiveness of $\Gamma$. Now, we show it one by one.
	
	{\bf Positive Semidefiniteness of $\Lambda$}  \space \space Since $\text{span}(1_{T_k})\subset \text{ker}(\Lambda)$, it suffices to show that for any $u\in span(1_{T_k})^\perp, u^T\Lambda u\geq 0$. Consider $u=\sum_k u_{T_k},$ where $u_{T_k}:=u\circ 1_{T_k}$, then $u_{T_k}\perp 1_{n_k}$.
	\begin{equation}
	\begin{split}
	u^T\Lambda u=&-\sum_k u_{T_k}^TA_{T_k}u_{T_k} - \lambda \sum_k n_k u_{T_k}^T u_{T_k}  + \sum_k u_{T_k}^T \text{diag}(A_{T_k}{\bf 1}_{n_k})u_{T_k}  - \sum_{k\neq l}u_{T_k}^TA_{T_k T_l}u_{T_l}\\
	& =-u^T (A-P)u^T - u^T P u - \lambda \sum_k n_k u_{T_k}^T u_{T_k}  + \sum_k u_{T_k}^T \text{diag}(A_{T_k}{\bf 1}_{n_k})u_{T_k}  \\
	&=-u^T(A-P)u -u_{T_{k-1}}^TP_{T_{k-1}T_{k-1}}u_{T_{k-1}}  - \lambda \sum_k n_k u_{T_k}^T u_{T_k}  + \sum_k u_{T_k}^T \text{diag}(A_{T_k}{\bf 1}_{n_k})u_{T_k} \\
	\end{split}\label{eq:pslambda1}
	\end{equation}
	For the first term, we know
	$$u^T(A-P)u \leq \norm{A-P}_2 \norm{u}_2^2\leq O(\sqrt{n\rho})\norm{u}_2^2 $$ with high probability.
	
	For the second term, and note that $T_{r-1}=C_{r-1}\bigcup C_{r}$, and $$P_{T_{r-1}T_{r-1}} = \begin{bmatrix}
	B_{r-1,r-1}E_{m_{r-1}m_{r-1}}, B_{r-1,r}E_{m_{r-1}m_{r}}\\
	B_{r,r-1}E_{m_rm_{r-1}}, B_{r,r}E_{m_{r}m_{r}}
	\end{bmatrix}$$
	Since $u_{T_{r-1}}\perp 1_{n_{r-1}}$, $$u_{T_{r-1}}^T \begin{bmatrix}
	B_{r-1,r}E_{m_{r-1}m_{r-1}}, B_{r-1,r}E_{m_{r-1}m_{r}}\\
	B_{r,r-1}E_{m_rm_{r-1}}, B_{r,r-1}E_{m_{r}m_{r}}
	\end{bmatrix} u_{T_{r-1}}=0,$$
	therefore 
	\begin{equation}
	\begin{split}
	u_{T_{r-1}}^TP_{T_{r-1}T_{r-1}}u_{T_{r-1}}&=u_{T_{r-1}}^T \begin{bmatrix}
	(B_{r-1,r}-B_{r-1,r-1})E_{m_{r-1}m_{r-1}},0\\
	0, (B_{r-1,r}-B_{r,r})E_{m_rm_{r}}
	\end{bmatrix}u_{T_{r-1}}\\
	&\leq \max \{m_{r-1}(B_{r-1,r-1}-B_{r-1,r}), m_{r}(B_{r,r}-B_{r,r-1})\}\norm{u}_2^2
	\end{split}
	\end{equation}
	
	Consider the last term $ \sum_k u_{T_k}^T \text{diag}(A_{T_k}{\bf 1}_{n_k})u_{T_k} $. Using Chernoff, we know
	$$||\text{diag}(A_{T_k} {\bf 1}_{n_k})||_2 \geq B_{k,k}^* n_k - \sqrt{6\rho n_k\log n_k}$$ with high probability,
	where for $k, l<r-1$, $$B_{kl}^*=B_{kl},$$ 
	$$B_{k,r-1}^*=\frac{m_{r-1}B_{k,r-1}+m_r B_{k,r}}{m_{r-1}+m_r},$$
	$$B_{r-1,r-1}^*=\frac{(m_{r-1}^2B_{r-1,r-1}+2*m_r m_{r-1}B_{r-1,r}+(m_{r}^2B_{r,r})}{(m_{r-1}+m_r)^2}.$$
	%Therefore, a sufficient condition is:
% 	\begin{equation}\label{eq:lb1}
% 	\lambda\geq \Omega(\sqrt{np_{max}})+\max \{m_{r-1}(B_{r-1,r-1}-B_{r-1,r}), m_{r}(B_{r,r}-B_{r,r-1})\}
% 	\end{equation}
	
	Therefore, :
	$$- \lambda \sum_k n_k u_{T_k}^T u_{T_k}  + \sum_k u_{T_k}^T \text{diag}(A_{T_k}{\bf 1}_{n_k})u_{T_k}\geq \min_k({B_{k,k}^*n_k-\Omega(\sqrt{\rho n_k\log n}) - \lambda n_k}) \norm{u}_2^2.$$
	
	So with equation~\ref{eq:pslambda1}, a sufficient condition for positive semidefiniteness of $\Lambda$ is $$\min_k({B_{k,k}^*n_k-\Omega(\sqrt{\rho n_k\log n}) - \lambda n_k}) \geq O(\sqrt{n\rho}) +\max  \{m_{r-1}(B_{r-1,r-1}-B_{r-1,r}), m_{r}(B_{r,r}-B_{r,r-1})\}$$
	
	which implies,  $$\lambda \leq \min_k B_{kk}^* -\max_k \max \{\frac{m_{r-1}}{n_k}(B_{r-1,r-1}-B_{r-1,r}), \frac{m_{r}}{n_k}(B_{r,r}-B_{r,r-1})\}+ O( \sqrt{\rho \log n/n\pi_{\max}^2})$$

	{\bf Positiveness of $\Gamma$}  For $i\in T_k, j\in T_l$, we have
	$$
	\Gamma_{i,j}=\lambda -\frac{\sum_{m\in T_l}A_{i,m}}{n_l}-\frac{\sum_{m\in T_k}A_{m,j}}{n_k}+\frac{1}{n_kn_l}\sum_{m\in T_k,o\in T_l} A_{mo}. 
	$$
	Therefore, block-wise mean of $\Gamma$ will be
	$$\mathbb{E}[\Gamma_{T_k, T_l}]=(\lambda - B^*_{k,l})E_{n_k, n_l},$$
	and the variance for each entry belonging to cluster $k$ and $l$ will be in order of $O(\rho/(n_kn_l))$.
	
	Using Chernoff bound, we have
	$$
	p(|\Gamma_{i,j}-(\lambda-B^*_{k,l})|>\lambda-B^*_{k,l})\leq 2\exp\bigg[{-\frac{n_kn_l}{2\rho}(\lambda-B^*_{k,l})^2}\bigg].
	$$
	
	Therefore, as long as $\lambda \geq \max_{k\neq l}B^*_{k,l} + \Omega(\sqrt{\rho \log n /n\pi_\text{min}})$, we have
	
	$$
	p(\Gamma_{i,j}<0)\leq2\exp\bigg[-\frac{n\pi_{min}\log n}{2}\bigg]
	$$
	We then applying the union bound and conclude that $\Gamma_{T_kT_l}>0$ with a high probability when $\lambda \geq \max_{k\neq l}B^*_{k,l} + \Omega(\sqrt{\rho \log n /n\pi_\text{min}})$.
\iffalse	
	\rd 
	Using Chernoff bound, we know $\Gamma_{T_k, T_l} > 0$ with high probability as long as $\lambda \geq \max_{k\neq l}B^*_{k,l} + \Omega(\sqrt{\rho \log n /n\pi_\text{min}})$. \bk
\fi

\end{proof}

\begin{proposition}\label{prop:sdp1recover}
	As long as $ \max_{k\neq l}B_{k,l} + \Omega(\sqrt{\rho \log n /n\pi_\text{min}}) \leq \lambda \leq \min_k B_{kk}+ O( \sqrt{\rho \log n/n\pi_{\max}^2})$, ~\ref{SDP:YD} exactly recovers $X_0$ with high probability.
\end{proposition}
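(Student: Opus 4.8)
The plan is to show that the ground-truth unnormalized clustering matrix $X_0=Z_0Z_0^\top$ is, with high probability, the \emph{unique} optimizer of \ref{SDP:YD} for every $\lambda$ in the stated range. This is precisely the specialization of Proposition~\ref{prop:sdp1merge} in which no clusters are merged: setting $T_k=C_k$, $n_k=m_k$ for all $k$, the reduced block matrix $B^*$ equals $B$ and the extra penalty $\max_{k}\max\{\tfrac{m_{r-1}}{n_k}(B_{r-1,r-1}-B_{r-1,r}),\tfrac{m_r}{n_k}(B_{r,r}-B_{r,r-1})\}$ in the upper bound on $\lambda$ disappears, leaving exactly $\min_k B_{kk}+O(\sqrt{\rho\log n/n\pi_{\max}^2})$. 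So almost all of the work is inherited from that proof; what remains is to record the no-merge dual certificate and re-run the two probabilistic checks.

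Concretely, I would first write down the KKT system for \ref{SDP:YD} — first-order stationarity $-A-\Lambda+\lambda E_n-\mathrm{diag}(\beta)-\Gamma=0$, primal feasibility ($X\succeq 0$, $X\geq 0$, $X_{ii}=1$), dual feasibility ($\Lambda\succeq 0$, $\Gamma\geq 0$), and complementary slackness ($\langle\Lambda,X\rangle=0$, $\Gamma\circ X=0$) — and then exhibit the explicit certificate at $X=X_0$: blockwise $\Lambda_{C_k}=-A_{C_k}+\lambda E_{m_k}-\lambda m_k I_{m_k}+\mathrm{diag}(A_{C_k}\mathbf{1}_{m_k})$; $\Lambda_{C_kC_\ell}$ the doubly-centered version of $-A_{C_kC_\ell}$ (subtracting row-, column- and grand-means); $\Gamma_{C_k}=0$; $\Gamma_{C_kC_\ell}=\lambda E_{m_k,m_\ell}$ minus the same row/column/grand-mean corrections of $A_{C_kC_\ell}$; and $\beta$ read off from stationarity. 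With this choice every KKT condition holds by construction except positive semidefiniteness of $\Lambda$ and nonnegativity of $\Gamma$, so the proof reduces to those two events.

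For $\Lambda\succeq 0$: since $\mathrm{span}\{\mathbf{1}_{C_k}\}\subseteq\ker\Lambda$ by construction, it suffices to lower-bound $u^\top\Lambda u$ for $u\perp\mathbf{1}_{C_k}$ for all $k$; decomposing $u=\sum_k u_{C_k}$, the cross-cluster $P$-terms vanish (they live in $\mathrm{span}\{\mathbf{1}_{C_k}\}$), $u^\top(A-P)u\leq\|A-P\|_2\|u\|_2^2=O(\sqrt{n\rho})\|u\|_2^2$ with high probability, and a Chernoff bound gives $\|\mathrm{diag}(A_{C_k}\mathbf{1}_{m_k})\|\geq B_{kk}m_k-\Omega(\sqrt{\rho m_k\log n})$ w.h.p., so $u^\top\Lambda u\geq \min_k\big(B_{kk}m_k-\lambda m_k-\Omega(\sqrt{\rho m_k\log n})-O(\sqrt{n\rho})\big)\|u\|_2^2\geq 0$ exactly when $\lambda\leq\min_k B_{kk}+O(\sqrt{\rho\log n/n\pi_{\max}^2})$. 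For $\Gamma\geq 0$: each off-diagonal entry $\Gamma_{ij}$ with $i\in C_k$, $j\in C_\ell$ has mean $\lambda-B_{k\ell}$ and per-entry variance $O(\rho/(m_km_\ell))$, so a Bernstein/Chernoff bound plus a union bound over $O(n^2)$ entries forces $\Gamma_{ij}>0$ simultaneously as soon as $\lambda\geq\max_{k\neq\ell}B_{k\ell}+\Omega(\sqrt{\rho\log n/n\pi_{\min}})$. Finally I would promote optimality to uniqueness: the above yields $\ker\Lambda=\mathrm{span}\{\mathbf{1}_{C_k}\}$ exactly and strict positivity of the off-diagonal blocks of $\Gamma$, which via the standard complementary-slackness argument forces any optimal $\tilde X$ to share the range and support of $X_0$, hence $\tilde X=X_0$. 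The only real technical hurdle is the $\Lambda\succeq 0$ step — controlling $\|A-P\|_2$ and showing the diagonal "signal" term dominates uniformly over $u\perp\mathbf{1}_{C_k}$ with the claimed $\pi_{\max}$- and $\log n$-dependence — and even this is a strict simplification of the corresponding estimate in Proposition~\ref{prop:sdp1merge}, so the argument is essentially a verification that the merge penalty drops out and the constants still close.
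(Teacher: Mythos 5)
Your proposal is correct and follows essentially the same route as the paper: the paper's proof of Proposition~\ref{prop:sdp1recover} likewise reuses the primal--dual certificate of Proposition~\ref{prop:sdp1merge} with $T_k=C_k$, $n_k=m_k$ for all $k$, and reduces the argument to positive semidefiniteness of $\Lambda$ (via $\|A-P\|_2=O(\sqrt{n\rho})$ and a Chernoff bound on the degree diagonal, noting the leftover $u^\top Pu$ term vanishes) and entrywise positivity of $\Gamma$ (via Chernoff and a union bound). Your added remark on promoting optimality to uniqueness via the kernel of $\Lambda$ and strict positivity of $\Gamma$ is a detail the paper leaves implicit, but it does not change the approach.
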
 

\begin{proof}
We follow the same primal-dual construction as Proposition \ref{prop:sdp1merge} without merging the last two clusters. Consider the following construction: denote $T_k=C_k, n_k=m_k,$ for $k=1,...,r$. We show the positive semidefiniteness and Positiveness of $\Lambda$ and $\Gamma$ respectively. 

{\bf Positve Semidefiniteness of $\Lambda$}   \space \space
    Since $\text{span}(1_{T_k})\subset \text{ker}(\Lambda)$, it suffices to show that for any $u\in \text{span}(1_{T_k})^\perp, u^T\Lambda u\geq 0$. Consider $u=\sum_k u_{T_k},$ where $u_{T_k}:=u\circ 1_{T_k}$, and $u_{T_k}\perp 1_{n_k}$, we have
	\begin{equation}
	\begin{split}
	u^T\Lambda u=&-\sum_k u_{T_k}^TA_{T_k}u_{T_k} - \lambda \sum_k n_k u_{T_k}^T u_{T_k}  + \sum_k u_{T_k}^T \text{diag}(A_{T_k}{\bf 1}_{n_k})u_{T_k}  - \sum_{k\neq l}u_{T_k}^TA_{T_k T_l}u_{T_l}\\
	& =-u^T (A-P)u^T - u^T P u - \lambda \sum_k n_k u_{T_k}^T u_{T_k}  + \sum_k u_{T_k}^T \text{diag}(A_{T_k}{\bf 1}_{n_k})u_{T_k}  \\
	&=-u^T(A-P)u - \lambda \sum_k n_k u_{T_k}^T u_{T_k}  + \sum_k u_{T_k}^T \text{diag}(A_{T_k}{\bf 1}_{n_k})u_{T_k} \\
	\end{split}
	\end{equation}    
	For the first term, we know
	$$u^T(A-P)u \leq \norm{A-P}_2 \norm{u}_2^2\leq O(\sqrt{n\rho})\norm{u}_2^2 $$ with high probability, and using Chernoff, we have
	$$||\text{diag}(A_{T_k} {\bf 1}_{n_k})||_2 \geq B_{k,k} n_k - \sqrt{6\rho n_k\log n_k}$$ with high probability. 
	Therefore, 
	$$- \lambda \sum_k n_k u_{T_k}^T u_{T_k}  + \sum_k u_{T_k}^T \text{diag}(A_{T_k}{\bf 1}_{n_k})u_{T_k}\geq \min_k({B_{k,k}n_k-\Omega(\sqrt{\rho n_k\log n}) - \lambda n_k}) \norm{u}_2^2,$$	
	
	which implies a sufficient condition for positive semidefiniteness of $\Lambda$ is 
    $$\lambda \leq \min_k B_{kk}+ O( \sqrt{\rho \log n/n\pi_{\max}^2}),$$
    and the lower bound can be obtained exactly the same way as Proposition \ref{prop:sdp1merge}.
	Using Chernoff bound,  $\Gamma_{T_k T_l} > 0$ with high probability as long as $\lambda \geq \max_{k\neq l}B_{k,l} + \Omega(\sqrt{\rho \log n /n\pi_\text{min}})$.   
    
\end{proof}

\subsection{Proof of Corollary~\ref{thm:fixk}}
\begin{proof}
	This result comes directly from Theorem \ref{thm:general_setup}. 
	We have $S=\tilde{P}$, $R = (A-P)+(P-\tilde{P})$. For $\lambda_0$,
	$$
	\langle \hat{X}_{\lambda_0}, A\rangle \geq \langle X_{0}, \tilde{P}\rangle -O(r\rho) -\epsilon,
	$$
	where $r\rho=o(\tau)$ since $r\sqrt{n\rho}=o(\tau)$, and for any $\hat{X}\in\mathcal{X}_r$,
	$$	|\langle A-\tilde{P},\hat{X}\rangle| \leq ||A-P||_{op} \tr{\hat{X}} + O(r\rho)= O_P(r\sqrt{n\rho}).
	$$
	The last inequality follows by~\cite{lei2015} and $n\rho \geq c\log n$.
	% 	Since there exists $\lambda_0$ which gives us exact recovery,we have
	% 	\begin{equation}
	% 	\langle A,\hat{X}\rangle \geq \langle A,X_0\rangle = \sum_{k}B_{kk}m_k - O(\sqrt{n\rho}),
	% 	\end{equation}
	% 	with high probability. Then we can plug in $\epsilon=O(\sqrt{n\rho})$, $S = P$ and $\hat{S}=A$. We have $R = A-P$, and its inner product with $\hat{X}$ is bounded by
	% 	\begin{equation}
	% 	|\langle A-P,\hat{X}\rangle| \leq ||A-P||_2||\hat{X}||_*\leq \text{trace}(\hat{X}) \sqrt{n\rho}
	% 	\end{equation}
	%  	with high probability using Von Neumann inequality. Furthermore, we have $\min_k {m_k (a_{kk} - \sum_{l\neq k} \alpha_{kl}a_{kl})}$ is in order of $\pi_{\text{min}}n\rho$, then we conclude
	% 	\begin{equation}
	% 	\norm{\hat{X}-X_0}_F^2 \leq O(\text{trace}(\hat{X})/\pi_{\text{min}}\sqrt{n\rho}).
	% 	\end{equation}
\end{proof}

\subsection{Proof of Corollary~\ref{cor:mixture}}
\begin{proof}
	
	First note that $p_{gap}$ as defined in Eq~\ref{eq:pgap} is $\delta_{\text{sep}}^2/2$, where $\delta_{\text{sep}}$ is the minimum Euclidean distance between two cluster centers. Using the argument in~\cite{mixon2017sdp} and Theorem~\ref{thm:general_setup} we obtain:
	%Using Theorem~\ref{thm:general_setup}, we know that if $\inner{\hat{S}}{X_\lambda}>\inner{\hat{S}}{X_0}-\epsilon$, then:
	\begin{align*}
	\|\hat{X}_{\theta^*}-X_0\|_F^2&\leq 4 \frac{n\pi_{\text{min}}\epsilon+\sup_{X\in \mathcal{X}_r} |\langle X, \hat{S}-S\rangle |}{n\pi_{\min}\delta_{\text{sep}}^2}\\
	&\stackrel{(i)}{\leq} C\frac{\epsilon+r\alpha\sigma_{\max}^2(\alpha+\min\{r,d\})}{\delta_{\text{sep}}^2}
	\end{align*}
	Step $(i)$ is true with probability at least $1-2\eta$, as long as $n\geq \max\{c_1 d, c_2\log (2/\eta),\log (c_4/\eta)\}$, using the argument from Theorem 2 in~\cite{mixon2017sdp}.
	%Using the proof of Theorem 1 in \cite{yan2016robustness}, we have  $\sup |\hat{S}_{ij} - S_{ij}| \leq O_P(\sqrt{\log d /d }).$
	%Therefore, $|\langle R,\hat{X}\rangle| = |\langle \hat{S}-S,\hat{X}\rangle|  \leq O_P(\sqrt{\log d /d }n)$. The result comes directly from Theorem \ref{thm:general_setup}. 
	% By the assumption that there exists $\theta_0$ which gives us exact recovery, with Theorem 1 in \cite{yan2016robustness}, we have
	% 	\begin{equation}
	% 	\langle \hat{S},\hat{X}\rangle \geq \langle \hat{S}, X_0\rangle \geq \langle S, X_0\rangle - O(n\sqrt{\log d/d}),
	% 	\end{equation}
	% 	with  high probability. Since $S$ is a blockwise constant matrix, we can apply the conclusion from Theorem \ref{thm:general_setup} and plug in $\epsilon =  O(n\sqrt{\log d/d})$
	% \begin{align*}
	% \norm{\hat{X} - X_0}_F^2 & \leq  \frac{2(\epsilon+|\langle \hat{S}-S, \hat{X} \rangle|) }{\min_k {m_k (S_{kk} - \sum_{l\neq k} \alpha_{k\ell}S_{k\ell})}} 	\\
	% & \leq \frac{2(cn \sqrt{\log d/d})}{\min_k {m_k (- 2\sigma_k^2 + \sum_{l\neq k} \alpha_{k\ell}(d_{k\ell}^2+\sigma _k^2+ \sigma_{\ell}^2))}} \\
	% & \leq O(\frac{\sqrt{\log d/d}}{\pi_{\text{min}}\min_k(- 2\sigma_k^2 + \sum_{l\neq k} \alpha_{k\ell}(d_{k\ell}^2+\sigma _k^2+ \sigma_{\ell}^2))}),
	% \end{align*}
	%where the second inequality also comes from Theorem 1 in \cite{yan2016robustness} that bounds the deviation of the sample matrix $\hat{S}$ from its population version $S$.
\end{proof}

%\section{Proofs for \ref{SDP:BW}}
\section{Additional Theoretical Results and Proofs of Results in Section~\ref{sec:uknownk}}
\label{sec:proofformatrcv}

\subsection{Proof of Theorem~\ref{thm:matrcv_general}}

\begin{proof}
With probability greater than $1-\delta_\text{est}-\delta_\text{over}-\delta_\text{under}$, the following three inequalities hold.

For $r_T\geq r_t > r:$
\begin{align*}
    \langle \hat{S}^{22}, \hat{X}^{22}_{r} \rangle& \geq  \langle \hat{S}^{22}, X_0^{22} \rangle -\epsilon_\text{est} \geq \max_{r_T\geq r_t >r} \langle \hat{S}^{22}, \hat{X}^{22}_{r_t}\rangle  -\epsilon_\text{est} - \epsilon_\text{over}\\
    & \geq \max_{r_T\geq r_t >r}\langle \hat{S}^{22}, \hat{X}^{22}_{r_t}\rangle - \Delta  \\
    %& \geq \max_{\hat{r}\in R, \hat{r} >r}\langle \hat{S}^{22}, \hat{X}^{22}_{\hat{r}}\rangle - \Delta.
\end{align*}

For $r_t < r:$
\begin{align*}
    \langle \hat{S}^{22}, \hat{X}^{22}_{r} \rangle& \geq \langle \hat{S}^{22}, X_0^{22} \rangle -\epsilon_\text{est} \geq \max_{ r_t <r}\langle \hat{S}^{22}, \hat{X}^{22}_{r_t}\rangle  -\epsilon_\text{est} + \epsilon_\text{under}\\
    & > \max_{ r_t <r}\langle \hat{S}^{22}, \hat{X}^{22}_{r_t}\rangle + \Delta  \\
    %& \geq \max_{\hat{r}\in R, \hat{r}<r} \langle \hat{S}^{22}, \hat{X}^{22}_{ \hat{r}}\rangle - \Delta.
\end{align*}

Therefore, with probability at least $1-\delta_\text{est}-\delta_\text{over}-\delta_\text{under}$, $\langle \hat{S}^{22}, \hat{X}^{22}_{r} \rangle \geq \max_{t} \langle \hat{S}^{22}, \hat{X}^{22}_{r_t} \rangle -\Delta$. Let  $R= \{r_t: \langle \hat{S}^{22}, \hat{X}^{22}_{r_t} \rangle \geq \max_t \langle \hat{S}^{22}, \hat{X}^{22}_{r_t} \rangle - \Delta\}$. It follows then $r\in R$.

Furthermore, with probabiltiy at least $1-\delta_\text{est}-\delta_\text{under}$, for $r_t < r:$
\begin{align*}
    \langle \hat{S}^{22}, \hat{X}^{22}_{r_t} \rangle& \leq  \langle \hat{S}^{22}, X_0^{22}\rangle - \epsilon_\text{under} \leq \langle \hat{S}^{22}, \hat{X}^{22}_{r}\rangle  +\epsilon_\text{est} - \epsilon_\text{under}\\
    & < \max_t \langle \hat{S}^{22}, \hat{X}^{22}_{r_t} \rangle - \Delta.
\end{align*}
Therefore, for any $r_t< r$, $r_t\notin R$, and $\min\{r_t: r_t\in R\}=r$.
\end{proof}

\subsection{Proof of Theorem~\ref{cor:nok}}
\label{sec:cor:nok}
We first prove a concentration lemma that holds for any normalized clustering matrix $X$ independent of $A$.
% \begin{lemma}\label{lem:abound}
% Consider a $A$ generated from a $SBM(Z,B)$ with $r$ clusters. Let $X\in \mathcal{C}_{k_0}$ be a normalized clustering matrix independent of $A$. \rd Then $\langle A, X\rangle\leq \sqrt{k_0\rho\log n}$ with probability tending to one. \bk
% \end{lemma}
% \begin{proof}
% Let $S_k$ denote the clusters induced by $X$.
% Note that due to the independence between $A$ and $X$, 
% \begin{align*}
% \var(\langle A-P,X \rangle) &= \var (\sum_k \sum_{i,j\in S_k} (A_{ij}-P_{ij})/|S_k| )\\
% &=4\var(\sum_k \sum_{i,j\in S_k, i<j} (A_{ij}-P_{ij})/|S_k| )\\
% &\leq \sum_k \sum_{i,j\in S_k, i<j} \frac{\rho}{|S_k|^2}
% =O (k_0 \rho)
% \end{align*}
% \end{proof}

\begin{lemma}\label{lem:generalsamplebound}
Consider a an adjacency matrix $A$ and its population version $P$. Let $X$ be a normalized clustering matrix independent of $A$. Then with  probability at least $1-O(n^{-1})$, $$| \langle A-P, X \rangle | \leq (1+B_{\max})\sqrt{\tr{X} \log n}$$
%similarly, with high probability ($1-n^{-1}$),
%$$\langle S-\hat{S}, X \rangle \geq -(b-a)\sqrt{\text{trace}(X) \log n/2}.$$% \rd Then $\langle A, X\rangle\leq \sqrt{k_0\rho\log n}$ with probability tending to one. \bk
with $B_{\max}=\max_{i,j}B_{ij}$.
\end{lemma}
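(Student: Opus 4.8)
The plan is to condition on $X$ and apply Bernstein's inequality to a sum of independent, bounded, centered random variables. First I would use that $A$ and $P$ both have zero diagonal and that $X$ is symmetric to write
\[
\langle A-P, X\rangle \;=\; \sum_{i\ne j}(A_{ij}-P_{ij})X_{ij} \;=\; 2\sum_{i<j}(A_{ij}-P_{ij})X_{ij}.
\]
Since $X\perp A$, I may condition on $X$; then the summands $\xi_{ij}:=(A_{ij}-P_{ij})X_{ij}$, $i<j$, are independent with $\E[\xi_{ij}\mid X]=0$. Writing $X=Z(Z^TZ)^{-1}Z^T$ for the underlying binary membership $Z$, one has $X_{ij}=1/m_k$ when $i,j$ share a block $C_k$ and $X_{ij}=0$ otherwise, so $|\xi_{ij}|\le \max_{i,j}|X_{ij}|=1/m_{\min}\le 1$, where $m_{\min}$ denotes the smallest block size of $Z$.

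Next I would bound the conditional variance: $\sum_{i<j}\mathrm{Var}(\xi_{ij}\mid X)=\sum_{i<j}P_{ij}(1-P_{ij})X_{ij}^2\le B_{\max}\sum_{i<j}X_{ij}^2$, and since $X$ is a projection, $\|X\|_F^2=\tr{X^TX}=\tr{X}$, whence $\sum_{i<j}X_{ij}^2=\tfrac12(\|X\|_F^2-\sum_i X_{ii}^2)\le\tfrac12\tr{X}$. Bernstein's inequality then gives, conditionally on $X$ and for any $t>0$,
\[
\mathbb{P}\Big(\big|\textstyle\sum_{i<j}\xi_{ij}\big|\ge t \,\Big|\, X\Big)\;\le\; 2\exp\!\left(-\frac{t^{2}}{B_{\max}\tr{X}+\tfrac{2t}{3m_{\min}}}\right),
\]
so that $|\langle A-P,X\rangle|\le 2t$ off the complementary event. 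I would then choose $t=\tfrac12(1+B_{\max})\sqrt{\tr{X}\log n}$, so $2t$ is exactly the target bound, and check that the exponent is at least $\log n$. The key algebraic fact is $(1+B_{\max})^{2}-4B_{\max}=(1-B_{\max})^{2}\ge 0$, which shows $\tfrac14(1+B_{\max})^{2}\tr{X}$ exceeds $B_{\max}\tr{X}$ by the surplus $\tfrac14(1-B_{\max})^{2}\tr{X}$; using $\tr{X}\ge 1$ together with $m_{\min}^{2}\gg\log n$ (which holds in the regimes considered, e.g. whenever $\pi_{\min}$ is bounded below), this surplus absorbs the Bernstein correction term $\tfrac{2t}{3m_{\min}}=\tfrac{(1+B_{\max})}{3m_{\min}}\sqrt{\tr{X}\log n}$. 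Hence the conditional probability is at most $2e^{-\log n}=O(n^{-1})$, and since this bound does not otherwise depend on $X$, averaging over $X$ (independent of $A$) yields the unconditional statement.

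The step I expect to be the main obstacle is closing the constants so that the tail is genuinely $O(n^{-1})$ with the stated factor $(1+B_{\max})$: this forces one to use the sharp per-entry bound $|\xi_{ij}|\le 1/m_{\min}$ rather than the trivial $|\xi_{ij}|\le 1$, and to exploit the identity $(1-B_{\max})^{2}=(1+B_{\max})^{2}-4B_{\max}$ so that the variance contribution fits inside the available budget. Everything else---the reduction to an independent sum, the variance bound via the projection identity $\|X\|_F^2=\tr{X}$, and integrating out $X$---is routine.
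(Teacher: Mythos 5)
Your overall strategy (condition on $X$, reduce to an independent sum over $i<j$, control $\sum_{i<j}X_{ij}^2\le\tfrac12\norm{X}_F^2=\tfrac12\tr{X}$ via the projection property, then integrate out $X$) matches the paper's, but you invoke Bernstein where the paper uses Hoeffding, and this is where your argument develops a real gap. The paper simply notes that each summand $(A_{ij}-P_{ij})X_{ij}$ lies in an interval of length at most $(1+B_{\max})X_{ij}$, so Hoeffding gives $\exp\bigl(-2t^2/\bigl((1+B_{\max})^2\sum_{i<j}X_{ij}^2\bigr)\bigr)\le\exp\bigl(-4t^2/\bigl((1+B_{\max})^2\tr{X}\bigr)\bigr)$, and $t=\tfrac12(1+B_{\max})\sqrt{\tr{X}\log n}$ closes the bound with no further conditions. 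The sub-Gaussian proxy depends only on $\sum_{i<j}X_{ij}^2$, never on $\max_{ij}X_{ij}$, so the argument is uniform over all normalized clustering matrices.

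Your Bernstein route, by contrast, is forced to carry the term $\tfrac{2t}{3m_{\min}}$ with $m_{\min}$ the smallest block of $X$, and you absorb it only under the side condition $m_{\min}\gtrsim\sqrt{\log n}$ (on top of $B_{\max}$ bounded away from $1$, since your surplus is $\tfrac14(1-B_{\max})^2\tr{X}$). Neither condition appears in the lemma, and neither is available where the lemma is actually used: in Lemmas~\ref{prop:underestimate} and~\ref{prop:overestimate} it is applied, via a union bound, to the \emph{estimated} clustering matrices $\hat{X}_{r_t}$ produced on the training split, whose minimum cluster size is not controlled and can be $O(1)$ (even a singleton), in which case your correction term $\tfrac{2t}{3m_{\min}}$ is of order $t$ and the exponent no longer reaches $\log n$ for $\tr{X}=O(1)$. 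So the step ``this surplus absorbs the Bernstein correction term'' fails precisely in the generality the lemma must cover. The fix is simply to drop Bernstein in favor of Hoeffding (giving up the variance gain $B_{\max}\tr{X}$, which the stated constant $(1+B_{\max})$ does not require anyway); alternatively you would have to restate the lemma with an explicit dependence on $\max_{ij}X_{ij}$ and then re-verify it at every point of use, which the paper avoids.
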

\begin{proof}
The result follows from Hoeffding's inequality and the fact that $X$ is a projection matrix.

By independence between $A$ and $X$, 
\begin{align*}
    P\left(\sum_{i<j} (A_{ij}-P_{ij})X_{ij} > t\right) & \leq \exp (-\frac{2t^2}{(1+B_{\max})^2 \sum_{i< j} X_{ij}^2})\\
    &\leq \exp (-\frac{4t^2}{ (1+B_{\max})^2\norm{X}_F^2})\\
    &=\exp (-\frac{4t^2}{ (1+B_{\max})^2\tr{X}})
\end{align*}
Let $t = \frac{1}{2}(1+B_{\max})\sqrt{\text{trace}(X) \log{n}}$, then by symmetry in $A$ and $X$, $P(\langle A-P, X \rangle > (1+B_{\max})\sqrt{\text{trace}(X) \log{n}} ) = O(1/n).$ The other direction is the same.

%and similarly $\var(\langle A^{22} -P^{22}, X^{22} \rangle) = \mathcal{O} (\hat{r} \rho)$.

%Then with Chebyshev, $\langle A -P, X \rangle\geq -C \sqrt{r \rho \log n}$ and $\langle A^{22} -P^{22}, X^{22} \rangle\leq C \sqrt{\rho\hat{r}\log n} \leq C \sqrt{\rho\sqrt{n}\log n}$ with high probability. 
\end{proof}

% \begin{lemma}
% \label{lem:lambda-monotone}
%     For \ref{SDP:BW}, if $\lambda_1>\lambda_2$, we have 
%     $$\text{trace}(\tilde{X}_2)\geq \text{trace}(\tilde{X}_1),$$
%     where $\tilde{X}_1$ and $\tilde{X}_2$ are the normalized clustering matrix returned from \ref{SDP:BW} corresponding to $\lambda_1$ and $\lambda_2$.
% \end{lemma}
% \begin{proof}
%     Because $\tilde{X}_1$ and $\tilde{X}_2$ are optimal solutions for \ref{SDP:BW} with $\lambda_1$ and $\lambda_2$ respectively, we have
%     \begin{equation}
%         \langle A,\tilde{X}_1\rangle -\lambda_1\text{trace}(\tilde{X}_1)\geq  \langle A,\tilde{X}_2\rangle -\lambda_1\text{trace}(\tilde{X}_2),
%     \end{equation}
%     and similarly we have
%     \begin{equation}
%         \langle A,\tilde{X}_2\rangle -\lambda_2\text{trace}(\tilde{X}_2)\geq  \langle A,\tilde{X}_1\rangle -\lambda_2\text{trace}(\tilde{X}_1).
%     \end{equation}
%     Combine them together, we have
%     \begin{equation}
%         \lambda_2\text{trace}(\tilde{X}_2-\tilde{X}_1)\leq \langle A,\tilde{X}_2-\tilde{X}_1\rangle \leq \lambda_1\text{trace}(\tilde{X}_2-\tilde{X}_1),
%     \end{equation}
%     which implies that $\text{trace}(\tilde{X}_2)\geq \text{trace}(\tilde{X}_1)$.
% \end{proof}

%Before proving proposition~\ref{prop:underestimate}, proposition~\ref{prop:overestimate} and proposition~\ref{prop:sdp_exactrecovery}, we present a lemma which will help the proofs.

In order to prove Theorem~\ref{cor:nok}, we need to derive the three error bounds in Theorem~\ref{thm:matrcv_general} in this setting. For notational convenience, we first derive the bounds for $A$ and a general normalized clustering matrix $\hat{X}$, with the understanding that the same asymptotic bounds apply to estimates obtained from the training graph provided the split is random and the number of training nodes is $\Theta(n)$.

\begin{lemma}\label{prop:underestimate}
For a sequence of underfitting normalized clustering matrix $\{\hat{X}_{r_t}\}_{r_t<r}$, all independent of $A$,  provided $n\rho/\sqrt{log n} \to \infty$, we have
\begin{equation*}
   \max_{r_t <r} \langle A, \hat{X}_{r_t}\rangle \leq \langle A, X_0\rangle- \Omega_P(n\rho \pi_{\text{min}}^2 /r^2),
\end{equation*}
for fixed $r$ and $\pi_{\min}$.
\end{lemma}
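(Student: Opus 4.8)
The plan is to split $\langle A, X_0\rangle-\langle A,\hat X_{r_t}\rangle$ into a deterministic ``population gap'' plus a stochastic fluctuation. Working with the weakly assortative population matrix $\tilde P$ (the modification of $P$ with $\tilde P_{ii}=B_{kk}$ for $i\in C_k$, as used for SBM in Section~\ref{subsec:setup}), write
\[
\langle A, X_0\rangle-\langle A,\hat X_{r_t}\rangle=\big(\langle \tilde P, X_0\rangle-\langle \tilde P,\hat X_{r_t}\rangle\big)+\langle A-\tilde P, X_0\rangle-\langle A-\tilde P,\hat X_{r_t}\rangle .
\]
For the fluctuation terms I would invoke Lemma~\ref{lem:generalsamplebound}: since $\hat X_{r_t}$ and $X_0$ are independent of $A$ with trace at most $r$, and there are only $O(r)=O(1)$ values $r_t<r$, a union bound yields $|\langle A-P,\hat X_{r_t}\rangle|,\,|\langle A-P,X_0\rangle|=O(\sqrt{r\log n})$ with probability $1-O(n^{-1})$; passing from $P$ to $\tilde P$ costs only $|\langle\tilde P-P,X\rangle|=\big|\sum_i B_{k(i)k(i)}X_{ii}\big|\le B_{\max}\,\mathrm{trace}(X)=O(\rho r)$, so both fluctuations are $O(\rho r+\sqrt{r\log n})$ uniformly.

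The heart of the proof is a purely deterministic lower bound on $\langle\tilde P, X_0\rangle-\langle\tilde P,\hat X\rangle$ for \emph{any} normalized clustering matrix $\hat X$ with blocks $\hat C_1,\dots,\hat C_{r_t}$, $r_t<r$. Writing $n_{ak}=|\hat C_a\cap C_k|$ and $\hat m_a=|\hat C_a|$, the block structure gives $\langle\tilde P, X_0\rangle=\sum_k m_k B_{kk}$ and $\langle\tilde P,\hat X\rangle=\sum_a\hat m_a^{-1}\sum_{k,\ell}n_{ak}n_{a\ell}B_{k\ell}$, and rewriting the first sum via $\sum_k n_{ak}=\hat m_a$ and then pairing $(k,\ell)$ with $(\ell,k)$,
\[
\langle\tilde P, X_0\rangle-\langle\tilde P,\hat X\rangle=\sum_a\frac{1}{\hat m_a}\sum_{k<\ell}n_{ak}n_{a\ell}\,(B_{kk}+B_{\ell\ell}-2B_{k\ell})\ \ge\ p_{\text{gap}}\sum_a\Big(\hat m_a-\frac{\sum_k n_{ak}^2}{\hat m_a}\Big),
\]
where weak assortativity gives $B_{kk}-B_{k\ell}\ge p_{\text{gap}}$ for $\ell\ne k$ and $2\sum_{k<\ell}n_{ak}n_{a\ell}=\hat m_a^2-\sum_k n_{ak}^2$. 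Since $\sum_k n_{ak}^2\le(\max_k n_{ak})\hat m_a$, each summand is at least $\hat m_a-\max_k n_{ak}$. Now the pigeonhole step: because the fitted partition has fewer than $r$ parts, the assignment $a\mapsto k^*(a):=\arg\max_k n_{ak}$ (ties broken arbitrarily) is not onto $\{1,\dots,r\}$, so some true cluster $C_{k_0}$ is the plurality cluster of no $\hat C_a$; hence $\hat m_a-\max_k n_{ak}\ge n_{a,k_0}$ for every $a$, and summing over $a$ yields $\langle\tilde P, X_0\rangle-\langle\tilde P,\hat X\rangle\ge p_{\text{gap}}\sum_a n_{a,k_0}=p_{\text{gap}}\,m_{k_0}\ge p_{\text{gap}}\,n\pi_{\min}$.

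Combining the two parts, with probability $1-O(n^{-1})$ we obtain, simultaneously over all $r_t<r$,
\[
\langle A, X_0\rangle-\langle A,\hat X_{r_t}\rangle\ \ge\ p_{\text{gap}}\,n\pi_{\min}-O(\rho r+\sqrt{r\log n}),
\]
and under the standing assumptions of Theorem~\ref{cor:nok} ($B$ weakly assortative with entries $\Theta(\rho)$, so $p_{\text{gap}}=\Theta(\rho)$; $r,\pi_{\min}$ fixed; $n\rho/\sqrt{\log n}\to\infty$) the first term dominates, giving $\max_{r_t<r}\langle A,\hat X_{r_t}\rangle\le\langle A,X_0\rangle-\Omega_P(n\rho\,\pi_{\min}^2/r^2)$ (indeed $\Omega_P(n\rho\,\pi_{\min})$, which is stronger than claimed). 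I expect the only genuine work to be the population bookkeeping — carrying out the exact identity for $\langle\tilde P,\hat X\rangle$ and tracking the $P$-versus-$\tilde P$ diagonal correction — since the weak-assortativity inequality and the pigeonhole argument are short and self-contained.
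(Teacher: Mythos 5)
Your proposal is correct, and while it shares the paper's overall skeleton (a deterministic population gap plus the Hoeffding bound of Lemma~\ref{lem:generalsamplebound} with a union bound over the $O(1)$ underfitting candidates), the key deterministic step is genuinely different. The paper's proof applies the pigeonhole principle to the true clusters: each $C_i$ places a chunk of size at least $|C_i|/r_t$ into some $\hat C_k$, so two distinct true clusters $C_{i_0},C_{j_0}$ each contribute at least $n\pi_{\min}/r_t$ nodes to a single estimated cluster $\hat C_{k_0}$, and the loss is extracted only from the cross term $2B_{i_0j_0}-B_{i_0i_0}-B_{j_0j_0}$ inside that one block, giving $\Omega(np_{\text{gap}}\pi_{\min}^2/r^2)$. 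You instead derive the exact identity $\langle\tilde P,X_0\rangle-\langle\tilde P,\hat X\rangle=\sum_a\hat m_a^{-1}\sum_{k<\ell}n_{ak}n_{a\ell}(B_{kk}+B_{\ell\ell}-2B_{k\ell})$ and apply the pigeonhole to the plurality map $a\mapsto\arg\max_k n_{ak}$, which cannot be onto when $r_t<r$; summing the contribution of the missed true cluster across \emph{all} estimated clusters gives the cleaner and strictly stronger bound $p_{\text{gap}}\,n\pi_{\min}$, improving the paper's rate by a factor of $r^2/\pi_{\min}$ (which matters only if one wants to let $r$ grow or $\pi_{\min}$ shrink, as the remark after Theorem~\ref{cor:nok} contemplates). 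Your handling of the diagonal via $\tilde P$ costs only $O(\rho r)$ and matches the paper's $-O(\rho r_t)$ correction, so both versions reach the stated conclusion under $n\rho/\sqrt{\log n}\to\infty$.
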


\begin{proof}
%Denote $\tilde{S}_{i,j} = \max_{\{k,\ell|k \in C(i),\ell\in C(j)\}} S_{k,\ell}$, where $C(i)$ is the cluster that node $i$ belongs to, as the induced block-wise constant matrix from $S$, and $H_{i,j} = \tilde{S}_{\{k,\ell|\forall k\in C_i,\ell\in C_j\}}$.

Let $\{\hat{C}_k\}$ be the clusters associated with $\hat{X}$. Denote $\gamma_{k,i} = |\hat{C_k} \cap C_i|$, and $\hat{m}_k = |\hat{C}_k| = \sum_i \gamma_{k,i}$, $r_t = \text{rank}(\hat{X})< r.$ First note that for each $i\in [r]$, $\exists k\in [r_t]$, s.t. $\gamma_{k,i}\geq |C_i|/r_t$. Since $r>r_t$, by the Pigeonhole principle, we see that $\exists i_0,j_0,k_0$, $i_0\neq j_0$, such that,     

%First note that by $\exists k, |\hat{C}_k|>n\pi_{\min}$. Now note that using the Pigeonhole principle, if $\hat{C}_k$ has an intersection with $r_0$ of the ground truth clusters, then $\gamma_{k,i}\leq n\pi_{\min}/r_0$. 
%there exists $k_0, i_0, j_0$, such that
$$\gamma_{k_0, i_0}=|\hat{C}_{k_0} \cap C_{i_0}| \geq |C_{i_0}| / r_t \geq \pi_\text{min} n/ r_t$$
\begin{align}
    \gamma_{k_0, j_0}=|\hat{C}_{k_0} \cap C_{j_0}| \geq |C_{j_0}| / r_t \geq \pi_\text{min} n/ r_t
    \label{eq:pigeonhole}
\end{align}

For each $k\neq k_0$, 
$$ \frac{\sum_{i,j} B_{i,j} \gamma_{k,i}\gamma_{k,j}}{\hat{m}_k} \leq \frac{\sum_{i} B_{i,i} \sum_j \gamma_{k,i}\gamma_{k,j}}{\hat{m}_k} = \sum_{i} B_{i,i}\gamma_{k,i}.$$

For $k= k_0$, 

\begin{equation}
\begin{split}
        \frac{\sum_{i,j} B_{i,j} \gamma_{k,i}\gamma_{k,j}}{\hat{m}_k} & = \frac{\sum_{i,j} B_{i,i} \gamma_{k,i}\gamma_{k,j}}{\hat{m}_k}
+ \frac{\sum_{i\neq j} (B_{i,j} - B_{i,i}) \gamma_{k,i}\gamma_{k,j}}{\hat{m}_k}\\
        & = \sum_{i} B_{i,i}\gamma_{k,i} + \frac{\sum_{i\neq j} (B_{i,j} - B_{i,i})
        \gamma_{k,i}\gamma_{k,j}}{\hat{m}_k}\\
        & \leq \sum_{i} B_{i,i}\gamma_{k,i} + \frac{(2B_{i_0,j_0} - B_{i_0,i_0} - B_{j_0,j_0}) \gamma_{k,i_0}\gamma_{k,j_0}}{\hat{m}_k}\\
        & = \sum_{i} B_{i,i}\gamma_{k,i} - \frac{\left((B_{i_0,i_0}-B_{i_0,j_0})+(B_{j_0,j_0}-B_{i_0,j_0})\right) \gamma_{k,i_0}\gamma_{k,j_0}}{\hat{m}_k}\\
        & \stackrel{(a)}{\leq} \sum_{i} B_{i,i}\gamma_{k,i} - \frac{2\tau\gamma_{k,i_0}\gamma_{k,j_0}}{n\pi_{\text{min}}\hat{m}_k}\\
        & \leq \sum_{i} B_{i,i}\gamma_{k,i} - \frac{2\tau {\pi_{\text{min}}n}}{r_t^2\hat{m}_k},
%        & \leq \sum_{i} B_{i,i}\alpha_{k,i} - \frac{2\tau {\pi_{\text{min}}^2n^2}}{r^2\hat{m}_k},\\
    \end{split}
\end{equation}
where $\tau= n\pi_{\text{min}}p_{\text{gap}}$, $p_{\text{gap}}:=\min_i ( B_{i,i} - \max_{j\neq i} B_{i,j})$.
$(a)$ is true by definition of $\tau$ and Eq~\eqref{eq:pigeonhole}. %because $B_{i_0,i_0}-B_{i_0,j_0}\geq B_{i_0,i_0}-\max_{j\neq i_0}B_{i_0,j}$ and $(B_{i_0,i_0}-B_{i_0,j_0})+(B_{i_0,i_0}-B_{i_0,j_0})\geq \min_i B_{i,i}-\max_{j\neq i}B_{i,j}=:\tau/(n\pi_{\text{min}})$.

Therefore, since $\hat{m}_{k_0}\leq n$,
\begin{align}
    \langle P, \hat{X}\rangle &=  \sum_{k=1}^{r_t} \frac{\sum_{i,j} B_{i,j} \gamma_{k,i}\gamma_{k,j}}{\hat{m}_k} -O(\rho r_t)\leq \sum_{k=1}^{r_t}\sum_{i=1}^r B_{i,i}\gamma_{k,i} - \Omega(\frac{\tau {\pi_{\text{min}} n}} {r_t^2\hat{m}_{k_0}})\notag\\
    &= \langle P, X_0\rangle  - \Omega\left(\frac{\tau {\pi_{\text{min}} }} {r_t^2}\right).
    \label{eq:under_pop_sbm}
\end{align}

Next by Lemma~\ref{lem:generalsamplebound}, for each $X$ with $\tr{X}\leq r$,
\begin{align*}
|\langle A-P, X \rangle| \leq (1+B_{\max})\sqrt{r\log n}    
\end{align*}
with probabiltiy at least $1-O(1/n)$. By a union bound and using the same argument, w.h.p.
\begin{align}
\max_{r_t<r}|\langle A-P, \hat{X}_{r_t} \rangle| \leq (1+B_{\max})\sqrt{r\log n}   \label{eq:union_noise}
\end{align}

Eqs~\eqref{eq:under_pop_sbm} and~\eqref{eq:union_noise} imply w.h.p.
\begin{align}
    & \langle A, X_0\rangle - \max _{r_t<r} \langle A, \hat{X}_{r_t}\rangle    \notag\\ 
    %\geq & \langle P, X_0-\hat{X}\rangle - \max _{\hat{r}<r} |\langle A-P, \hat{X} - X_0\rangle |
    %\nonumber\\
    = & \Omega\left(\frac{np_{gap} {\pi_{\text{min}}^2 }} {r^2}\right) -  O(\sqrt{r\log n}) = \Omega\left(\frac{np_{gap} {\pi_{\text{min}}^2 }} {r^2}\right) \label{eq:abound-underfit}
\end{align}
using the condition in the Lemma.

\end{proof}

\begin{comment}
\begin{lemma}\label{prop:underestimate}
% Consider $P = Z_0 B Z_0^T$. And $C_1, ..., C_r$ is the true clusters with the normalized clustering matrix $X_0$. $\hat{S}_1, ..., \hat{S}_{\hat{r}}$ is an estimated clusters with cluster size $\hat{r} < r$. Its normalized clustering matrix is $\hat{X}$. If we assume for the true cluster, the cluster size is balanced, i.e., for any $i$, $m_i = |S_i| \geq \pi n = \pi \sum m_i$ for some $\pi$, we can show with high probability $\langle A, \hat{X}\rangle  \leq \langle A, X_0\rangle - \Omega(np_{\text{gap}}),$ where $p_{\text{gap}}= \min_i \min_j B_{i,i} - B_{i,j}$.
% Assume the $\hat{X}^{22}$ in Algorithm \ref{alg:1} satisfies
% $
% \text{trace}(\hat{X}^{22})< r,
% $
% then 
% $\langle A^{22}, \hat{X}^{22}\rangle  \leq \langle A^{22}, X_0^{22}\rangle -\Omega(np_{\text{gap}}\pi_\text{min}^2/r^2)
% $
For any underestimated normalized clustering matrix, i.e., $\text{trace}(\hat{X})< r_0$, if it is independent of $A$, then with high probability ($1-O(1/n)$), $\langle A, \hat{X} \rangle \leq \langle A, X_0 \rangle - \Omega(np_{\text{gap}}\pi_\text{min}^2/r^2)$.
\end{lemma}

\begin{proof}
%We first prove the statement for any network $A$, and a normalized clustering matrix $\hat{X}$ with number of clusters $\hat{r}$, which is independent of $A$.

Based on Lemma~\ref{lem:generalS_under}, we can simply replace $S$ with $P$, and $\tau = n\pi_{\text{min}}p_{\text{gap}}$ where $p_{\text{gap}}=\min_i\min_{j\neq i} B_{i,i}-B_{i,j}$. Then we can obtain 
\begin{equation*}
     \langle P, \hat{X}\rangle \leq \langle P, X_0\rangle- \Omega(np_{\text{gap}}\pi_\text{min}^2/r^2)
\end{equation*}
We then apply Lemma~\ref{lem:generalsamplebound} on both $\langle A-P, \hat{X}\rangle$ and  $\langle A-P, X_0\rangle$, we have with high probability$(1-O(1/n))$, %conditioned on $\hat{X}$ (the training partition),
\begin{equation*}
%\langle A-P, \hat{X}-X_0\rangle \leq 2r \norm{A-P}_2 \leq 2rO(\sqrt{n\rho})
\langle A-P, \hat{X}-X_0\rangle = O_P(\sqrt{r\log n})
\end{equation*}
%by Chernoff, $P(\sum_{i,j\in S_{k}} (A_{ij}-P_{ij}) \leq -cm_k \sqrt{\log n}) \leq 1/n,$
%therefore, $\langle A-P, X_0\rangle \geq - O(r\sqrt{\log n})$ with high probability. 
%Similarly, with high probability, $\langle A-P, \hat{X}\rangle \leq O(\hat{r}\sqrt{\log n}).$

Thus, with high probability. 
\begin{align}
    \langle A, X_0-\hat{X}\rangle &= \langle P, X_0-\hat{X}\rangle - \langle A-P, \hat{X} - X_0\rangle  \nonumber\\
    &= \Omega\left(\frac{np_{gap} {\pi_{\text{min}}^2 }} {r^2}\right) -  O_P(\sqrt{r\log n}) = \Omega\left(\frac{np_{gap} {\pi_{\text{min}}^2 }} {r^2}\right).\label{eq:abound-underfit}
\end{align}

The last line of the above equation is true because of the condition on $p_{\text{gap}}$ in Section~\ref{sec:matrcv_sbm}. 

\end{proof}
\end{comment}

\begin{lemma}\label{prop:overestimate}
% Consider $P = Z_0 B Z_0^T$. And $C_1, ..., C_r$ is the true clusters with the normalized clustering matrix $X_0$. $\hat{S}_1, ..., \hat{S}_{\hat{r}}$ is an estimated clusters with cluster size $\hat{r} < r$. Its normalized clustering matrix is $\hat{X}$. If we assume for the true cluster, the cluster size is balanced, i.e., for any $i$, $m_i = |S_i| \geq \pi n = \pi \sum m_i$ for some $\pi$, we can show with high probability $\langle A, \hat{X}\rangle  \leq \langle A, X_0\rangle - \Omega(np_{\text{gap}}),$ where $p_{\text{gap}}= \min_i \min_j B_{i,i} - B_{i,j}$.
% Assume the $\hat{X}^{22}$ in Algorithm \ref{alg:1} satisfies
% $
% \text{trace}(\hat{X}^{22})< r,
% $
% then 
% $\langle A^{22}, \hat{X}^{22}\rangle  \leq \langle A^{22}, X_0^{22}\rangle -\Omega(np_{\text{gap}}\pi_\text{min}^2/r^2)
% $
For a sequence of overfitting normalized clustering matrix $\{\hat{X}_{r_t}\}_{r<r_t\leq r_T}$, all independent of $A$, $r_T=\Theta(r)$, we have w.h.p. %then with high probability ($1-O(1/n)$), $\langle A, \hat{X} \rangle \leq \langle A, X_0 \rangle + (1+\rho)\sqrt{3\text{trace}(\hat{X})\log n /4}$.
$$
\max_{r<r_t\leq r_T }\langle A, \hat{X}_{r_t} \rangle \leq \langle A, X_0 \rangle + (1+B_{\max})\sqrt{r_T\log n} + B_{\max}r.
$$
\end{lemma}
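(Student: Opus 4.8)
The plan is to split $\langle A,\hat X_{r_t}\rangle=\langle P,\hat X_{r_t}\rangle+\langle A-P,\hat X_{r_t}\rangle$, bound the population term $\langle P,\hat X_{r_t}\rangle$ using only weak assortativity, and control the noise term uniformly over the (at most $r_T$) overfitting matrices by a Hoeffding bound, exploiting that each $\hat X_{r_t}$ is independent of $A$.

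For the population term, fix $r<r_t\le r_T$, write $\hat X:=\hat X_{r_t}$ with associated clusters $\hat C_1,\dots,\hat C_{r_t}$, and set $\gamma_{k,i}=|\hat C_k\cap C_i|$, $\hat m_k=|\hat C_k|=\sum_i\gamma_{k,i}$. Since $P$ has zero diagonal and $\hat X_{uv}=\hat m_k^{-1}$ for $u,v\in\hat C_k$,
\[
\langle P,\hat X\rangle=\sum_{k}\frac{1}{\hat m_k}\sum_{u\neq v\in\hat C_k}P_{uv}\le\sum_k\frac{1}{\hat m_k}\sum_{i,j}B_{ij}\gamma_{k,i}\gamma_{k,j}.
\]
Weak assortativity of $B$ gives $B_{ij}\le B_{ii}$ whenever $i\neq j$, hence $\sum_{i,j}B_{ij}\gamma_{k,i}\gamma_{k,j}\le\hat m_k\sum_iB_{ii}\gamma_{k,i}$; summing over $k$ and using $\sum_k\gamma_{k,i}=m_i$ yields $\langle P,\hat X\rangle\le\sum_im_iB_{ii}$. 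Combined with the exact identity $\langle P,X_0\rangle=\sum_i(m_i-1)B_{ii}$, this gives the key population estimate $\langle P,\hat X_{r_t}\rangle-\langle P,X_0\rangle\le\sum_iB_{ii}\le B_{\max}r$, which is deterministic.

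For the noise term I would condition on $\{\hat X_{r_t}\}$ (legitimate since they are independent of $A$) and set $Y:=\hat X_{r_t}-X_0$, a fixed symmetric matrix with $\norm{Y}_F^2=\tr{\hat X_{r_t}}+\tr{X_0}-2\tr{\hat X_{r_t}X_0}\le r_t+r\le 2r_T$, using that both are projections and $\tr{\hat X_{r_t}X_0}\ge0$. The above-diagonal entries of $A-P$ are independent and confined to an interval of length $1+B_{\max}$, so Hoeffding's inequality applied to $\frac12\langle A-P,Y\rangle=\sum_{u<v}(A-P)_{uv}Y_{uv}$, together with $\sum_{u<v}Y_{uv}^2\le\frac12\norm{Y}_F^2\le r_T$, gives $\langle A-P,\hat X_{r_t}-X_0\rangle\le(1+B_{\max})\sqrt{r_T\log n}$ with probability $1-o(1)$; a union bound over the at most $r_T=\Theta(r)$ overfitting values keeps this uniform with high probability. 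Combining,
\[
\langle A,\hat X_{r_t}\rangle=\langle P,\hat X_{r_t}\rangle+\langle A-P,\hat X_{r_t}\rangle\le\langle P,X_0\rangle+\langle A-P,X_0\rangle+B_{\max}r+(1+B_{\max})\sqrt{r_T\log n},
\]
and the first two terms on the right sum to $\langle A,X_0\rangle$, which is the claim.

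I expect the population inequality to be the crux: the naive bound obtained from splitting a true cluster could scale with $\hat m_k$, and the point is that weak assortativity ($B_{ii}$ dominating every off-diagonal entry of its own row, here inherited from weak assortativity of $\tilde P$) forces the total overfitting gain in $\langle P,\cdot\rangle$ down to the additive $\sum_iB_{ii}\le B_{\max}r$; handling the missing diagonal of $P$ also needs a little care. The Hoeffding step is routine, the only mild subtleties being the $\norm{Y}_F^2\le2r_T$ estimate and the union bound over the candidate set.
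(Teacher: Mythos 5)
Your proof is correct and takes essentially the same route as the paper's: the population term is controlled by weak assortativity exactly as in the paper (yielding the additive $B_{\max}r$ via $\langle P,\hat{X}_{r_t}\rangle\leq\sum_i m_iB_{ii}=\langle P,X_0\rangle+\sum_iB_{ii}$), and the noise term by the Hoeffding argument of Lemma~\ref{lem:generalsamplebound} plus a union bound over the overfitting candidates. The one small refinement is that you apply the concentration bound to $\hat{X}_{r_t}-X_0$ (with $\|\hat{X}_{r_t}-X_0\|_F^2\leq 2r_T$) rather than to $\hat{X}_{r_t}$ alone, which converts $\langle P,X_0\rangle$ into $\langle A,X_0\rangle$ without a second concentration step and recovers the stated constant exactly.
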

\begin{proof}
First note, for any $\hat{X}$, using weak assortativity on $B$,
\begin{align}
    \langle P,\hat{X}
    \rangle & \leq \sum_{i,j} \hat{X}_{i,j} B_{C(i), C(j)}  \notag\\
    & \leq \sum_{i} B_{C(i),C(i)}\sum_j \hat{X}_{i,j}   \notag\\
    & \leq \langle P, X_0 \rangle + B_{\max}r,
\end{align}
where $C(i)$ denotes the cluster node $i$ belongs to.
By the same argument as in Eq~\eqref{eq:union_noise}, w.h.p.
\begin{align}
\max_{r<r_t\leq r_T}|\langle A-P, \hat{X}_{r_t} \rangle| \leq (1+B_{\max})\sqrt{r_T\log n}
\label{eq:union_noise_over}
\end{align}

From the above
\begin{align*}
    \max_{r<r_t\leq r_T} \langle A, \hat{X}_{r_t} \rangle &\leq \langle A, X_0 \rangle + (1+B_{\max})\sqrt{r_T\log n} + B_{\max}r. 
\end{align*}

\end{proof}

\begin{lemma}\label{prop:sdp_exactrecovery}
With probabilitiy at least  $1-O(1/n)$, MATR-CV achieves exact recovery on the testing nodes given the true cluster number $r$, i.e. $\hat{X}_r^{22}=X_0^{22}$, provided $n\pi_{\min}\rho / \log n \to \infty$, $\gamma_{\text{train}}=\Theta(1)$.
\end{lemma}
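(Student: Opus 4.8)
The plan is to show that, conditional on a typical node split, two high-probability events occur: \ref{SDP:BW} recovers the training clustering exactly, and then the \textsc{ClusterTest} step assigns every testing node its correct community. On the intersection of these events the labelling ambiguity cancels when we form the normalized clustering matrix, giving $\hat X^{22}_r=X_0^{22}$ exactly; since each event fails with probability $O(1/n)$, so does their union.

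First I would handle the split. Because $\pi_{\min}\geq\delta$ and $\gamma_{\text{train}}=\Theta(1)$, a Hoeffding bound for the hypergeometric cluster counts $m_k^{(1)}:=|C_k\cap Q_1|$ and $m_k^{(2)}:=|C_k\cap Q_2|$ shows that with probability $1-e^{-\Omega(n)}$ each is $\Theta(n)$; on this event $A^{11}$ is distributed as an SBM on $|Q_1|=\Theta(n)$ nodes with the same weakly assortative $B$ and minimal cluster fraction bounded below, and every community is represented in $Q_2$, so $X_0^{22}$ is well defined. Conditioning on this event, I invoke the exact-recovery guarantee for \ref{SDP:BW} (\cite{yan2017provable,Peng:2007}), which is the training routine $\mathscr{A_\text{SDP-2}}$ here: since $n\gamma_{\text{train}}\rho/\log n\to\infty$, \ref{SDP:BW} applied to $A^{11}$ with the true $r$ returns $\tilde X=X_0^{11}$ with probability $1-O(1/n)$; the subsequent spectral clustering of the rank-$r$ projection $X_0^{11}$ recovers its blocks exactly, so $\hat Z^{11}=Z_0^{11}\Pi$ for some $r\times r$ permutation $\Pi$.

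Next, on the event $\hat Z^{11}=Z_0^{11}\Pi$, I analyze \textsc{ClusterTest}. For a testing node $i\in C_a$, the $k$-th entry of its row of $M$ (in the $\Pi$-labelling) is $M_{ik}=\frac{1}{m_k^{(1)}}\sum_{j\in C_k\cap Q_1}A_{ij}$, an average of $m_k^{(1)}=\Theta(n)$ independent $\mathrm{Bernoulli}(B_{ak})$ random variables — no diagonal terms arise since $Q_1\cap Q_2=\emptyset$ — so $\E[M_{ik}\mid \text{split}]=B_{ak}$. A Bernstein inequality, exploiting that each summand has variance $B_{ak}(1-B_{ak})=O(\rho)$, gives $\Pr(|M_{ik}-B_{ak}|>p_{\text{gap}}/3)\leq 2e^{-\Omega(n\rho)}$; a union bound over the $\Theta(n)$ testing nodes and $r=O(1)$ coordinates, using $n\rho/\log n\to\infty$, yields that with probability $1-O(1/n)$ this holds for all $i,k$ simultaneously. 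On that event, weak assortativity gives $M_{i,a}>B_{aa}-p_{\text{gap}}/3\geq \max_{k\neq a}B_{ak}+\tfrac{2}{3}p_{\text{gap}}>M_{ik}$ for every $k\neq a$, so $\arg\max_k M_{ik}=a$ and therefore $\hat Z^{22}=Z_0^{22}\Pi$. Forming the normalized clustering matrix cancels $\Pi$, $\hat X^{22}_r=Z_0^{22}\big((Z_0^{22})^\top Z_0^{22}\big)^{-1}(Z_0^{22})^\top=X_0^{22}$, which is the claim; intersecting the three events keeps the probability $1-O(1/n)$.

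The part I expect to require care is twofold. One must check that passing to the induced subgraph $A^{11}$ really meets the hypotheses of the cited exact-recovery theorem for \ref{SDP:BW} — the reduction is clean because a uniformly random split preserves the SBM law and only shrinks $n$ by a constant factor, but the sparsity and separation thresholds have to be verified on $|Q_1|$ nodes. And one must make the Bernstein deviation $p_{\text{gap}}/3$ beat the union bound over $\Theta(n)$ testing nodes, i.e.\ ensure $n p_{\text{gap}}^2/\rho\gg\log n$; this is exactly where the hypothesis $n\pi_{\min}\rho/\log n\to\infty$ enters, together with the natural scaling $p_{\text{gap}}\asymp\rho$ implied by $B=\Theta(\rho)$ entrywise. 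Everything downstream of $\hat Z^{11}=Z_0^{11}\Pi$ is then an elementary majority-vote concentration argument.
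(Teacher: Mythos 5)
Your proposal is correct and follows essentially the same route as the paper's proof: concentration of the hypergeometric cluster counts under the random split, exact recovery of $\hat Z^{11}$ on the training graph via the guarantee of \cite{yan2017provable}, and then a per-test-node concentration bound on the normalized counts $M_{ik}$ (Bernstein in your write-up, Chernoff in the paper) compared against the assortativity gap, followed by a union bound over the $\Theta(n)$ testing nodes. The minor differences — fixing the deviation threshold at $p_{\text{gap}}/3$ rather than at the $\sqrt{\rho\log n/(n\pi_{\min})}$ scale, and your explicit handling of the permutation cancelling in $\hat X^{22}_r$ — do not change the substance of the argument.
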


% The above propositions establish that any underfitting normalized clustering matrix $X$ (independent of $A$) will have a significantly smaller $\langle A,X\rangle $ than the ideal value $\langle A,X_0\rangle$. Meanwhile, any overfitting normalized clustering matrix $X$ cannot have a significantly greater $\langle A,X\rangle $ than $\langle A,X_0\rangle$. These two observations indicates that it is possible to find a range of $r$ to exclude all underfitting cases, and ensure including perfect one. Hence, picking the smallest $r$ in the range yields exact recovery, which is the idea of the following model selection algorithm (Algorithm \ref{alg:1}) based on cross-validation.
\begin{proof}
%For the following analysis, we denote $P^{11}$ as the probability matrix for training nodes, $P^{22}$ as the probability matrix for testing nodes. 
Denote $m_k^{11}$, $m_k^{22}$ as the number of nodes belonging to the cluster $C_k$ in the training graph and testing graph respectively. 

First, with Theorem 2 in \cite{yan2017provable} and Lemma~\ref{lem:skala}, we know \ref{SDP:BW} can achieve exact recovery on traininng graph with high probability. Now, consider a node $s$ in testing graph, and assume it belongs to cluster $C_k$. The probability that it is assigned to cluster $k$ is: $P(\frac{\sum_{j\in C_k} A^{21}_{s, j}}{m_k^{11}} \geq \max_{l\neq k}\frac{\sum_{j\in C_l} A^{21}_{s, j}}{m_l^{11}} )$.

Using the Chernoff bound, for some constant $c$,
\begin{align}
    P(\frac{\sum_{j\in C_k} A^{21}_{s, j}}{m_k^{11}} \geq B_{k,k} - c \sqrt{B_{k,k}\log n/m_k^{11}}) \geq 1 - n^{-3};   \notag\\
 P(\frac{\sum_{j\in C_l} A^{21}_{s, j}}{m_l^{11}} \leq B_{l,k} + c \sqrt{B_{l,k}\log n/m_l^{11}}) \geq 1 - n^{-3}.
 \label{eq:chernoff_A12}
\end{align}
Since the graph split is random, for each $k$, with probability at least $1-n^{-3}$, $| m_k^{11}-\gamma_{\text{train}}m_k| \leq c_1\sqrt{m_k\log n}$ for some constant $c_1$. By a union bound, this holds for all $k$ with probability at least $1-rn^{-3}$. Then under this event,
$$
\sqrt{\frac{B_{l,k}\log n}{m_l^{11}}} \leq c_2 \sqrt{\frac{B_{l,k}\log n}{n\pi_{\min}}}
$$
for some $c_2$ since $n\pi_{\min}/\log n \to \infty$. Since $n\pi_{\min}\rho/\log n \to \infty$, by Eq~\eqref{eq:chernoff_A12}, with probabitliy at least $1-O(rn^{-3})$,
$$
B_{k,k} - c \sqrt{B_{k,k}\log n/m_k^{11}} > \max_{l\neq k} B_{l,k} + c \sqrt{B_{l,k}\log n/m_k^{11}},
$$
and node $s$ is assigned correctly to cluster $k$. Taking a union over all $s$ in the training set, with probability 
$1-O(rn^{-2})$, MATR-CV would give exact recovery for the testing graph given $r$.

%Therefore, under the separation condition in the statement of the theorem, $P(\frac{\sum_{j\in S_k} A^{11}_{s, j}}{m_k^{11}} \geq \max_{l\neq k}\frac{\sum_{j\in S_l} A^{11}_{s, j}}{m_l^{11}}  ) \geq 1 - 2r_0n^{-3}$. Then with probability at least $1-2r_0n^{-2}$, with $r_0$ MATR-CV would give exact recovery for testing graph.
\end{proof}

\begin{lemma}\label{lem:skala}
If $m_k\geq \pi n$, then $m_k^{11} \geq \pi n \gamma_\text{train}$, and $m_k^{22}\geq \pi n (1-\gamma_\text{train})$, with high probability.  If $\max_{k,l} \frac{m_k}{m_l}\leq \delta$, then $\max_{k,l} \frac{m^{11}_k}{m^{11}_l} \leq \delta + o(1)$ with high probability.
\end{lemma}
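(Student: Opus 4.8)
The plan is to recognize $m_k^{11}$ and $m_k^{22}$ as hypergeometric counts and reduce both claims to a single concentration estimate together with a union bound. Recall from Algorithm~\ref{alg:nodesplit} that $Q_1$ is a uniformly random subset of $[n]$ of size $n\gamma_{\text{train}}$ (if this is not an integer we take $\lfloor n\gamma_{\text{train}}\rfloor$, which changes none of the orders below), and $Q_2=[n]\setminus Q_1$. Hence $m_k^{11}=|C_k\cap Q_1|$ follows a $\mathrm{Hypergeometric}(n,m_k,n\gamma_{\text{train}})$ law with $\E[m_k^{11}]=\gamma_{\text{train}}m_k$, and $m_k^{22}=m_k-m_k^{11}$ with $\E[m_k^{22}]=(1-\gamma_{\text{train}})m_k$.

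First I would invoke a concentration inequality for sampling without replacement — Hoeffding's inequality for the hypergeometric distribution (or Serfling's sharper version) — to get, for a suitable constant $c>0$,
$$\mathbb{P}\Big(\big|m_k^{11}-\gamma_{\text{train}}m_k\big|\geq c\sqrt{n\log n}\Big)\leq 2n^{-3}.$$
A union bound over the $r$ clusters (with $r$ fixed) then shows that, with probability at least $1-2rn^{-3}=1-o(1)$, we have $\big|m_k^{11}-\gamma_{\text{train}}m_k\big|\leq c\sqrt{n\log n}$ for every $k$, and likewise for $m_k^{22}$. Since $\sqrt{n\log n}=o(n)$, this is a genuine concentration statement; it is exactly the bound used in the proof of Lemma~\ref{prop:sdp_exactrecovery}.

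On this event both conclusions follow directly. If $m_k\geq\pi n$, then $m_k^{11}\geq\gamma_{\text{train}}m_k-c\sqrt{n\log n}\geq \pi n\gamma_{\text{train}}(1-o(1))$, and similarly $m_k^{22}\geq\pi n(1-\gamma_{\text{train}})(1-o(1))$; the $(1-o(1))$ factor is immaterial in every subsequent use and can be absorbed by slightly shrinking $\pi$ if a clean inequality is desired. For the ratio claim, note that $\max_{k,l}m_k/m_l\leq\delta$ with $\delta=O(1)$ forces $m_l\geq(\max_j m_j)/\delta\geq n/(r\delta)=\Omega(n)$ for every $l$, so $\sqrt{n\log n}/m_l=o(1)$ and
$$\frac{m_k^{11}}{m_l^{11}}\leq\frac{\gamma_{\text{train}}m_k+c\sqrt{n\log n}}{\gamma_{\text{train}}m_l-c\sqrt{n\log n}}=\frac{m_k}{m_l}\cdot\frac{1+o(1)}{1-o(1)}\leq\delta+o(1).$$

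There is no real obstacle here: the only point requiring a little care is that the split samples without replacement rather than producing i.i.d.\ Bernoulli indicators, but Hoeffding's classical observation that the without-replacement case is at least as concentrated as the with-replacement one handles this, and the integrality of $n\gamma_{\text{train}}$ is harmless. The lemma is, in essence, hypergeometric concentration together with a union bound over the finitely many clusters.
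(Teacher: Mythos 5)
Your proposal is correct and takes essentially the same route as the paper, which simply cites a hypergeometric tail bound (Skala's note on hypergeometric concentration) and leaves the details implicit; you have spelled out exactly that argument — identifying $m_k^{11}$ as a hypergeometric count, applying Hoeffding/Serfling concentration for sampling without replacement, and finishing with a union bound over the finitely many clusters. No gaps.
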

\begin{proof}
The result follows from \cite{skala2013hypergeometric}.
\end{proof}

\noindent{\bf Proof for Theorem~\ref{cor:nok}}
\begin{proof}
First we note that by a similar argument as in Lemma~\ref{prop:sdp_exactrecovery}, $|m_k^{22}-(1-\gamma_{\text{train}})m_k| \leq c\sqrt{m_k \log n}$ for all $k$ with probability at least $1-rn^{-3}$. Then the size of the smallest cluster of the test graph $A^{22}$ will be of the same order as $n\pi_{\text{min}}$. Also $A^{22}$ has size $\Theta(n)$. $A^{22}$ is independent of any $\hat{X}^{22}$. Thus in Theorem~\ref{thm:matrcv_general}, applying Lemma~\ref{prop:underestimate} and Lemma~\ref{prop:overestimate} to $A^{22}$ shows
$$
\epsilon_{under}= \Omega(n\rho \pi_{\text{min}}^2 /r^2),
$$
$$
\epsilon_{over}= (1+B_{\max})\sqrt{r_T\log n} + B_{\max}r,
$$
and Lemma~\ref{prop:sdp_exactrecovery} shows $\epsilon_{est}=0$, w.h.p. For fixed $r, \pi_{\min}$, we have $\epsilon_{under} \gg \epsilon_{over}$.
By Theorem~\ref{thm:matrcv_general}, choosing
$\Delta=(1+B_{\max}) \sqrt{r_T\log n}+B_{\max}r$ leads to MATR-CV returning the correct $r$. We can further refine $\Delta$ by noting that $
    r_{\max}:=\arg\max_{r_t}\langle A, \hat{X}_{r_t} \rangle \geq r
    $ w.h.p., then it suffices to consider the candidate range $\{r_1, \dots, r_{\max}\}$. The same arguments still hold for this range, thus $r_T$ and $r$ in $\Delta$ can be replaced with $r_{\max}$.
\end{proof}

\subsection{Proof of Theorem~\ref{thm:nok_mmsb}}
\label{subsec:proofmmsb}
 
In the following, we show theoretical guarantees of using MATR-CV to do model selection on MMSB with the SPACL algorithm proposed by \cite{Mao2017EstimatingMM}. We assume $A$ has self-loops for clarify of exposition. Adding the diagonal terms introduces a term that is asymptotically negligible compared with other terms, thus does not change our results.

First we have the following concentration lemma regarding the criterion $\langle A^2-\text{diag}(A^2), X \rangle$ for a general normalized clustering matrix $X$, which will be used to derive the three errors in Theorem~\ref{thm:matrcv_general}.

\begin{lemma}
For any general normalized clustering matrix $X$ satisfying $X\mathbf{1}_n=\mathbf{1}_n$, and an adjacency matrix $A$ generated from its expectation matrix $P$ independent of $X$, w.h.p.
\begin{align}
    \langle \hat{S}-S, X\rangle = O(n\rho\sqrt{\log n}),
\end{align}
where $\hat{S}=A^2-\text{diag}(A^2)$, $S=P^2-\text{diag}(P^2)$.
\label{lem:A2_conc}
\end{lemma}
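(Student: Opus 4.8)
The plan is to peel off the randomness in $\hat S = A^2 - \mathrm{diag}(A^2)$ around $S = P^2 - \mathrm{diag}(P^2)$ by setting $E := A-P$ and expanding $A^2-P^2 = E^2 + EP + PE$, so that
\begin{align*}
\hat S - S \;=\; (A^2 - P^2) - \mathrm{diag}(A^2 - P^2) \;=\; \big(E^2 + EP + PE\big) - \mathrm{diag}\big(E^2 + EP + PE\big),
\end{align*}
and hence $\langle \hat S - S, X\rangle$ breaks into a \emph{cross term} $\langle EP + PE,\, X\rangle$, a \emph{quadratic term} $\langle E^2,\, X\rangle$, and a \emph{diagonal correction} $\sum_i X_{ii}\,(A^2 - P^2)_{ii}$. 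I would bound the three pieces separately, using throughout the structural facts: $X$ is an orthogonal projection independent of $A$ with $\norm{X}_{\mathrm{op}}=1$, diagonal entries $X_{ii}\in[0,1]$, and $\tr{X}=\mathrm{rank}(X)$ (which is $O(1)$ in our applications, since the candidate set $\{r_1,\dots,r_T\}$ is bounded); the parameter bounds $P_{ij}=\Theta(\rho)$, hence $\norm{P}_F^2=\Theta(n^2\rho^2)$; and the spectral concentration $\norm{A-P}_{\mathrm{op}}=O(\sqrt{n\rho})$ with high probability from~\cite{lei2015}, valid because the ambient sparsity assumption (as in Theorem~\ref{thm:nok_mmsb}) forces $n\rho/\log n\to\infty$.

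The cross term is the dominant contribution and is where the care is needed. I would first rewrite $\langle EP + PE,\, X\rangle = \langle E,\, XP + PX\rangle$, which is a linear form in the independent, mean-zero, $[-1,1]$-bounded entries $\{E_{ij}\}_{i\le j}$; since $XP+PX$ is symmetric, Hoeffding's inequality applies with sub-Gaussian proxy $\sum_{i\le j}\big(2(XP+PX)_{ij}\big)^2 \le 4\norm{XP+PX}_F^2 \le 16\norm{P}_F^2 = O(n^2\rho^2)$, using $\norm{XP}_F,\norm{PX}_F\le\norm{X}_{\mathrm{op}}\norm{P}_F=\norm{P}_F$. Choosing the deviation level $\asymp\norm{P}_F\sqrt{\log n}$ then yields $|\langle EP+PE,X\rangle| = O(n\rho\sqrt{\log n})$ with high probability. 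The hard part — really the entire point of the lemma — is that one \emph{must} use both the entrywise boundedness of $E$ and the smallness of the entries of $P$ here: a naive operator-/nuclear-norm bound $\norm{E}_{\mathrm{op}}\,\norm{PX}_{*} \asymp \sqrt{n\rho}\cdot\sqrt{\mathrm{rank}(X)}\,\norm{P}_F \asymp (n\rho)^{3/2}$ is far too lossy once $n\rho\gg\log n$.

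The remaining two pieces are lower order. For the quadratic term, writing $X=UU^\top$ with $U^\top U = I$, $\langle E^2, X\rangle = \tr{U^\top E^2 U} = \norm{EU}_F^2 \le \norm{E}_{\mathrm{op}}^2\,\mathrm{rank}(X) = O(n\rho)$ with high probability (more generally $O(n\rho\,\mathrm{rank}(X))$, absorbed into $O(n\rho\sqrt{\log n})$ whenever $\mathrm{rank}(X)=O(\sqrt{\log n})$, in particular in our $O(1)$-rank regime). For the diagonal correction, since $A_{ik}^2=A_{ik}$ one checks $(A^2-P^2)_{ii}=\sum_k E_{ik}^2 + 2\sum_k E_{ik}P_{ik}$; the first part is a sum of $n$ independent $[0,1]$ variables with mean $\sum_k P_{ik}(1-P_{ik})=\Theta(n\rho)$, so it concentrates to $\Theta(n\rho)$ (Bernstein/Chernoff plus a union bound over $i$, using $n\rho\gg\log n$), and after weighting by $X_{ii}\in[0,1]$ with $\sum_i X_{ii}=\mathrm{rank}(X)=O(1)$ it contributes $O(n\rho)$; the second part, $2\sum_{i,k}X_{ii}P_{ik}E_{ik} = \langle E,\, 2\,\mathrm{diag}(X)\,P\rangle$, is once more a linear form in $\{E_{ij}\}_{i\le j}$ with proxy $O\big(\sum_i X_{ii}^2\sum_k P_{ik}^2\big) = O(n\rho^2)$, hence $O(\rho\sqrt{n\log n}) = o(n\rho\sqrt{\log n})$ by Hoeffding. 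Summing the three bounds gives $|\langle\hat S - S, X\rangle| = O(n\rho\sqrt{\log n})$ with high probability.

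One bookkeeping remark: the constraint $X\mathbf{1}_n=\mathbf{1}_n$ is not actually used for this concentration estimate — it is needed elsewhere, for the signal-level comparisons behind $\epsilon_{\mathrm{under}}$ and $\epsilon_{\mathrm{over}}$ — and in the application $X=\hat X^{22}_{r_t}$ is a function of the disjoint blocks $A^{11},A^{21}$, hence independent of the testing subgraph $A^{22}$, so the estimate is applied conditionally on $(A^{11},A^{21})$ and then union-bounded over the $\Theta(1)$ candidate ranks and repetitions.
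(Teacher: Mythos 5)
Your proof is correct, but it takes a genuinely different route from the paper's. The paper works directly with the polynomial $f(A)=\tfrac12\sum_{i,j,k}A_{ij}A_{jk}X_{ik}$ and shows it is a $(1/2,0)$ self-bounding function of the independent edge variables, which yields a Bernstein-type tail around $\E[f]=\Theta(n^2\rho^2)$ and hence the $O(n\rho\sqrt{\log n})$ deviation; crucially, the self-bounding step $0\le f-f_{uv}\le 1$ uses exactly the entrywise nonnegativity of $X$ together with $X\mathbf{1}_n=\mathbf{1}_n$ (the hypothesis you observe you never need), and in exchange the paper's bound is completely free of any dependence on $\mathrm{rank}(X)$. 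Your decomposition $A^2-P^2=E^2+EP+PE$ with $E=A-P$ trades these hypotheses the other way: Hoeffding on the linear form $\langle E, XP+PX\rangle$ with variance proxy $\norm{XP+PX}_F^2=O(n^2\rho^2)$ recovers the same dominant $O(n\rho\sqrt{\log n})$ term using only $\norm{X}_{\mathrm{op}}\le 1$, but the quadratic piece $\langle E^2,X\rangle=\norm{EU}_F^2\le\norm{E}_{\mathrm{op}}^2\,\mathrm{rank}(X)$ and the diagonal correction each cost a factor of $\mathrm{rank}(X)$, so your argument needs $\mathrm{rank}(X)=O(\sqrt{\log n})$ (satisfied here since $r_T=\Theta(1)$) where the paper's does not. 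Your version is arguably better suited to the MMSB application, where $X=\hat U\hat U^T$ comes from an SVD and entrywise nonnegativity is not guaranteed, and it isolates transparently where the $\sqrt{\log n}$ comes from (the linear cross term) and why a naive $\norm{E}_{\mathrm{op}}\cdot\norm{PX}_*$ bound of order $(n\rho)^{3/2}$ is too lossy; the paper's version is the one to use if one wants to let the number of candidate clusters grow. Both proofs rely on $\norm{A-P}_{\mathrm{op}}=O(\sqrt{n\rho})$ only for lower-order terms (the paper not at all), and both correctly exploit the independence of $X$ from the test block.
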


\begin{proof}

\begin{align}
\langle \hat{S}-S, X\rangle&=\sum_{j, i\neq k}(A_{ij}A_{jk}-P_{ij}P_{jk})X_{ik}\notag\\
&=\underbrace{\sum_{i,j,k}(A_{ij}A_{jk}-E[A_{ij}A_{jk}])X_{ik}}_{\text{Part (i)}}-\underbrace{\sum_{i,j}(A_{ij}-E[A_{ij}])X_{ij}}_{\text{Part (ii)}}
\label{eq:part1part2_A2}
\end{align}

To bound Part (i), we will first bound $f(A_{ij},1\leq i\le j \leq n)=\sum_{i,j,k}A_{ij}A_{jk}X_{ik}/2$.  Let
\begin{align*}
f_{uv}:=f(A_{ij},1\leq i\le j \leq n, A_{uv}=0)=f(A)-\frac{A_{uv}\sum_{k}A_{vk}X_{uk}+A_{uv}\sum_{i}A_{iu}X_{iv}}{2},
\end{align*}
%and $\tilde{d}_{u,\ell}$ denote $d_{u,\ell}-A_{uv}$. 
Clearly, $0\leq f-f_{uv}\leq 1$ since $X\mathbf{1}_n=\mathbf{1}_n$, and 
\begin{align*}
\sum_{u<v} (f-f_{uv})&\leq \sum_{u<v}\frac{A_{uv}\sum_{k}A_{vk}X_{uk}+A_{uv}\sum_{i}A_{iu}X_{iv}}{2}\\
&\leq \frac{1}{2}\sum_{u,v}\frac{A_{uv}\sum_{k}A_{vk}X_{uk}+A_{uv}\sum_{i}A_{iu}X_{iv}}{2}\\
&\leq \frac{1}{2}\sum_{u,v,k}\frac{A_{uv}A_{vk}X_{uk}+A_{iu}A_{uv}X_{iv}}{2}\\
&\leq f/2
\end{align*}
So $f$ is $(1/2,0)$ self bounding. Hence,
\begin{align*}
P(f-E[f]\geq t)\leq \exp\left(\frac{-t^2}{E[f]+t}\right)\\
P(f-E[f]\leq -t)\leq \exp\left(\frac{-t^2}{E[f]+2t/3}\right)
\end{align*}
Note that $E[f]=\Theta(n^2\rho^2)$, and so setting $t=\Theta(n\rho\sqrt{\log n})$ we get:
\begin{align*}
P(f-E[f]\geq t)= O(1/n)\\
P(f-E[f]\leq -t)=O(1/n)
\end{align*}

Using the same argument on Part (ii), shows that it is $O(\sqrt{n\rho\log n})$ w.h.p.

Therefore using Eq~\eqref{eq:part1part2_A2}, we see that:
\begin{align*}
\langle \hat{S}-S,X\rangle =O(n\rho\sqrt{\log n})
\end{align*}
w.h.p.
\end{proof}

The next two propositions are for general underfitting and overfitting normalized clustering matrices $\hat{X}$ independent of $A$, which will be used to derive $\epsilon_{\text{under}}$ and $\epsilon_{\text{over}}$.

%Cnd consider the singular value decomposition of $\Theta = U D V^T$($U$ is rectangular), and $\hat {\Theta} = \hat{U} \hat{D} \hat{V}^T$. Then $\hat{X} = \hat{\Theta} (\hat{\Theta} ^T \hat{\Theta})^{-1} \hat{\Theta}^T = \hat{U}\hat{U}^T,$ and $\Theta \Theta^T = UD^2U^T.$

\begin{Proposition}\label{prop:underestimatemmsb}
For a sequence of underfitting normalized clustering matrix $\{\hat{X}_{r_t}\}_{r_t<r}$, all independent of $A$, then with high probability, $\max_{r_t<r}\langle A^2-\text{diag}(A^2), \hat{X}_{r_t} \rangle \leq \langle A^2-\text{diag}(A^2), X_0 \rangle - \Omega(n^2\rho^2)$.
\end{Proposition}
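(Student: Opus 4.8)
The plan is to first replace the random criterion $\langle A^2-\mathrm{diag}(A^2),\cdot\rangle$ by its population counterpart using the concentration bound already established, and then establish a gap of order $n^2\rho^2$ at the population level via a rank-deficiency argument. Throughout I write $\hat S=A^2-\mathrm{diag}(A^2)$, $S=P^2-\mathrm{diag}(P^2)$.

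\textbf{Step 1 (concentration and union bound).} Each underfitting $\hat X_{r_t}$ is an orthogonal projection with $\hat X_{r_t}\mathbf 1_n=\mathbf 1_n$ and is independent of $A$, so Lemma~\ref{lem:A2_conc} gives $|\langle \hat S-S,\hat X_{r_t}\rangle|=O(n\rho\sqrt{\log n})$ with probability $1-O(1/n)$, and likewise for $X_0$. Since there are at most $r-1=O(1)$ underfitting candidates, a union bound shows that w.h.p.
\[
\max_{r_t<r}\langle \hat S,\hat X_{r_t}\rangle\le \max_{r_t<r}\langle S,\hat X_{r_t}\rangle+O(n\rho\sqrt{\log n}),\qquad \langle \hat S,X_0\rangle\ge\langle S,X_0\rangle-O(n\rho\sqrt{\log n}).
\]
Because $\sqrt{n\rho}/(\log n)^{1+\xi}\to\infty$ forces $n\rho/\sqrt{\log n}\to\infty$, we have $n\rho\sqrt{\log n}=o(n^2\rho^2)$, so it suffices to prove $\langle S,X_0\rangle-\max_{r_t<r}\langle S,\hat X_{r_t}\rangle=\Omega(n^2\rho^2)$.

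\textbf{Step 2 (population gap via rank deficiency).} Write $P=\Theta B\Theta^T$ (the self-loop convention; dropping the diagonal of $P$ only changes $P^2$ by an operator-norm $O(n\rho^2)$ term, which is negligible below). Then $P^2=\Theta\bigl[B(\Theta^T\Theta)B\bigr]\Theta^T$ has column space contained in $\mathrm{col}(\Theta)=\mathrm{range}(X_0)$, so $X_0P^2=P^2$ and $\langle P^2,X_0\rangle=\mathrm{trace}(P^2)$; subtracting the diagonal and using $\|\mathrm{diag}(P^2)\|_{op}=O(n\rho^2)$, $\mathrm{trace}(X_0)=r=O(1)$ gives $\langle S,X_0\rangle\ge\mathrm{trace}(P^2)-O(n\rho^2)$. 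For the underfitting side, $B$ full rank and $\Theta$ full column rank (w.h.p., guaranteed e.g. by the pure-node assumption) imply $\mathrm{rank}(P^2)=r$; for any projection $\hat X_{r_t}$ of rank $r_t<r$, we have $\langle\mathrm{diag}(P^2),\hat X_{r_t}\rangle\ge 0$ and, by Ky Fan's principle for the PSD matrix $P^2$,
\[
\langle S,\hat X_{r_t}\rangle\le\langle P^2,\hat X_{r_t}\rangle\le\sum_{i=1}^{r_t}\lambda_i(P^2)\le\mathrm{trace}(P^2)-\lambda_r(P^2).
\]
Subtracting, $\langle S,X_0\rangle-\langle S,\hat X_{r_t}\rangle\ge\lambda_r(P^2)-O(n\rho^2)$ for every $r_t<r$.

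\textbf{Step 3 (lower bound on $\lambda_r(P^2)$) and conclusion.} Here $\lambda_r(P^2)=\sigma_r(P)^2$, where $\sigma_r(P)$ is the smallest nonzero singular value of $P$. Using $\sigma_r(\Theta B\Theta^T)\ge\sigma_{\min}(\Theta)^2\sigma_{\min}(B)=\lambda_{\min}(\Theta^T\Theta)\,\lambda^*(B)$ together with $\lambda^*(B)=\Omega(\rho)$, and the fact that $\Theta^T\Theta=\sum_{i=1}^n\Theta_i\Theta_i^T$ concentrates around $n\,\mathbb E[\Theta_1\Theta_1^T]$ — a fixed positive-definite matrix since $\alphav$ is a fixed constant — so $\lambda_{\min}(\Theta^T\Theta)=\Omega(n)$ w.h.p., we obtain $\sigma_r(P)=\Omega(n\rho)$ and hence $\lambda_r(P^2)=\Omega(n^2\rho^2)$. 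Since $n\rho^2=o(n^2\rho^2)$, Step 2 yields $\langle S,X_0\rangle-\max_{r_t<r}\langle S,\hat X_{r_t}\rangle=\Omega(n^2\rho^2)$, and feeding this into Step 1 finishes the proof. The main obstacle is the model-specific Step 3 — controlling $\lambda_{\min}(\Theta^T\Theta)$ and cleanly tracking how $\sigma_r(P)$ inherits the scale $n\rho$ from $\lambda^*(B)=\Omega(\rho)$; the conceptual heart is the rank-drop inequality of Step 2, namely that an underfitting projection must sacrifice the entire $r$-th eigendirection of $P^2$, which is of order $(n\rho)^2$, while Step 1 is a routine invocation of Lemma~\ref{lem:A2_conc}.
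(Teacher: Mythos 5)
Your proposal is correct and follows essentially the same route as the paper's proof: reduce to the population quantity via Lemma~\ref{lem:A2_conc}, show an underfitting projection must lose at least $\lambda_r(P^2)$ from $\mathrm{trace}(P^2)$ (your Ky Fan step is the paper's Von Neumann step), lower-bound $\lambda_r(P^2)=\Omega(n^2\rho^2)$ via $\lambda_{\min}(\Theta^T\Theta)=\Omega(n)$ and $\lambda^*(B)=\Omega(\rho)$ (the paper cites Lemmas 3.6 and B.4 of \cite{Mao2017EstimatingMM} for exactly this), and absorb the $O(n\rho^2)$ diagonal correction. No gaps.
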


\begin{proof}
Suppose $P=\Theta B \Theta^T$ has eigenvalue decomposition $Q\Lambda Q^T$. Also consider the singular value decomposition of $\hat {\Theta} = \hat{U} \hat{D} \hat{V}^T$, then $\hat{X}_{r_t} = \hat{\Theta} (\hat{\Theta} ^T \hat{\Theta})^{-1} \hat{\Theta}^T = \hat{U}\hat{U}^T$.  We have
%\rd (form of $P$ is used here)\bk
\begin{align}
\langle P^2, X_0 - \hat{X}_{r_t} \rangle & = \tr{P^2} - \tr{P^2\hat{U}\hat{U}^T}    \notag\\
& = \tr{\Lambda^2} - \tr{\Lambda^2Q^T \hat{U}\hat{U}^T Q}  \notag\\
& = \tr{\Lambda^2} - \tr{\Lambda^2\hat{Q}^T \hat{Q}} \notag\\
& \geq \min_{i}\Lambda_{i,i}^2 =  (\lambda^*(P))^2,  
\label{eq:mmsb_under_pop}
\end{align}
where $\hat{Q}:= \hat{U}^T Q$, $\lambda^*(P)$ is the smallest singular value of $P$. The last line follows from Von Neumann's trace inequality and the fact that $\norm{\hat{Q}^T\hat{Q}}_{op} \leq 1$ and $\text{rank}(\hat{Q}^T\hat{Q})\leq r_t<r$.

Applying Lemma B.4 and Lemma 3.6 in \cite{Mao2017EstimatingMM} to \eqref{eq:mmsb_under_pop}, with probability at least $1-r_0\exp\left(-\frac{n}{36\nu^2(1+\alpha_0)^2}\right)$, 
\begin{align}
    \langle P^2, X_0 - \hat{X}_{r_t} \rangle \geq \frac{n^2(\lambda^*(B))^2}{4\nu^2(1+\alpha_0)^2},
\end{align}
that is, $\langle P^2, X_0 - \hat{X}_{r_t} \rangle = \Omega_P(n^2\rho^2)$.

Now 
\begin{align}
    |\langle \text{diag}(P^2), X_0 - \hat{X}_{r_t} \rangle | \leq 2\max_i(P^2)_{ii}\tr{X_0} = O(n\rho^2).
    \label{eq:diagP2}
\end{align}
By Lemma~\ref{lem:A2_conc}, with a union bound since $r$ is fixed,
$$
\max_{r_t<r} |\langle \hat{S} -S, \hat{X}_{r_t} \rangle| = O_P(n\rho\sqrt{\log n})
$$
We have the desired inequality. 

%Noting that $A^2-P^2 = (A-P)^2+P(A-P)+(A-P)P$, 
%\begin{align} 
%| \langle
%(A-P)^2, \hat{X}_k - X_0
%\rangle | \leq 2\|(A-P)^2\|_{op} \tr{X_0} = O(n\rho)
%\label{eq:mmsb_under_noise1}
%\end{align}
%with probability at least $1-n^{-1}$. By Lemma~\ref{lem:generalsamplebound}, since $\hat{X}_k$ and $A$ are independent, 
%\begin{align}
%   | \langle
%A-P, P(\hat{X}_k - X_0)
%\rangle | = O(\sqrt{\tr{PX_0} \log n}) = O(\sqrt{n\rho \log n})
%\label{eq:mmsb_under_noise2}
%\end{align}
%with probability at least $1-O(n^{-1})$. Here we use the fact that $\tr{PX_0} \leq \|P\|_F \tr{X_0}=O(n\rho)$. Finally,
%\begin{align}
%   |\langle diag(A^2), \hat{X}_k - X_0 \rangle | & \leq 2 \max_i (A^2)_{i,i} \tr{X_0} \notag\\
%   & = 2r_0 \max_i \sum_k A_{i,k} \notag\\
%   & = 2r_0 \max_i \sum_k [(A_{i,k}-P_{i,k}) + P_{i,k}]    \notag\\
%   & = O(\sqrt{n\rho}\log n) + O(n\rho) = O(n\rho)
%   \label{eq:mmsb_under_noise3}
%\end{align}
%w.h.p., since $\max_i\sum_k (A_{i,k}-P_{i,k})=O(\sqrt{n\rho}\log n)$ w.h.p., and $\sqrt{n\rho}/\log n \to \infty$.
%\rd Also shouldnt the above have trace of $\hat{X}_k$ as well?\bk
%Eqs~\eqref{eq:mmsb_under_noise1}-\eqref{eq:mmsb_under_noise3} imply w.h.p. 
%\begin{align}
%    |\langle A^2-P^2-diag(A^2), \hat{X}_k - X_0\rangle| = O(n\rho),
%\end{align}
%and $\langle A^2 -diag(A^2), X_0-\hat{X}_k \rangle = \Omega(n^2\rho^2)$ w.h.p.

\end{proof}

\begin{Proposition}\label{prop:overestimatemmsb}
% Aonsider $P = Z_0 B Z_0^T$. And $C_1, ..., C_r$ is the true clusters with the normalized clustering matrix $X_0$. $\hat{S}_1, ..., \hat{S}_{\hat{r}}$ is an estimated clusters with cluster size $\hat{r} < r$. Its normalized clustering matrix is $\hat{X}$. If we assume for the true cluster, the cluster size is balanced, i.e., for any $i$, $m_i = |S_i| \geq \pi n = \pi \sum m_i$ for some $\pi$, we can show with high probability $\langle A, \hat{X}\rangle  \leq \langle A, X_0\rangle - \Omega(np_{\text{gap}}),$ where $p_{\text{gap}}= \min_i \min_j B_{i,i} - B_{i,j}$.
% Cssume the $\hat{X}^{22}$ in Clgorithm \ref{alg:1} satisfies
% $
% \text{trace}(\hat{X}^{22})< r,
% $
% then 
% $\langle A^{22}, \hat{X}^{22}\rangle  \leq \langle A^{22}, X_0^{22}\rangle -\Omega(np_{\text{gap}}\pi_\text{min}^2/r^2)
% $
For a sequence of overfitting normalized clustering matrix $\{\hat{X}_{r_t}\}_{r<r_t\leq r_T}$ independent of $A$, with high probability, $$\max_{r<r_t\leq r_T}\langle A^2-\text{diag}(A^2), \hat{X}_{r_t} \rangle \leq \langle A^2-\text{diag}(A^2), X_0 \rangle + O(n\rho \sqrt{\log n})$$.
\end{Proposition}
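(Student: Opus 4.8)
The plan is to split the target quantity into a ``population'' part and a ``fluctuation'' part: for each overfitting $\hat X_{r_t}$ write
$\langle \hat S,\hat X_{r_t}\rangle-\langle \hat S,X_0\rangle=\langle S,\hat X_{r_t}-X_0\rangle+\langle \hat S-S,\hat X_{r_t}-X_0\rangle$, where $S=P^2-\mathrm{diag}(P^2)$ and $\hat S=A^2-\mathrm{diag}(A^2)$. The fluctuation part is dispatched immediately by Lemma~\ref{lem:A2_conc}: each $\hat X_{r_t}$ and $X_0$ is a normalized clustering matrix with $X\mathbf{1}_n=\mathbf{1}_n$ and is independent of $A$, and since $r_T=\Theta(1)$ there are only finitely many candidates $r_t$, so a union bound gives $\max_{r<r_t\le r_T}|\langle \hat S-S,\hat X_{r_t}\rangle|=O(n\rho\sqrt{\log n})$ and $|\langle \hat S-S,X_0\rangle|=O(n\rho\sqrt{\log n})$ with high probability. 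It then remains only to show $\max_{r<r_t\le r_T}\langle S,\hat X_{r_t}-X_0\rangle=O(n\rho^2)$, which is $o(n\rho\sqrt{\log n})$.

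For the population part, the key algebraic facts are that $P^2\succeq 0$ (as $P$ is symmetric) and that $X_0=\Theta(\Theta^T\Theta)^{-1}\Theta^T$ is the orthogonal projection onto $\mathrm{col}(\Theta)$, which contains $\mathrm{col}(P)$ because $P=\Theta B\Theta^T$ (recall $A$ has self-loops here). Hence $X_0P=P=PX_0$, so $\langle P^2,X_0\rangle=\mathrm{trace}(P^2X_0)=\mathrm{trace}(P^2)$. On the other hand, any normalized clustering matrix $\hat X_{r_t}$ is itself an orthogonal projection, so $0\preceq\hat X_{r_t}\preceq I$; since $P^2\succeq 0$ and $I-\hat X_{r_t}\succeq 0$, the trace of their product is nonnegative, which yields $\langle P^2,\hat X_{r_t}\rangle=\mathrm{trace}(P^2\hat X_{r_t})\le\mathrm{trace}(P^2)=\langle P^2,X_0\rangle$. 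Thus $\langle P^2,\hat X_{r_t}-X_0\rangle\le 0$ for every overfitting $\hat X_{r_t}$, and note that no assortativity assumption on $B$ is used at this step.

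Finally, the diagonal corrections are negligible, exactly as in Eq.~\eqref{eq:diagP2}: since $(P^2)_{ii}=\sum_j P_{ij}^2=O(n\rho^2)$ and $\mathrm{trace}(\hat X_{r_t})\le r_T=\Theta(1)$, we get $|\langle\mathrm{diag}(P^2),\hat X_{r_t}\rangle|\le\max_i(P^2)_{ii}\,r_T=O(n\rho^2)$ and likewise $|\langle\mathrm{diag}(P^2),X_0\rangle|=O(n\rho^2)$. Combining, $\langle S,\hat X_{r_t}-X_0\rangle\le 0+O(n\rho^2)$, and adding back the $O(n\rho\sqrt{\log n})$ fluctuation terms gives $\max_{r<r_t\le r_T}\langle \hat S,\hat X_{r_t}\rangle\le\langle \hat S,X_0\rangle+O(n\rho\sqrt{\log n})$ with high probability. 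I do not expect a genuine obstacle in this proposition: everything rests on the PSD trace inequality, the containment $\mathrm{col}(P)\subseteq\mathrm{col}(\Theta)$, and the already-proven concentration Lemma~\ref{lem:A2_conc}. The real work in this portion of the paper sits in the underfitting bound (Proposition~\ref{prop:underestimatemmsb}, needing $\lambda^*(B)=\Omega(\rho)$) and in controlling the estimation error $\epsilon_{\mathrm{est}}$ through the regression step, not in this overfitting estimate.
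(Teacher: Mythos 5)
Your proposal is correct and follows essentially the same route as the paper: the population bound $\langle P^2,\hat X_{r_t}\rangle\le\langle P^2,X_0\rangle$ (which the paper phrases as $\tr{\Lambda^2}-\tr{\Lambda^2\hat Q^T\hat Q}\ge 0$ via the eigendecomposition, equivalent to your PSD-trace argument using $\hat X_{r_t}\preceq I$ and $X_0P=P$), the diagonal correction handled as in Eq.~\eqref{eq:diagP2}, and the fluctuation controlled by Lemma~\ref{lem:A2_conc} with a union bound over the $\Theta(1)$ candidates. Your closing assessment is also accurate: the substantive work lies in the underfitting and estimation-error bounds, not here.
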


\begin{proof}
First note by a similar argument as~\eqref{eq:mmsb_under_pop}, %(\rd this may need modification for deg corrected \bk)
\begin{align}
   \langle P^2, X_0 - \hat{X}_{r_t} \rangle = \tr{\Lambda^2} - \tr{\Lambda^2 \hat{Q}^T\hat{Q}} \geq 0.
\end{align}
Since $\hat{X}_{r_t}$ and $A$ are independent, by an argument similar to~\eqref{eq:diagP2} and Lemma~\ref{lem:A2_conc}, 
\begin{align}
   \max_{r<r_t\leq r_T}\langle A^2-\text{diag}(A^2), \hat{X}_{r_t} \rangle \leq \langle A^2-\text{diag}(A^2), X_0 \rangle + O(n\rho\sqrt{\log n})
\end{align}
w.h.p.
\end{proof}

Next we derive the estimation error by considering the parameter estimates recovered by SPACL \citep{Mao2017EstimatingMM}. For notational convenience, we first derive some bounds for $\hat{\Theta}$ and $\hat{B}$ estimated from the matrix $A$, with the understanding that the same asymptotic bounds apply to estimates obtained from the training graph provided the split is random and the number of training nodes is $\Theta(n)$. The next lemma states the parameter estimation error from \cite{Mao2017EstimatingMM}. 

%\rd update for dcmmsb \bk
\begin{lemma}
There exists a permutation matrix $\Pi$, such that, with probability larger than $1-O(rn^{-2})$,
\begin{align}
    \Delta_1 & := \norm{\hat{\Theta}-\Theta\Pi}_F = O\left(\frac{(\log n)^{1+\xi}}{\sqrt{\rho}}\right),  \label{eq:thetabound}  \\
    \Delta_2 &:= \norm{\hat{B}- \Pi ^T B \Pi}_F = O(\sqrt{\frac{\rho}{n}} (\log n)^{1+\xi}).
\end{align}

\label{lem:mmsb_para_error} 
\end{lemma}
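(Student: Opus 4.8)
\begin{sproof}
The plan is to derive both bounds by invoking the estimation guarantee for the SPACL algorithm established in \cite{Mao2017EstimatingMM}, after checking that the hypotheses of Theorem~\ref{thm:nok_mmsb} imply the regularity conditions required there. First I would confirm the identifiability and conditioning requirements. The Dirichlet prior with fixed $\alphav$ guarantees, with probability at least $1-r\exp(-cn)$, that the rows of $\Theta$ span a non-degenerate simplex and that the eigenvalues of $\Theta^T\Theta$ are of order $\Theta(n)$; this is exactly the content of the two lemmas of \cite{Mao2017EstimatingMM} already invoked in Proposition~\ref{prop:underestimatemmsb}. Together with $\lambda^*(B)=\Omega(\rho)$ and the assumed existence of at least one pure node per community, these are precisely the conditions under which SPACL consistently recovers the parameters.

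The core of the argument is spectral. I would start from $P=\Theta B\Theta^T$ and control the deviation $\norm{A-P}_{op}$. Since $\sqrt{n\rho}/(\log n)^{1+\xi}\to\infty$ implies $n\rho\gg\log n$, a matrix Bernstein bound (as in \cite{lei2015}) gives $\norm{A-P}_{op}=O(\sqrt{n\rho})$ with probability at least $1-O(n^{-2})$. Because the conditioning of $\Theta$ and $\lambda^*(B)=\Omega(\rho)$ force the smallest nonzero singular value of $P$ to be of order $n\rho$, the Davis--Kahan theorem controls the leading rank-$r$ invariant subspace in operator norm. The delicate step, which produces the sharp dependence on $\rho$, is the row-wise ($\ell_{2\to\infty}$) perturbation bound on the empirical singular vectors $\hat V$: SPACL runs a vertex-hunting (successive-projection) routine on the individual rows of $\hat V$, so a per-row control rather than a global Frobenius bound is needed. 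I would import this entrywise eigenvector bound from \cite{Mao2017EstimatingMM}; this is the main obstacle, and essentially all of its technical content is already carried out there.

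Given the row-wise control, the vertex-hunting step locates the $r$ estimated pure-node rows up to the same order of error, and writing each row of $\hat V$ in barycentric coordinates relative to these estimated vertices yields $\hat\Theta$. Propagating the vertex error and the entrywise eigenvector error through the well-conditioned coordinate map gives $\Delta_1=\norm{\hat\Theta-\Theta\Pi}_F=O((\log n)^{1+\xi}/\sqrt{\rho})$ for the permutation $\Pi$ aligning estimated and true communities. For $\Delta_2$, I would write the plug-in estimator $\hat B=(\hat\Theta^T\hat\Theta)^{-1}\hat\Theta^T\hat P\,\hat\Theta(\hat\Theta^T\hat\Theta)^{-1}$, with $\hat P$ the rank-$r$ spectral truncation of $A$, and propagate $\Delta_1$ together with $\norm{\hat P-P}_{op}=O(\sqrt{n\rho})$ through this expression; the extra $\sqrt{\rho/n}$ factor relative to $\Delta_1$ arises from the two factors of $(\hat\Theta^T\hat\Theta)^{-1}=\Theta(1/n)$ weighed against the $O(\sqrt{n\rho})$ spectral error. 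The remaining work is bookkeeping: confirming the assumptions match and tracking how the polylogarithmic factor $(\log n)^{1+\xi}$ propagates so as to land the stated probability $1-O(rn^{-2})$. Finally, since the node split is random with a training set of $\Theta(n)$ nodes (Lemma~\ref{lem:skala}), the identical rates hold verbatim for the estimates $\hat\Theta^{11}$ and $\hat B$ computed from $A^{11}$.
\end{sproof}
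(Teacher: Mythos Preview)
Your proposal is correct and takes essentially the same approach as the paper: both invoke the SPACL estimation guarantees from \cite{Mao2017EstimatingMM}, and the paper's proof is simply a one-line citation of Corollary~3.7 there under the constant $\alphav,r,\lambda^*(B/\rho)$ regime. Your detailed sketch of the spectral and vertex-hunting steps inside that corollary is accurate but unnecessary here, since the lemma is intended only to quote the ready-made rates rather than to re-derive them.
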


\begin{proof}
These bounds follow directly from Corollary 3.7 of \cite{Mao2017EstimatingMM}, with $\alphav, r, \lambda^*(B/\rho)$ all being constant.
\end{proof}

%Suppose $B = \rho B_0$. From the proof of Theorem 3.5 in \cite{Mao2017EstimatingMM}, we know that there exists a permutation matrix $\Pi$, such that, with probability larger than $1-O(Kn^{-2})$,
%\begin{equation}\label{eq:thetabound}
%    \norm{\ev_i^T(\hat{\Theta} - \Theta \Pi)} \leq \tilde{O}(\frac{r_0^3 \nu ^3 (1+\alpha_0)^2}{\sqrt{\rho n} \lambda^*(B_0)}),
%\end{equation}
%\begin{equation}\label{eq:bbound}
%    \norm{\hat{B}- \Pi ^T B \Pi}_F \leq \tilde{O}(\frac{r_0^3\nu^3 (1+\alpha_0)^2\sqrt{\rho}}{\sqrt{ n } \lambda ^*(B_0)}).
%\end{equation}

In what follows, we omit the permutation matrix $\Pi$ to simplify notation. If $\Pi$ is not the identity matrix, we can always redefine $\Theta$ as $\Theta \Pi$, and $B$ as $\Pi^TB \Pi$. This would not affect the results, since we want to prove bounds on normalized clustering matrices where $\Pi$ always cancels out, i.e., $X = \Theta \Pi ((\Theta \Pi)^T \Theta \Pi)^{-1}(\Theta \Pi)^T = \Theta (\Theta^T \Theta)^{-1}\Theta^T.$

We are interested in bounding the estimation error in $\hat{H}=\hat{B}^{-1}(\hat{\Theta}^T\hat{\Theta})^{-1}\hat{\Theta}^T$. In order to build up the bound, we will make repeated use of the following two facts.

\begin{fact}\label{lem:prodminus}
For general matrices $C, \hat{C}, D, \hat{D}$,
$$\norm{\hat{C}\hat{D} - CD}_F \leq \norm{(\hat{C}-C)(\hat{D}-D)}_F + \norm{(\hat{C}-C)D}_F + \norm{C(\hat{D}-D)}_F$$
\end{fact}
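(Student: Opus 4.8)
The plan is to expand the product $\hat{C}\hat{D}$ around $CD$ by writing each factor as the true matrix plus an error term, and then apply the triangle inequality for the Frobenius norm. This is a purely algebraic statement, so no probabilistic or structural input is needed.

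First I would set $E_C := \hat{C}-C$ and $E_D := \hat{D}-D$, so that $\hat{C}=C+E_C$ and $\hat{D}=D+E_D$. Using bilinearity of matrix multiplication, multiplying out the two factors gives
$$\hat{C}\hat{D} = CD + C E_D + E_C D + E_C E_D,$$
and therefore
$$\hat{C}\hat{D} - CD = C E_D + E_C D + E_C E_D.$$
Then I would apply the triangle inequality $\norm{M_1+M_2+M_3}_F \le \norm{M_1}_F+\norm{M_2}_F+\norm{M_3}_F$ to these three summands, which yields
$$\norm{\hat{C}\hat{D} - CD}_F \le \norm{E_C E_D}_F + \norm{E_C D}_F + \norm{C E_D}_F.$$
Substituting back $E_C=\hat{C}-C$ and $E_D=\hat{D}-D$ gives exactly the claimed inequality. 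The argument uses only that matrix multiplication is bilinear (to justify the expansion) and that the Frobenius norm satisfies the triangle inequality; no compatibility assumptions beyond those needed for the products $\hat{C}\hat{D}$ and $CD$ to be defined are required.

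There is essentially no obstacle here: the statement is an algebraic identity followed by one invocation of the triangle inequality, and the only thing to verify is that the displayed products are conformable, which is implicit in assuming $\hat{C}\hat{D}$ and $CD$ make sense. The purpose of isolating this fact is convenience for the subsequent bounds on $\hat{H}=\hat{B}^{-1}(\hat{\Theta}^T\hat{\Theta})^{-1}\hat{\Theta}^T$, where it will be applied iteratively to peel off one estimated factor at a time and combine the individual estimation errors from Lemma~\ref{lem:mmsb_para_error}.
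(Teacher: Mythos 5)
Your proof is correct and matches the paper's argument exactly: both expand $\hat{C}\hat{D}-CD$ into the three terms $(\hat{C}-C)(\hat{D}-D)+(\hat{C}-C)D+C(\hat{D}-D)$ and apply the triangle inequality for the Frobenius norm. Nothing is missing.
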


\begin{proof}
The proof follows directly from expansion and the triangle inequality,
$$\hat{C}\hat{D} - CD = (\hat{C}-C)(\hat{D}-D) + (\hat{C}-C)D + C(\hat{D}-D).$$

\end{proof}

\begin{fact}
\label{lem:invminus}
For a general matrices $C$, $\hat{C}$, assume $\norm{(C-\hat{C})C^{-1}}_F< 1$, then 
$$\norm{\hat{C}^{-1} - C^{-1}}_F \leq \frac{\norm{C^{-1}}_F \norm{(C-\hat{C}) C^{-1}}_F}{1- \norm{(C-\hat{C})C^{-1}}_F}$$
\end{fact}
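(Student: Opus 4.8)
The plan is to prove this via the elementary resolvent identity together with a Neumann‑series bound on $\norm{\hat{C}^{-1}}$. First I would record the identity
$$\hat{C}^{-1}-C^{-1} = \hat{C}^{-1}(C-\hat{C})C^{-1},$$
which follows at once by expanding the right‑hand side as $\hat{C}^{-1}CC^{-1}-\hat{C}^{-1}\hat{C}C^{-1}$. Writing $E:=(C-\hat{C})C^{-1}$, so that $\hat{C}=(I-E)C$, the hypothesis becomes $\norm{E}_{F}<1$.

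Next I would invoke the Neumann series. Since $\norm{E}_{op}\le\norm{E}_{F}<1$, the matrix $I-E$ is invertible with $(I-E)^{-1}=\sum_{k\ge0}E^{k}$, so $\norm{(I-E)^{-1}}_{op}\le (1-\norm{E}_{op})^{-1}\le (1-\norm{E}_{F})^{-1}$. In particular $\hat{C}=(I-E)C$ is invertible, $\hat{C}^{-1}=C^{-1}(I-E)^{-1}$, and by submultiplicativity of the operator norm together with $\norm{\cdot}_{op}\le\norm{\cdot}_{F}$,
$$\norm{\hat{C}^{-1}}_{op}\le\norm{C^{-1}}_{op}\,\norm{(I-E)^{-1}}_{op}\le\frac{\norm{C^{-1}}_{F}}{1-\norm{E}_{F}}.$$
Substituting this into the resolvent identity and using $\norm{AB}_{F}\le\norm{A}_{op}\norm{B}_{F}$ then yields
$$\norm{\hat{C}^{-1}-C^{-1}}_{F}\le\norm{\hat{C}^{-1}}_{op}\,\norm{(C-\hat{C})C^{-1}}_{F}\le\frac{\norm{C^{-1}}_{F}\,\norm{(C-\hat{C})C^{-1}}_{F}}{1-\norm{(C-\hat{C})C^{-1}}_{F}},$$
which is exactly the claimed inequality.

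The only point that needs care is the bookkeeping between the two matrix norms: the hypothesis and the conclusion are phrased in the Frobenius norm, whereas convergence of the Neumann series and the geometric factor $(1-\norm{E}_{op})^{-1}$ are naturally operator‑norm statements, so at each step one must pass through $\norm{\cdot}_{op}\le\norm{\cdot}_{F}$ and the mixed submultiplicativity bound $\norm{AB}_{F}\le\norm{A}_{op}\norm{B}_{F}$. Beyond this there is no genuine obstacle — the estimate is the standard first‑order perturbation bound for matrix inversion, and it is precisely the tool that, combined with Fact~\ref{lem:prodminus}, lets one control the estimation error of $\hat{H}=\hat{B}^{-1}(\hat{\Theta}^{T}\hat{\Theta})^{-1}\hat{\Theta}^{T}$ in the proof of Theorem~\ref{thm:nok_mmsb}.
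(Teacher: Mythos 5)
Your proof is correct and rests on the same resolvent identity $\hat{C}^{-1}-C^{-1}=\hat{C}^{-1}(C-\hat{C})C^{-1}$ that the paper uses, so the overall architecture coincides; the only real divergence is in how the factor $\norm{\hat{C}^{-1}}$ is controlled. The paper applies the triangle inequality to $\hat{C}^{-1}=C^{-1}+\hat{C}^{-1}(C-\hat{C})C^{-1}$, obtains $\norm{\hat{C}^{-1}}_F\leq \norm{C^{-1}}_F+\norm{\hat{C}^{-1}}_F\norm{(C-\hat{C})C^{-1}}_F$, and rearranges; this self-bounding step is shorter but tacitly presupposes that $\hat{C}$ is invertible, which the statement of the Fact never asserts. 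Your Neumann-series argument, by writing $\hat{C}=(I-E)C$ with $E=(C-\hat{C})C^{-1}$ and $\norm{E}_{op}\leq\norm{E}_F<1$, actually \emph{proves} that $\hat{C}$ is invertible under the stated hypothesis, so it closes a small gap in the paper's version at the cost of the extra bookkeeping between $\norm{\cdot}_{op}$ and $\norm{\cdot}_F$ that you correctly flag. Both routes deliver the identical final bound, and your passage through $\norm{AB}_F\leq\norm{A}_{op}\norm{B}_F$ is valid, so there is nothing to fix.
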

\begin{proof}
First decompose
\begin{align}
    \hat{C}^{-1}-C^{-1} = \hat{C}^{-1}CC^{-1} - C^{-1}=(\hat{C}^{-1}C-I)C^{-1}=\hat{C}^{-1}(C-\hat{C}) C^{-1}.
    \label{eq:invminus_decomp}
\end{align}
Taking Frobenius norms,
$$\norm{\hat{C}^{-1}}_F \leq \norm{C^{-1}}_F + \norm{\hat{C}^{-1} (C-\hat{C})C^{-1}}_F \leq \norm{C^{-1}}_F + \norm{\hat{C}^{-1}}_F \norm{(C-\hat{C})C^{-1}}_F .$$
Rearranging, 
$$\norm{\hat{C}^{-1}}_F\leq \frac{\norm{C^{-1}}_F}{1- \norm{(C-\hat{C}) C^{-1}}_F}.$$
Applying this to~\eqref{eq:invminus_decomp},
$$\norm{\hat{C}^{-1} -C^{-1}}_F\leq \norm{\hat{C}^{-1}}_F \norm{(C-\hat{C}) C^{-1}}_F \leq \frac{\norm{C^{-1}}_F \norm{(C-\hat{C}) C^{-1}}_F}{1- \norm{(C-\hat{C}) C^{-1}}_F}.$$
\end{proof}

Next we have a lemma bounding the error in estimating the quantity $H = B^{-1}(\Theta^T\Theta)^{-1}\Theta^T$.

\begin{lemma}
Let $H = B^{-1}(\Theta^T\Theta)^{-1}\Theta^T$, $\hat{H} = \hat{B}^{-1}(\hat{\Theta}^T\hat{\Theta})^{-1}\hat{\Theta}^T$, then w.h.p.
\begin{align}
    \norm{H-\hat{H}}_F = O\left( \frac{(\log n)^{1+\xi}}{n\rho^{3/2}}\right)   \notag\\
    \norm{H}_F=O\left(\frac{1}{\sqrt{n}\rho}. \right)
\end{align}
% $$\norm{G-\hat{G}}_F \leq 2\Delta_2 F_2^2F_1+ 2F_2((\Delta_1^2 + 2\sqrt{n} \Delta_1)F_1^2), $$

% $$\norm{G}_F\leq 2 F_2\nu (1+\alpha_0)/n,$$
%$$\norm{H-\hat{H}}_F \leq \sqrt{n} \Delta_3 + (F_1F_2+ \Delta_3) \Delta_1,$$
%$$\norm{H}_F\leq 2 F_2\nu (1+\alpha_0)/\sqrt{n},$$
%where $\Delta_3 = 2\Delta_2 F_2^2F_1+ 2F_2((\Delta_1^2 + 2\sqrt{n} \Delta_1)F_1^2).$
\label{lem:Hhat}
\end{lemma}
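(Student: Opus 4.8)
The plan is to peel $\hat H-H$ apart factor by factor using Facts~\ref{lem:prodminus} and~\ref{lem:invminus}, carefully choosing at each product which matrix is controlled in operator norm and which in Frobenius norm so that the rates come out sharp. Throughout I use the following building blocks, all valid w.h.p.\ under the hypotheses of Theorem~\ref{thm:nok_mmsb}: (i) since the rows of $\Theta$ are Dirichlet with fixed parameter, the same eigenvalue estimates from \cite{Mao2017EstimatingMM} invoked in Proposition~\ref{prop:underestimatemmsb} give $\lambda_{\min}(\Theta^T\Theta)=\Theta(n)$ and $\lambda_{\max}(\Theta^T\Theta)=\Theta(n)$, hence $\|(\Theta^T\Theta)^{-1}\|_{op}=O(1/n)$, $\|\Theta\|_{op}=O(\sqrt n)$, and (each row of $\Theta$ being a probability vector) $\|\Theta\|_F\le\sqrt n$; (ii) $\lambda^*(B)=\Omega(\rho)$ gives $\|B^{-1}\|_{op}=O(1/\rho)$ and, with $r$ fixed, $\|B^{-1}\|_F=O(1/\rho)$; (iii) Lemma~\ref{lem:mmsb_para_error} gives $\Delta_1=\|\hat\Theta-\Theta\|_F=O((\log n)^{1+\xi}/\sqrt\rho)$ and $\Delta_2=\|\hat B-B\|_F=O(\sqrt{\rho/n}\,(\log n)^{1+\xi})$. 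The permutation $\Pi$ in Lemma~\ref{lem:mmsb_para_error} is absorbed exactly as in the paragraph preceding the lemma. The hypothesis $\sqrt{n\rho}/(\log n)^{1+\xi}\to\infty$ will be used precisely to ensure $\Delta_1/\sqrt n\to0$ and that the perturbations fed into Fact~\ref{lem:invminus} are $o(1)$.

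First, $\|H\|_F=\|B^{-1}(\Theta^T\Theta)^{-1}\Theta^T\|_F\le\|B^{-1}\|_{op}\|(\Theta^T\Theta)^{-1}\|_{op}\|\Theta^T\|_F=O(1/\rho)\cdot O(1/n)\cdot O(\sqrt n)=O(1/(\sqrt n\,\rho))$, which is the second claim. For the difference, write $H=B^{-1}M$, $\hat H=\hat B^{-1}\hat M$ with $M=(\Theta^T\Theta)^{-1}\Theta^T$, $\hat M=(\hat\Theta^T\hat\Theta)^{-1}\hat\Theta^T$, and apply Fact~\ref{lem:prodminus}. This needs bounds on $\|\hat B^{-1}-B^{-1}\|$ and $\|\hat M-M\|_F$. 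For the former, Fact~\ref{lem:invminus} with $\|(B-\hat B)B^{-1}\|_F\le\Delta_2\|B^{-1}\|_{op}=O((\log n)^{1+\xi}/\sqrt{n\rho})=o(1)$ yields $\|\hat B^{-1}-B^{-1}\|_F=O(\|B^{-1}\|_F\cdot(\log n)^{1+\xi}/\sqrt{n\rho})=O((\log n)^{1+\xi}/(\sqrt n\,\rho^{3/2}))$ (and the same order in operator norm). For $\|\hat M-M\|_F$, I first bound, via Fact~\ref{lem:prodminus}, $\|\hat\Theta^T\hat\Theta-\Theta^T\Theta\|_F\le\Delta_1^2+2\Delta_1\|\Theta\|_{op}=O(\Delta_1\sqrt n)=O(\sqrt{n/\rho}\,(\log n)^{1+\xi})$; then Fact~\ref{lem:invminus} (whose perturbation $\|\hat\Theta^T\hat\Theta-\Theta^T\Theta\|_F\|(\Theta^T\Theta)^{-1}\|_{op}=O((\log n)^{1+\xi}/\sqrt{n\rho})=o(1)$) gives $\|(\hat\Theta^T\hat\Theta)^{-1}-(\Theta^T\Theta)^{-1}\|_F=O((\log n)^{1+\xi}/(n^{3/2}\sqrt\rho))$; finally Fact~\ref{lem:prodminus} applied to $\hat M-M$ with the pair $((\Theta^T\Theta)^{-1},\Theta^T)$ and $\|\hat\Theta-\Theta\|_F=\Delta_1$ gives $\|\hat M-M\|_F=O((\log n)^{1+\xi}/(n\sqrt\rho))$, the dominant contributions being $\|(\hat\Theta^T\hat\Theta)^{-1}-(\Theta^T\Theta)^{-1}\|_F\|\Theta\|_{op}$ and $\|(\Theta^T\Theta)^{-1}\|_{op}\Delta_1$.

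Plugging these into Fact~\ref{lem:prodminus} for $\hat H-H$: the cross term $\|\hat B^{-1}-B^{-1}\|_F\|\hat M-M\|_F$ is of strictly smaller order; the term $\|(\hat B^{-1}-B^{-1})M\|_F\le\|\hat B^{-1}-B^{-1}\|_{op}\|M\|_F$ is $O((\log n)^{1+\xi}/(\sqrt n\,\rho^{3/2}))\cdot O(1/\sqrt n)=O((\log n)^{1+\xi}/(n\rho^{3/2}))$, using $\|M\|_F\le\|(\Theta^T\Theta)^{-1}\|_{op}\|\Theta^T\|_F=O(1/\sqrt n)$; and $\|B^{-1}(\hat M-M)\|_F\le\|B^{-1}\|_{op}\|\hat M-M\|_F=O(1/\rho)\cdot O((\log n)^{1+\xi}/(n\sqrt\rho))=O((\log n)^{1+\xi}/(n\rho^{3/2}))$. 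Summing gives $\|\hat H-H\|_F=O((\log n)^{1+\xi}/(n\rho^{3/2}))$, as claimed. The main obstacle is not any single estimate but the bookkeeping across the nested product and inverse perturbations: at each step one must pick the operator-norm factor versus the Frobenius-norm factor so that only a $\sqrt n$ (rather than $n$) is lost, and one must repeatedly check that the perturbation supplied to Fact~\ref{lem:invminus} is $o(1)$ — which is exactly where the sparsity assumption $\sqrt{n\rho}/(\log n)^{1+\xi}\to\infty$ enters.
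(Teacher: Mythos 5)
Your proposal is correct and follows essentially the same route as the paper's proof: both peel the three-factor product apart with Facts~\ref{lem:prodminus} and~\ref{lem:invminus}, feed in the bounds $\Delta_1,\Delta_2$ from Lemma~\ref{lem:mmsb_para_error} together with $\|(\Theta^T\Theta)^{-1}\|=O(1/n)$ and $\|B^{-1}\|=O(1/\rho)$, and verify the $o(1)$ condition for the inverse-perturbation step using $\sqrt{n\rho}/(\log n)^{1+\xi}\to\infty$. The only cosmetic difference is the association of the product (you group $H=B^{-1}\cdot[(\Theta^T\Theta)^{-1}\Theta^T]$, the paper groups $H=[B^{-1}(\Theta^T\Theta)^{-1}]\cdot\Theta^T$), which yields identical intermediate and final rates.
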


\begin{proof}
We build up the estimator of $H$ step by step by repeatedly using Facts~\ref{lem:prodminus} and \ref{lem:invminus}. Denote $F_1=\norm{(\Theta^T\Theta)^{-1}}_F$, and $F_2 = \norm{B^{-1}}_F$, by Lemma 3.6 in \cite{Mao2017EstimatingMM}, 
\begin{align}
 F_1 \leq \sqrt{r_0} \norm{(\Theta^T\Theta)^{-1}}_{op} = O_P(1/n),    \label{eq:fnorm_F1}
\end{align}
and 
\begin{align}
    F_2 \leq \sqrt{r}\norm{B^{-1}}_{op} = O(1/\rho).
    \label{eq:fnorm_F2}
\end{align}

First, applying Fact~\ref{lem:prodminus}, 
\begin{align*}
    \norm{\hat{\Theta}^T\hat{\Theta} - \Theta ^T \Theta}_F &\leq \Delta_1^2 + 2\norm{\Theta}_{F} \Delta_1    \\
    \norm{(\Theta ^T \Theta)^{-1}}_F \norm{\hat{\Theta}^T\hat{\Theta} - \Theta ^T \Theta}_F & \leq (\Delta_1^2 + 2\norm{\Theta}_{F} \Delta_1)F_1    \\
    & = O_P\left(\frac{(\log n)^{2+2\xi}}{n\rho}\right) + O_P\left( \frac{(\log n)^{1+\xi}}{\sqrt{n\rho}}   \right) \\
    & = O_P\left( \frac{(\log n)^{1+\xi}}{\sqrt{n\rho}}   \right),
\end{align*}
using Lemma \ref{lem:mmsb_para_error} and eq~\eqref{eq:fnorm_F1}. Thus for large $n$, $\norm{(\Theta ^T \Theta)^{-1}}_F \norm{\hat{\Theta}^T\hat{\Theta} - \Theta ^T \Theta}_F<1/2$.  Then using Fact~\ref{lem:invminus}, we have 
\begin{align}
    \norm{(\hat{\Theta}^T\hat{\Theta})^{-1} - (\Theta^T\Theta)^{-1}}_F &\leq \frac{\norm{(\Theta^T\Theta)^{-1}}_F \norm{((\Theta^T\Theta)-(\hat{\Theta}^T\hat{\Theta})) (\Theta^T\Theta)^{-1}}_F}{1- \norm{((\Theta^T\Theta)-(\hat{\Theta}^T\hat{\Theta}))(\Theta^T\Theta)^{-1}}_F} \notag\\
    & \leq \frac{\norm{(\Theta^T\Theta)^{-1}}_F^2 \norm{((\Theta^T\Theta)-(\hat{\Theta}^T\hat{\Theta}))}_F}{1- \norm{((\Theta^T\Theta)-(\hat{\Theta}^T\hat{\Theta}))(\Theta^T\Theta)^{-1}}_F} \notag\\
    % & \leq \frac{\norm{(\Theta^T\Theta)^{-1}}_F^2 (\norm{\hat{\Theta} - \Theta}_F^2 + 2 \norm{\Theta^T(\hat{\Theta} - \Theta)}_F)}{1- \norm{((\Theta^T\Theta)-(\hat{\Theta}^T\hat{\Theta}))(\Theta^T\Theta)^{-1}}_F}\\
    & \leq 2\norm{(\Theta^T\Theta)^{-1}}_F^2 \norm{((\Theta^T\Theta)-(\hat{\Theta}^T\hat{\Theta}))}_F \notag\\
    % & \leq  \frac{F_1^2 (\Delta_1^2 + 2\norm{\Theta}_F \Delta_1)}{1- F_1 (\Delta_1^2+ 2\norm{\Theta}_F \Delta_1)}\\
    & \leq (\Delta_1^2 + 2\norm{\Theta}_{F} \Delta_1)F_1^2 = O_P\left(\frac{(\log n )^{1+\xi}}{n^{3/2}\rho^{1/2}}.
    \label{eq:thetainv_error}
    \right)
\end{align} 

Similarly using Lemma~\ref{lem:mmsb_para_error} and eq~\eqref{eq:fnorm_F2}, by noting that $$\norm{B^{-1}}_F\norm{(B-\hat{B})}_F = \Delta_2 F_2 = O(\frac{(\log n)^{1+\xi}}{\sqrt{n\rho}})< 1/2$$ for large $n$ w.h.p.,
\begin{align}
    \norm{\hat{B}^{-1} - B^{-1}}_F &\leq  \frac{\norm{B^{-1}}_F \norm{(B-\hat{B}) B^{-1}}_F}{1- \norm{(B-\hat{B})B^{-1}}_F}    \notag\\
    & \leq 2 \norm{B^{-1}}_F^2\norm{(B-\hat{B})}_F \notag\\
    % & \leq \frac{\norm{(\Theta^T\Theta)^{-1}}_F^2 (\norm{\hat{\Theta} - \Theta}_F^2 + 2 \norm{\Theta^T(\hat{\Theta} - \Theta)}_F)}{1- \norm{((\Theta^T\Theta)-(\hat{\Theta}^T\hat{\Theta}))(\Theta^T\Theta)^{-1}}_F}\\
    & \leq 2\Delta_2 F_2^2 = O_P\left( \frac{(\log n)^{1+\xi}}{n^{1/2}\rho^{3/2}}\right)
    \label{eq:Binv_error}
\end{align}
using Fact~\ref{lem:invminus}.

Next applying Fact~\ref{lem:prodminus} to $G := B^{-1}(\Theta^T\Theta)^{-1}$ and its estimate $\hat{G} := \hat{B}^{-1}(\hat{\Theta}^T\hat{\Theta})^{-1}$,

\begin{align*}
& \norm{G - \hat{G}}_F  \notag\\
  \leq & \norm{\hat{B}^{-1} - B^{-1}}_F F_1 +
 (\norm{B^{-1}}_F+ \norm{B^{-1}- \hat{B}^{-1}}_F)\norm{(\hat{\Theta}^T\hat{\Theta})^{-1} - (\Theta^T\Theta)^{-1}}_F \\
 \leq & 2\Delta_2 F_2^2F_1+ (2\Delta_2 F_2^2 + F_2)(\Delta_1^2 + 2\norm{\Theta}_{F} \Delta_1)F_1^2 \notag\\
= & O_P\left( \frac{(\log n)^{1+\xi}}{(n\rho)^{3/2}}\right), 
\end{align*}
 using Eqs~\eqref{eq:fnorm_F1}-\eqref{eq:Binv_error}.

Finally, since $H = G\Theta^T$, and $\hat{H} = \hat{G}\hat{\Theta}^T$, 
\begin{align*}
    \norm{H-\hat{H}}_F & \leq \norm{\hat{G} - G}_F \norm{\Theta}_{F} + (\norm{G}_F+ \norm{G- \hat{G}}_F)\norm{\hat{\Theta}-\Theta}_F \\
    & \leq \sqrt{n} \norm{G - \hat{G}}_F + (F_1F_2+ \norm{G - \hat{G}}_F) \Delta_1    \\
    & = O_P\left( \frac{(\log n)^{1+\xi}}{n\rho^{3/2}}\right),
\end{align*}
and 
\begin{align*}
    \norm{G}_F^2 & = \text{tr} ((\Theta^T\Theta)^{-1}(\Theta^T\Theta)^{-1} (BB^T)^{-1})\\
    & \leq \norm{(\Theta^T\Theta)^{-1}}_{op}^2 \text{tr}((BB^T)^{-1})\\
    & = O_P(1/n^2) F_2^2
\end{align*}
 by Eq~\eqref{eq:fnorm_F1},
$\norm{H}_F=O_P(\frac{1}{\sqrt{n}\rho})$ follows.
\end{proof}

\begin{lemma}Consider applying SPACL on the training graph $A^{11}$ to obtain $(\hat{\Theta}^{11})^T$ and $\hat{B}$, and use regression in MATR-CV to estimate membership matrix, i.e., $(\hat{\Theta}^{22})^T = \hat{B}^{-1}((\hat{\Theta}^{11})^T\hat{\Theta}^{11})^{-1}(\hat{\Theta}^{11})^T A^{12}:= \hat{H}A^{12}$, then $\norm{\hat{\Theta}^{22} - \Theta^{22}}_F = O\left( \frac{(\log n)^{1+\xi}}{\sqrt{\rho}}\right) $ w.h.p.
\label{lem:theta_est_error}
\end{lemma}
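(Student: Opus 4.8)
The goal is to control $\norm{\hat{\Theta}^{22} - \Theta^{22}}_F$ where $(\hat{\Theta}^{22})^T = \hat{H}A^{12}$ and the true relation is $P^{12} = \Theta^{11}B(\Theta^{22})^T$, equivalently $(\Theta^{22})^T = H P^{12}$ with $H = B^{-1}((\Theta^{11})^T\Theta^{11})^{-1}(\Theta^{11})^T$ (this identity holds because $((\Theta^{11})^T\Theta^{11})^{-1}(\Theta^{11})^T \Theta^{11} = I$ and $B^{-1}B = I$). So the first step is the decomposition
\[
(\hat{\Theta}^{22})^T - (\Theta^{22})^T = \hat{H}A^{12} - H P^{12} = \hat{H}(A^{12}-P^{12}) + (\hat{H}-H)P^{12}.
\]
Taking Frobenius norms and the triangle inequality reduces the problem to bounding three pieces: $\norm{\hat H - H}_F$, $\norm{H}_F$, $\norm{P^{12}}_{op}$ (or $\norm{\cdot}_F$), and $\norm{A^{12}-P^{12}}_{op}$. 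The first two are exactly Lemma~\ref{lem:Hhat}, which gives $\norm{\hat H - H}_F = O((\log n)^{1+\xi}/(n\rho^{3/2}))$ and $\norm{H}_F = O(1/(\sqrt n \rho))$ (applied to the training graph, whose size is $\Theta(n)$ by the random split). For $P^{12} = \Theta^{11}B(\Theta^{22})^T$ one has $\norm{P^{12}}_{op} \le \norm{\Theta^{11}}_{op}\norm{B}_{op}\norm{\Theta^{22}}_{op} = O(n\rho)$ since $\norm{\Theta^{11}}_{op},\norm{\Theta^{22}}_{op} = O(\sqrt n)$ (rows of $\Theta$ are probability vectors, so $\norm{\Theta}_F^2 \le n$, hence $\norm{\Theta}_{op} \le \sqrt n$) and $\norm{B}_{op} = \Theta(\rho)$.

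The remaining ingredient is a concentration bound on the off-diagonal (actually rectangular) noise block $\norm{A^{12}-P^{12}}_{op}$. Since $A^{12}$ is a rectangular random matrix with independent Bernoulli entries of mean $\Theta(\rho)$ and both dimensions are $\Theta(n)$, a standard matrix concentration argument (e.g.\ the bound of \cite{lei2015} adapted to the rectangular/off-diagonal block, or Bandeira–van Handel type bounds) gives $\norm{A^{12}-P^{12}}_{op} = O(\sqrt{n\rho})$ w.h.p., provided $n\rho \ge c\log n$, which is implied by the assumption $\sqrt{n\rho}/(\log n)^{1+\xi}\to\infty$. Assembling:
\[
\norm{\hat\Theta^{22}-\Theta^{22}}_F \le \norm{\hat H}_F\,\norm{A^{12}-P^{12}}_{op} + \norm{\hat H - H}_F\,\norm{P^{12}}_{op},
\]
and using $\norm{\hat H}_F \le \norm{H}_F + \norm{\hat H - H}_F = O(1/(\sqrt n\rho))$, the first term is $O(1/(\sqrt n\rho))\cdot O(\sqrt{n\rho}) = O(1/\sqrt\rho)$, and the second is $O((\log n)^{1+\xi}/(n\rho^{3/2}))\cdot O(n\rho) = O((\log n)^{1+\xi}/\sqrt\rho)$. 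The dominant term is $O((\log n)^{1+\xi}/\sqrt\rho)$, matching the claim. One should be slightly careful in the first term to use $\norm{\hat H (A^{12}-P^{12})}_F \le \norm{\hat H}_F \norm{A^{12}-P^{12}}_{op}$ rather than a product of Frobenius norms, to avoid losing a $\sqrt n$ factor; similarly in the second term one wants $\norm{(\hat H - H)P^{12}}_F \le \norm{\hat H - H}_F \norm{P^{12}}_{op}$.

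The main obstacle is bookkeeping rather than any single hard estimate: one has to make sure the operator-vs-Frobenius choices are made optimally at each matrix product (otherwise stray factors of $\sqrt n$ appear and the bound degrades), and one has to verify that all the bounds from Lemma~\ref{lem:Hhat} and Lemma~\ref{lem:mmsb_para_error}, which were stated for the full graph $A$, transfer to the training subgraph $A^{11}$ — this is legitimate because the node split is random and independent of everything, and the training set has $\Theta(n)$ nodes, so $\Theta^{11}$ still has smallest eigenvalue of $\Theta^{T}\Theta$ of order $n$ (via Lemma~\ref{lem:skala} / the eigenvalue bounds in \cite{Mao2017EstimatingMM}). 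A secondary subtlety is that the SPACL estimate is only identified up to a permutation $\Pi$; as in the surrounding text we absorb $\Pi$ into the definition of $\Theta$ and $B$, which is harmless since the final quantity $\hat\Theta^{22}$ is compared against the correspondingly permuted $\Theta^{22}$ and all permutations cancel in the normalized clustering matrix used downstream. Independence of $A^{12}$ from $\hat H$ (which is a function of $A^{11}$ only) is what makes the concentration step clean, and this should be noted explicitly.
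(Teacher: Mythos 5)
Your proposal is correct and follows essentially the same route as the paper: the same decomposition of $(\hat{\Theta}^{22})^T-(\Theta^{22})^T$ into a noise term and a $(\hat H-H)P^{12}$ term (the paper merely splits your first term further into $H(A^{12}-P^{12})+(\hat H-H)(A^{12}-P^{12})$ rather than bounding $\|\hat H\|_F$ directly), the same reliance on Lemma~\ref{lem:Hhat} for $\norm{H}_F$ and $\norm{\hat H-H}_F$, the same $O(\sqrt{n\rho})$ operator-norm concentration for $A^{12}-P^{12}$, and the same careful mixing of Frobenius and operator norms, yielding the identical dominant term $O((\log n)^{1+\xi}/\sqrt{\rho})$ from the $(\hat H-H)P^{12}$ piece.
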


\begin{proof}
Since $(\Theta^{22})^T =H\Theta^{11}B(\Theta^{22})^T$, where $H :={B}^{-1}((\Theta^{11})^T{\Theta}^{11})^{-1}(\Theta^{11})^T$,  
\begin{equation*}
\begin{aligned}
    (\hat{\Theta}^{22})^T-(\Theta^{22})^T &= \hat{H}{A}^{12}-H\Theta^{11}B(\Theta^{22})^T\\
    &=\hat{H}{A}^{12}-\hat{H}\Theta^{11}B(\Theta^{22})^T+\hat{H}\Theta^{11}B(\Theta^{22})^T-H\Theta^{11}B(\Theta^{22})^T\\
    &=\hat{H}(A^{12}-\Theta^{11}B(\Theta^{22})^T)+(\hat{H}-H)\Theta^{11}B(\Theta^{22})^T\\
    &= \underbrace{H(A^{12}-P^{12})}_{Q_1}+\underbrace{(\hat{H}-H)(A^{12}-P^{12})}_{Q_2}+\underbrace{(\hat{H}-H)P^{12}}_{Q_3}.
\end{aligned}
\end{equation*}
For $Q_1$, 
\begin{align*}
    \norm{Q_1}_F &\leq \norm{A^{12}-P^{12}}_{op} \norm{H}_F\\  %= O(\sqrt{n\rho})\norm{H}_F\\
    &= O_P(\sqrt{n\rho})O_P(\frac{1}{\sqrt{n}\rho}) = O_P(1/\sqrt{\rho})
\end{align*}
by Lemma~\ref{lem:Hhat}. For $Q_2$,
\begin{align*}
    \norm{Q_2}_F &\leq \norm{A^{12}-P^{12}}_{op} \norm{\hat{H}-H}_F\\  %= O(\sqrt{n\rho})\norm{H}_F\\
    &= O_P(\sqrt{n\rho})O_P\left( \frac{(\log n)^{1+\xi}}{n\rho^{3/2}}\right) = O_P\left( \frac{(\log n)^{1+\xi}}{\sqrt{n}\rho}\right),
\end{align*}
 by Lemma~\ref{lem:Hhat}. Finally for $Q_3$, 
\begin{align*}
    \norm{Q_3}_F &\leq \norm{P^{12}}_{F} \norm{\hat{H}-H}_F\\  %= O(\sqrt{n\rho})\norm{H}_F\\
    &= O_P(n\rho)O_P\left( \frac{(\log n)^{1+\xi}}{n\rho^{3/2}}\right) = O_P\left( \frac{(\log n)^{1+\xi}}{\sqrt{\rho}}\right),
\end{align*}
The above arguments lead to

\begin{align*}
    \norm{\hat{\Theta}^{22} - \Theta^{22}}_F %& \leq O(\sqrt{n\rho})F_2\nu (1+\alpha_0)/n +\\
    %& O(n\rho)\{\sqrt{n} \Delta_3 + (F_1F_2+ \Delta_3) \Delta_1\}
    = O_P\left( \frac{(\log n)^{1+\xi}}{\sqrt{\rho}}\right).
\end{align*}
% $$\norm{\hat{\Theta}^{22} - \Theta^{22}}_F \leq   O(\sqrt{n\rho})F_2\nu (1+\alpha_0)/n + O(n\rho)\{\Delta_2 F_2^2F_1+ 2F_2((\Delta_1^2 + 2\sqrt{n} \Delta_1)F_1^2)\}.$$

% we have
% \begin{equation*}
%     \mathbb{E}(||Q_1||_F^2)=\sum_{i,j}\text{Var}(Q_{1ij})\sim \rho n\text{tr}(HH^T)
% \end{equation*}
% and we know
% \begin{equation*}
%     \begin{aligned}
%         \text{tr}(HH^T) &= \text{tr}({B}^{-1}({\Theta}^{11T}{\Theta}^{11})^{-1}{\Theta}^{11T}{\Theta}^{11}({\Theta}^{11T}{\Theta}^{11})^{-1}{B}^{-1})\\
%         &= \text{tr}(B^{-1}({\Theta}^{11T}{\Theta}^{11})^{-1}{B}^{-1})\\
%         &= \text{tr}(({\Theta}^{11T}{\Theta}^{11})^{-1}{(BB)}^{-1})\\
%         &\leq ||({\Theta}^{11T}{\Theta}^{11})^{-1}||_{op}\text{tr}((BB)^{-1})\\
%         &\leq \frac{2\nu (1+\alpha_0)}{n}O(\frac{r}{\rho^2}),
%     \end{aligned}
% \end{equation*}
% so we have $\mathbb{E}(||Q_1||_F^2)\sim \rho n\frac{r}{n\rho^2}$. 

%For $Q_3$, we have
%\begin{equation*}
%\begin{aligned}
%    ||Q_3||_F^2 &= \text{tr}((\hat{H}-H)(\hat{H}-H)P^{12}P^{12})\\
    %&\leq  ||P^{12}P^{12}||_{op} \text{tr}((\hat{H}-H)(\hat{H}-H))\\
    %&= ||P^{12}P^{12}||_{op}(\tr{\hat{H}\hat{H}^T}+\tr{{H}{H^T}})
%\end{aligned}
%\end{equation*}
%(because the order of $H$ is $\rho^{-1}$).
\end{proof}

\begin{Proposition}
\label{prop:estmmsb}
Given the correct number of clusters $r$, then with high probability, $\langle A^{22},\hat{X}_{r}^{22} \rangle > \langle A^{22},X_0^{22} \rangle -O((n\rho)^{3/2}(\log n)^{1+\xi}).$

\end{Proposition}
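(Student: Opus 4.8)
The plan is to prove the stated one-sided bound via the trace--duality inequality $|\langle A^{22}, Y\rangle| \le \norm{A^{22}}_{op}\,\norm{Y}_*$ applied to the difference $Y := \hat{X}_r^{22} - X_0^{22}$, controlling the operator norm of the test block and the nuclear norm of $Y$ separately. Since $\langle A^{22}, \hat{X}_r^{22}\rangle - \langle A^{22}, X_0^{22}\rangle = \langle A^{22}, Y\rangle$, it suffices to show $|\langle A^{22}, Y\rangle| = O\big((n\rho)^{3/2}(\log n)^{1+\xi}\big)$ w.h.p., which gives both directions and in particular the lower bound asserted. Note this is a genuinely weaker target than the criterion actually used in MATR-CV, since here the matrix contracted against $Y$ is $A^{22}$ rather than $(A^2-\mathrm{diag}(A^2))^{22}$.

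First I would bound the operator norm of the test block. Writing $A^{22} = (A^{22}-P^{22}) + P^{22}$, the centred part satisfies $\norm{A^{22}-P^{22}}_{op} = O(\sqrt{n\rho})$ w.h.p. by the standard concentration bound for sparse adjacency matrices already invoked via \cite{lei2015}, while $\norm{P^{22}}_{op} \le \norm{P}_{op} = O(n\rho)$ follows from $P=\Theta B\Theta^T$, $\norm{\Theta}_{op}^2 = O(n)$ and $\norm{B}_{op}=O(\rho)$. Hence $\norm{A^{22}}_{op} = O(n\rho)$ w.h.p. Since $\hat{X}_r^{22}$ and $X_0^{22}$ are both rank-$r$ orthogonal projections, $\mathrm{rank}(Y)\le 2r$, so $\norm{Y}_* \le \sqrt{2r}\,\norm{Y}_F$ with $r$ fixed.

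The central step is to convert the membership estimation error into a Frobenius bound on $Y$. By Lemma~\ref{lem:theta_est_error}, $\norm{\hat{\Theta}^{22}-\Theta^{22}}_F = O\big((\log n)^{1+\xi}/\sqrt{\rho}\big)$ w.h.p. Writing each normalized clustering matrix as $X(\Theta)=\Theta(\Theta^T\Theta)^{-1}\Theta^T$, I would expand $Y = X(\hat{\Theta}^{22}) - X(\Theta^{22})$ and bound it exactly as in the proof of Lemma~\ref{lem:Hhat}, by repeated use of Facts~\ref{lem:prodminus} and~\ref{lem:invminus}. The inputs are $\norm{\Theta^{22}}_F = O(\sqrt{n})$ and $\norm{((\Theta^{22})^T\Theta^{22})^{-1}}_{op} = O(1/n)$, the latter because $\lambda^*((\Theta^{22})^T\Theta^{22}) = \Omega(n)$ w.h.p. by Lemma 3.6 of \cite{Mao2017EstimatingMM} applied to the $\Theta(n)$ testing rows. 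These yield $\norm{(\hat{\Theta}^{22T}\hat{\Theta}^{22})^{-1} - (\Theta^{22T}\Theta^{22})^{-1}}_F = O\big((\log n)^{1+\xi}/(n^{3/2}\sqrt{\rho})\big)$, and both the ``perturb an outer $\Theta$ factor'' term and the ``perturb the inverse'' term come out of order $O\big((\log n)^{1+\xi}/\sqrt{n\rho}\big)$, so that $\norm{Y}_F = O\big((\log n)^{1+\xi}/\sqrt{n\rho}\big)$ w.h.p.

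Combining the three bounds and using a union bound over the relevant high-probability events,
\begin{align*}
|\langle A^{22}, Y\rangle| \le \norm{A^{22}}_{op}\sqrt{2r}\,\norm{Y}_F = O(n\rho)\cdot O\left(\frac{(\log n)^{1+\xi}}{\sqrt{n\rho}}\right) = O\left(\sqrt{n\rho}\,(\log n)^{1+\xi}\right),
\end{align*}
which is $O\big((n\rho)^{3/2}(\log n)^{1+\xi}\big)$ since $n\rho\ge 1$, establishing the claim (with substantial room to spare, as the $A^{22}$ version is much tighter than the $\hat S^{22}$ version needed elsewhere). The main obstacle is the projection-perturbation step: one must track that the $\Theta^{22}$ error is amplified only by $\sigma_{\min}(\Theta^{22})^{-1}=O(1/\sqrt{n})$ and not by the much larger $\norm{B^{-1}}$, and verify that the Dirichlet lower bound $\lambda^*((\Theta^{22})^T\Theta^{22})=\Omega(n)$ survives restriction to the random test set; the operator-norm and rank bounds are routine.
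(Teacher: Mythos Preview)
Your proof is correct for the statement as literally written. Note, however, that the inner product in the proposition is a typo: the paper's argument (and the use of Proposition~\ref{prop:estmmsb} in Theorem~\ref{thm:nok_mmsb}) is for $\hat{S}^{22}=(A^{22})^2-\text{diag}((A^{22})^2)$, not $A^{22}$; you already flagged this discrepancy. The paper's proof proceeds by splitting $\langle \hat{S}^{22},\hat{X}_r^{22}-X_0^{22}\rangle$ into the noise piece handled by Lemma~\ref{lem:A2_conc} and the signal piece $|\langle S^{22},Y\rangle|\le \norm{S^{22}}_F\norm{Y}_F$ with $\norm{S^{22}}_F=O(n^2\rho^2)$, whereas you use the trace--duality pairing $|\langle A^{22},Y\rangle|\le \norm{A^{22}}_{op}\norm{Y}_*$ with $\norm{A^{22}}_{op}=O(n\rho)$ and the low-rank bound $\norm{Y}_*\le\sqrt{2r}\norm{Y}_F$. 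Both routes reduce to controlling $\norm{Y}_F$; the paper does this via Davis--Kahan applied to the left singular vectors of $\hat{\Theta}^{22}$ and $\Theta^{22}$, while you expand $X(\hat\Theta^{22})-X(\Theta^{22})$ algebraically via Facts~\ref{lem:prodminus} and~\ref{lem:invminus}. The Davis--Kahan route is shorter and makes the dependence on $\sigma_{\min}(\Theta^{22})$ transparent; your expansion is more elementary and avoids any appeal to sin-$\Theta$ theorems, and both yield the same $\norm{Y}_F=O((\log n)^{1+\xi}/\sqrt{n\rho})$. Your tighter final rate $O(\sqrt{n\rho}(\log n)^{1+\xi})$ is genuine for the $A^{22}$ version, but if you adapt the argument to the intended $\hat{S}^{22}$ (replacing $\norm{A^{22}}_{op}$ by $\norm{(A^{22})^2}_{op}=O(n^2\rho^2)$ or by the paper's $\norm{S^{22}}_F$), you recover exactly the $(n\rho)^{3/2}(\log n)^{1+\xi}$ order needed for $\epsilon_{\text{est}}$.
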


\begin{proof}
First note
\begin{align}
    |\langle (A^{22})^2-\text{diag}((A^{22})^2),\hat{X}_{r}^{22}- {X}_0^{22}\rangle| & \leq |\langle \hat{S}^{22}-S^{22},\hat{X}_{r}^{22}- {X}_0^{22}\rangle|  \notag\\
    & \quad + |\langle S^{22},\hat{X}_{r}^{22}- {X}_0^{22}\rangle|,
    \label{eq:A2_estX}
\end{align}
where $\hat{S}^{22} = (A^{22})^2-\text{diag}((A^{22})^2)$, $S^{22}=(P^{22})^2-\text{diag}((P^{22})^2)$.

Consider SVD of $\hat{\Theta}^{22} = \hat{U}\hat{D} \hat{V}^T$ and $\Theta^{22} = U D V^T,$ then $\hat{X}_{r}^{22} = \hat{U}\hat{U}^T$, and $X_0^{22} = UU^T.$ For any orthogonal matrix $O$,
\begin{align}
    \norm{\hat{X}_{r}^{22}- {X}_0^{22}}_F &= \norm{\hat{U}\hat{U}^T - UU^T}_F = \norm{\hat{U}\hat{U}^T - UO(UO)^T}_F\notag\\
    &\leq \norm{(\hat{U}-UO)(\hat{U}-UO)^T}_F + 2 \norm{(\hat{U}-UO)(UO)^T}_F\notag\\
    &\leq \norm{\hat{U}-UO}_F^2 + 2 \norm{\hat{U}-UO}_F
    \label{eq:x_error}
\end{align}
Using the Theorem 2 in \cite{davis-kahan}, we know there exists $O$ such that, 

$$\norm{\hat{U}- UO}_F \leq \frac{2\norm{\hat{\Theta}^{22} - \Theta^{22}}_F}{\lambda_{r}(\Theta^{22})},$$
where $\lambda_{r}(\Theta^{22})$ is the $r$-th largest singular value of $\Theta^{22}$. Using Lemma 3.6 in \cite{Mao2017EstimatingMM}, w.h.p, $\lambda_{r_0}(\Theta^{22}) =\Omega(\sqrt{n}).$ Now by Lemma~\ref{lem:theta_est_error} and Eq~\eqref{eq:x_error}, w.h.p.
$$\norm{\hat{X}_{r_0}^{22}- {X}_0^{22}}_F = O(\frac{(\log n)^{1+\xi}}{\sqrt{n\rho}}).$$

Now in Eq~\eqref{eq:A2_estX}, 
$$
|\langle S^{22},\hat{X}_{r}^{22}- {X}_0^{22}\rangle| \leq \norm{S^{22}}_F \norm{\hat{X}_{r}^{22}- {X}_0^{22}}_F = O_P((n\rho)^{3/2}(\log n)^{1+\xi}),
$$
and
$$
|\langle \hat{S}^{22}-S^{22},\hat{X}_{r}^{22}- {X}_0^{22}\rangle| = O_P(n\rho\sqrt{log n})
$$
by Lemma~\ref{lem:A2_conc}.
\end{proof}

Finally we prove Theorem~\ref{thm:nok_mmsb}.

\begin{proof}[Proof of Theorem~\ref{thm:nok_mmsb}]
By Propositions~\ref{prop:underestimatemmsb}, \ref{prop:overestimatemmsb} and \ref{prop:estmmsb}, 
\begin{align*}
   \epsilon_{\text{under}} & = \Omega(n^2\rho^2),  \\ 
   \epsilon_{\text{est}} & = O((n\rho)^{3/2}(\log n)^{1+\xi}),    \\
   \epsilon_{\text{over}} &= O(n\rho\sqrt{\log n}).
\end{align*}
Then the result follows by setting $\Delta=O((n\rho)^{3/2}(\log n)^{1+\xi})$.
\end{proof}

%Now throw away all constants other than $n$ and $\rho$ (treat $r_0$, $\nu$,  $\alpha_0$, $\lambda^*(B_0)$ as costant), and assume $r_0^3 / \sqrt{n\rho} = o_P(1),$ 
%$$\norm{\hat{H} - H}_F\leq \tilde{O}(\frac{r_0^{4.5}}{n\rho ^{3/2}})$$

%$$\norm{\hat{\Theta}^{22} - \Theta^{22}}_F\leq \tilde{O}(\frac{r_0^{4.5}}{\rho ^{1/2}})$$

%$$\norm{\hat{U}- UO}_F\leq \tilde{O}(\frac{r_0^{4.5}}{\sqrt{n\rho}})$$

%$$\norm{\hat{X}_{r_0}^{22}- {X}_0^{22}}_F \leq \tilde{O}(\frac{r_0^{4.5}}{\sqrt{n\rho}})$$

\section{Detailed parameter settings in experiments and additional results}
\label{sec:expdetail}
\subsection*{Motivating examples in Section~\ref{sec:knownk} (Figure~\ref{fig:yd_motivation})} 

In Figure~\ref{fig:yd_motivation}(a), we generate an adjacency matrix from a SBM model with four communities, each having 50 nodes, and
$$B = \begin{bmatrix} 
0.8 & 0.6 & 0.4 & 0.4 \\
0.6 & 0.8 & 0.4 & 0.4 \\
0.4 & 0.4 & 0.8 & 0.6 \\
0.4 & 0.4 & 0.6 & 0.8 
\end{bmatrix}.$$
The visualization of the underlying probability matrix is shown in Figure~\ref{fig:yd_motivation_visual}(a).

In Figure~\ref{fig:yd_motivation}(b), we consider a four-component Gaussian mixture model, where the means $\mu_1, \dots, \mu_4$ are generated from Gaussian distributions centered at $(0,0), (0,0), (5,5), (10,10)$ with covariance $6 I$, so that the first two clusters are closer to each other than the rest. Then we generate 1000 data points centered at these means with covariance $0.5 I$,  each point assigned to one of the four clusters independently with probability $(\frac{20}{42}, \frac{20}{42}, \frac{1}{42}, \frac{1}{42})$. Finally, we introduce correlation between the two dimensions by multiplying each point by $\begin{bmatrix} 
2 & 1 \\
1 & 2 
\end{bmatrix}$. A scatter plot example of the datapoints is shown in Figure~\ref{fig:yd_motivation_visual}(b).

 \begin{figure}[htp!]


	 \begin{subfigure}{.5\textwidth}
	 		\centering
	 		\includegraphics[width=0.8\linewidth]{figures/results/bell_sdp_prob.jpg} 
	 		\label{sfig_motivation3}%\vspace{-5mm}
	 		\caption{SBM}
	 	\end{subfigure}
	 	\begin{subfigure}{.5\textwidth}
	 		\centering
	 		\includegraphics[width=0.8\linewidth]{figures/results/bell_clustering_scatter.jpg}
	 		\label{fig:sfig_motivation1}%\vspace{-5mm}
	 		\caption{Gaussian mixture}
	 	\end{subfigure}%
	\caption{Datasets used for Figure~\ref{fig:yd_motivation}.}
	\label{fig:yd_motivation_visual}%\vspace{-5mm}
\end{figure}

%\begin{comment}
\subsection*{Tuning with \ref{SDP:YD} (Figure~\ref{fig:tuning} (a)-(b))}
\textbf{Figure~\ref{fig:tuning} (a):} We consider graphs generated from a hierarchical SBM  with equal sized clusters, where 
$$B = \rho\times \begin{bmatrix} 
0.8 & 0.6 & 0.3 & 0.3 \\
0.6 & 0.8 & 0.3 & 0.3 \\
0.3 & 0.3 & 0.8 & 0.6 \\
0.3 & 0.3 & 0.6 & 0.8 
\end{bmatrix}.$$
Each cluster has 100 nodes and $\rho$ ranges from $0.2$ to $1$.

\textbf{Figure~\ref{fig:tuning} (b):} Next, we consider graphs generated from a SBM with the same $B$ matrix, but with unequal cluster sizes. 
Cluster 1 and 3 have 100 nodes each, while cluster 2 and 4 have 50 nodes each. $\rho$ ranges from $0.2$ to $1$.
%\end{comment}

\subsection*{Tuning with spectral clustering (Figure~\ref{fig:tuning}(c)-(d))}
 
We generate the means $\mu_a, a\in[3]$ from $d=20$ dimensional Gaussian distribution with covariance $0.01I$. To impose sparsity on each $\mu_{a}$, we set all but the first two dimensions to 0.  To change the level of clustering difficulty, we multiply $\mu_a$ with a separation constant $c$, and a larger $c$ leads to larger separation and easier clustering. We vary $c$ from $0$ to $200$. We generate $n=500$ samples from each mixture with a constant covariance matrix (an identity matrix) using Eq~\ref{eq:mog}.  For Figure~\ref{fig:tuning} (c), the probabilities of cluster assignment are equal, while for  Figure~\ref{fig:tuning} (d), each point belongs to one of the three clusters with probability $(\frac{20}{22}, \frac{1}{22}, \frac{1}{22})$. 2D projections of the datapoints for the two settings are shown in Figure~\ref{fig:sp2_visual}.

\begin{figure}[htp!]
	\begin{subfigure}[t]{.5\textwidth}
	  \centering
	  \includegraphics[width=0.8\linewidth]{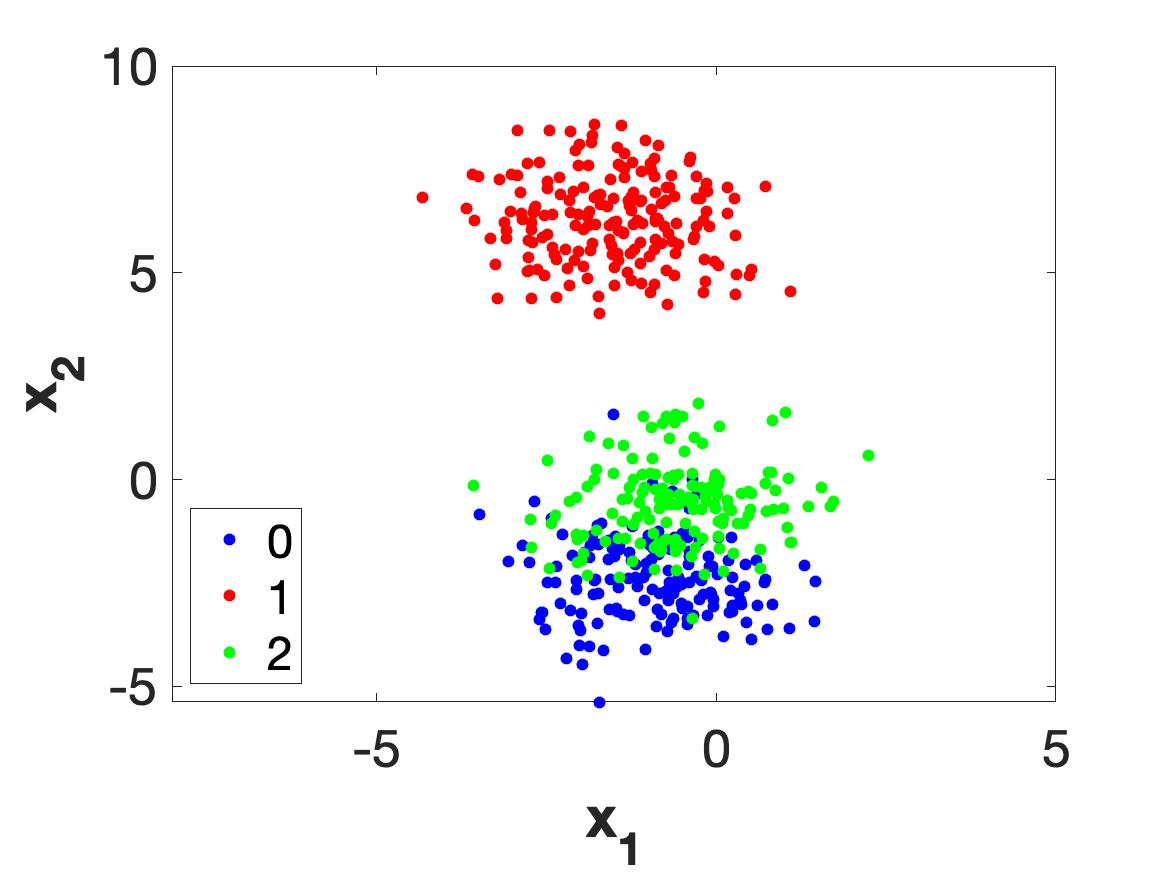}
	  %\hspace{-2em}\includegraphics[width=1\linewidth]{}
	  %\vspace{-5mm}
	  \caption{Equal sized clusters}
	\end{subfigure}\hspace{-3mm}
%	\begin{subfigure}[t]{.5\textwidth}
%		\centering
%		\includegraphics[width=0.8\linewidth]{NIPS2019_UL/figures/results/3_balanced_clustering_nmi_0114_lowdim.jpg} 
%		%\includegraphics[width=1\linewidth]{} 
%		% \vspace{-5mm}
%		\caption{NMI for equal sized}
%	\end{subfigure}\hspace{-3mm}
	\begin{subfigure}[t]{.5\textwidth}
	  \centering
	  \includegraphics[width=0.8\linewidth]{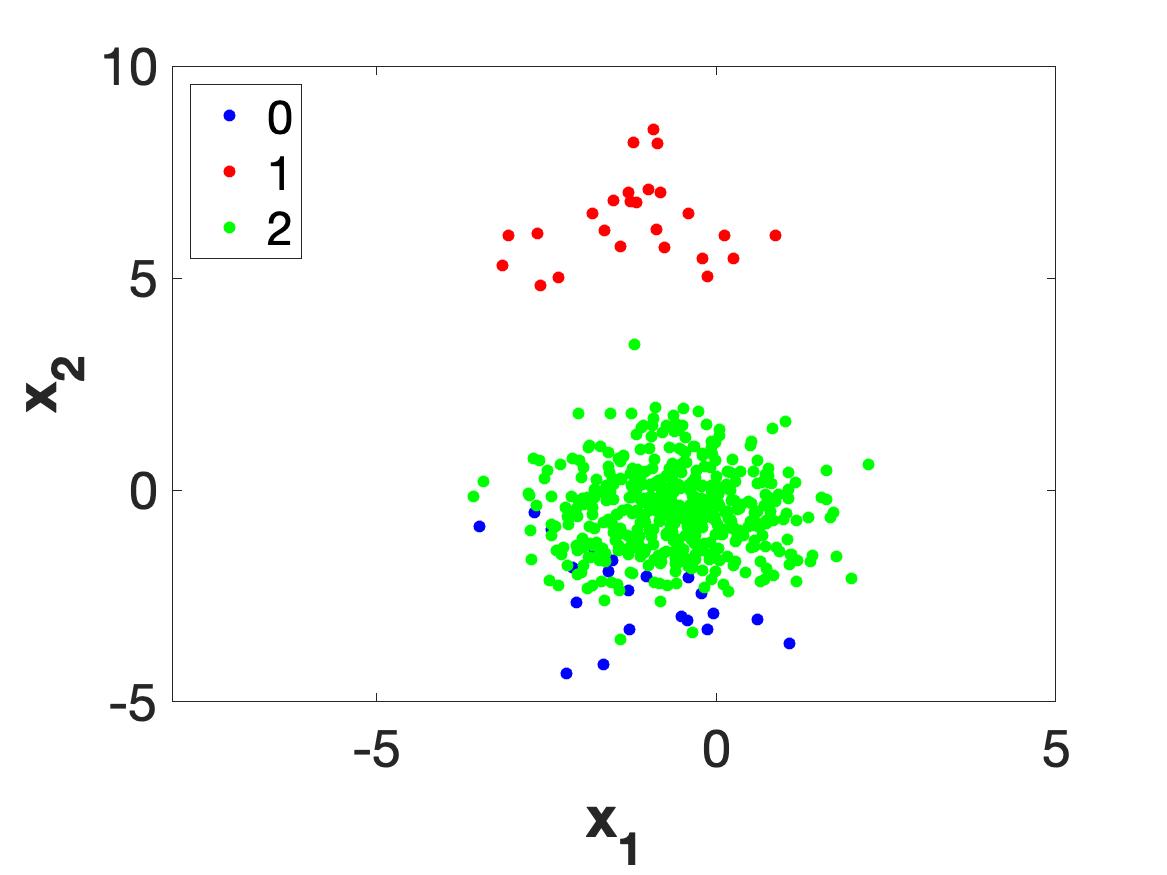}
	  %\vspace{-5mm}
	  \caption{Unequal sized clusters}
	\end{subfigure}\hspace{-3mm}
%	\begin{subfigure}[t]{.5\textwidth}
%		\centering
%		\includegraphics[width=0.8\linewidth]{NIPS2019_UL/figures/results/3_unbalanced_clustering_nmi_0114_lowdim.jpg}
%		%\vspace{-5mm}
%		\caption{NMI for unequal sized}
%	\end{subfigure}%
	\caption{2D projections of the datapoints for Gaussian mixtures.}
	\label{fig:sp2_visual}\vspace{-3mm}
\end{figure}

\subsection*{Additional figure for Section~\ref{sec:exp_matr_mixture}}

\begin{figure}[h!]
\begin{subfigure}[t]{.33\textwidth}
  \centering
  \includegraphics[width=1\linewidth]{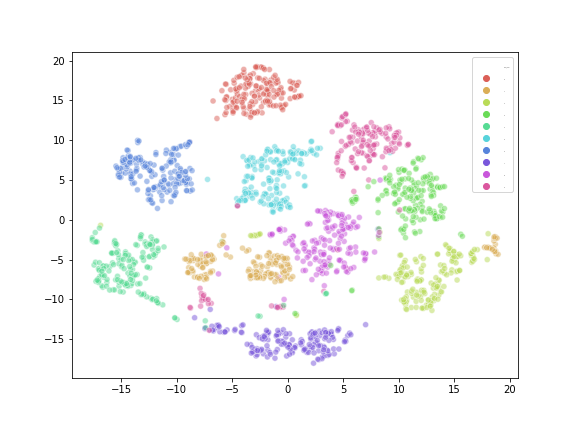}
  %\hspace{-2em}\includegraphics[width=1\linewidth]{figures/figures_purna/balanced_cluster_scatter.jpg}
  \vspace{-5mm}
  \caption{True clustering}
\end{subfigure}\hspace{-3mm}
\begin{subfigure}[t]{.33\textwidth}
  \centering
  \includegraphics[width=1\linewidth]{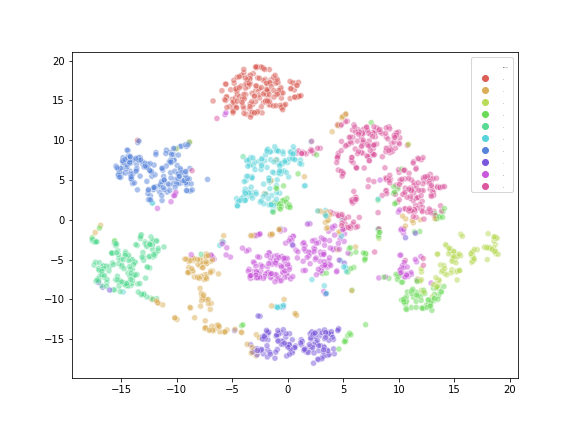}
  %\hspace{-2em}\includegraphics[width=1\linewidth]{figures/figures_purna/balanced_cluster_scatter.jpg}
  \vspace{-5mm}
  \caption{Clustering by MATR}
\end{subfigure}\hspace{-3mm}
\begin{subfigure}[t]{.33\textwidth}
  \centering
  \includegraphics[width=1\linewidth]{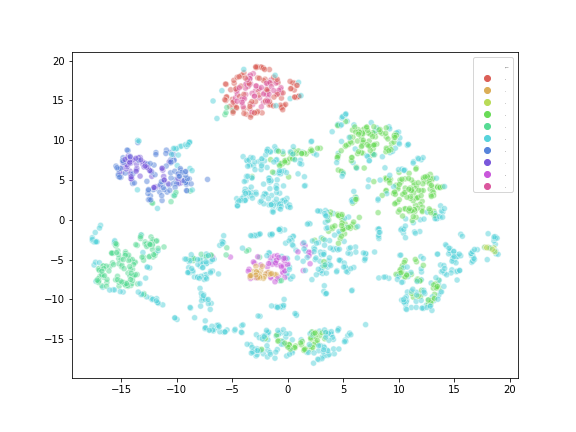} 
  \vspace{-5mm}
  \caption{Clustering by DS}
\end{subfigure}\hspace{-3mm}\\
\centering
\begin{subfigure}[t]{.33\textwidth}
  \centering
  \includegraphics[width=1\linewidth]{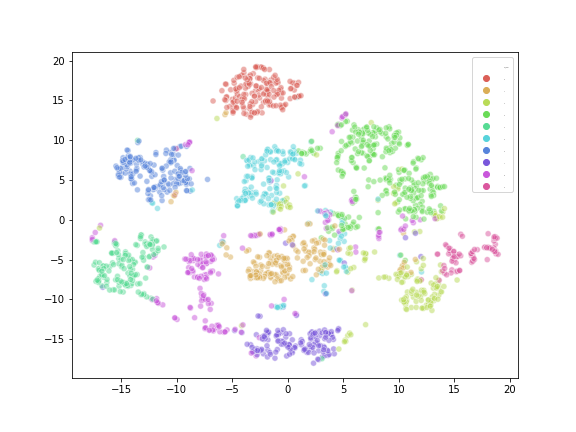}
  \vspace{-5mm}
  \caption{Clustering by KNN}
\end{subfigure}\hspace{-3mm}
\begin{subfigure}[t]{.33\textwidth}
  \centering
  \includegraphics[width=1\linewidth]{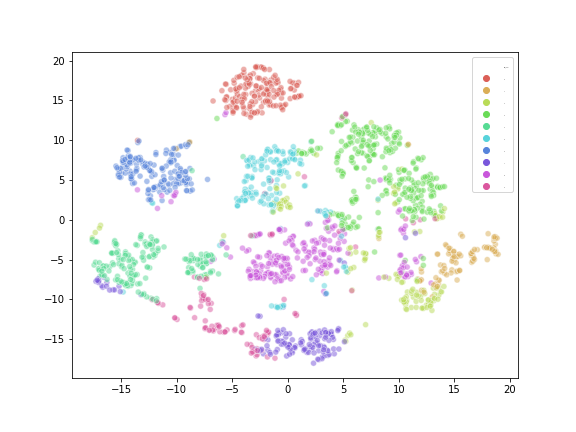}
  \vspace{-5mm}
   \caption{Clustering by MST}
\end{subfigure}%
\caption{Visualization of clustering results on handwritten digits dataset.}
\label{fig:tsne}\vspace{-3mm}
\end{figure}

\subsection*{Tuning with \ref{SDP:BW} (Figure~\ref{fig:matr-cv})}
\textbf{Figure~\ref{fig:matr-cv} (a):} We first consider graphs generated from a SBM with equal sized clusters, where 
$$B = \rho\times \begin{bmatrix} 
0.8 & 0.5 & 0.3 & 0.3 \\
0.5 & 0.8 & 0.3 & 0.3 \\
0.3 & 0.3 & 0.8 & 0.5 \\
0.3 & 0.3 & 0.5 & 0.8 
\end{bmatrix}.$$
Each cluster has $100$ nodes and $5$ $\rho$'s are selected from $0.2$ to $0.6$ with even spacing.

\textbf{Figure~\ref{fig:matr-cv} (b):} Here we consider graphs generated from an unequal-sized SBM , where the $B$ matrix is the same as above. The clusters have $120,80,120,80$ nodes respectively. The same $\rho$'s as above are used.

\textbf{Table~\ref{tab:MATR_CV_K} (a,b):} We show the median number of clusters selected by each method as $\rho$ changes. The ground truth is 4 clusters.

% \begin{table}[]
% \begin{subtable}[t]{0.50\textwidth}
% \centering
% \begin{tabular}[t]{|l|l|l|l|}
% \hline
%      & MATR-CV & BH   & ECV \\
% \hline       
% 37.77  & 0     & 0  & 0 \\
% \hline
% 56.61  & 0     & 0  & 0\\
% \hline
% 75.65  & 4    & 0  & 0 \\
% \hline
% 94.94  & 3   & 0  & 0 \\
% \hline
% 113.62 & 5   & 3 & 2 \\
% \hline
% \end{tabular}
% \caption{Exact recovery times for equal size case out of $5$ random graphs}
% \end{subtable}%
% \begin{subtable}[t]{0.50\textwidth}
% \centering
% \begin{tabular}[t]{|l|l|l|l|}
% \hline
%       & MATR-CV & BH   & ECV \\
% \hline       
% 38.10  & 0     & 0  & 0 \\
% \hline
% 57.30  & 0     & 0  & 0\\
% \hline
% 76.41  & 0    & 0  & 0 \\
% \hline
% 95.55  & 4    & 0 & 0 \\
% \hline
% 114.04 & 3   & 0 & 2 \\
% \hline
% \end{tabular}
% \caption{Exact recovery times for unequal size case out of $5$ random graphs}
% \end{subtable}%
% \caption{Comparison of exact recovery number along with average degree for all algorithms.}
% \label{tab:MATR_CV_K}
% \end{table}

\begin{table}[]
\begin{subtable}[t]{0.50\textwidth}
\centering
\begin{tabular}[t]{|l|l|l|l|}
\hline
  $\rho$   & MATR-CV & BH   & ECV \\
\hline       
0.2  & 2     & 2  & 2 \\
\hline
0.3  & 2     & 2  & 2\\
\hline
0.4  & 4    & 2  & 2 \\
\hline
0.5  & 4   & 2  & 2 \\
\hline
0.6 & 4   & 4 & 2 \\
\hline
\end{tabular}
\caption{Median number of clusters selected for equal size case }
\end{subtable}%
\begin{subtable}[t]{0.50\textwidth}
\centering
\begin{tabular}[t]{|l|l|l|l|}
\hline
  $\rho$    & MATR-CV & BH   & ECV \\
\hline       
0.2  & 2     & 2  & 2 \\
\hline
0.3  & 2     & 2  & 2\\
\hline
0.4  & 3    & 2  & 2 \\
\hline
0.5  & 4    & 2 & 2 \\
\hline
0.6 & 4   & 3 & 2 \\
\hline
\end{tabular}
\caption{Median number of clusters selected for unequal size case }
\end{subtable}%
\caption{Comparison of model selection results along with $\rho$ for all algorithms.}
\label{tab:MATR_CV_K}
\end{table}
%\section*{Acknowledgments}

\bibliographystyle{authordate1}
\bibliography{reference.bib}
\end{document}